\documentclass[10pt]{article}

\usepackage[preprint]{tmlr}

\usepackage{amsmath,amsthm,amssymb,amsfonts}
\usepackage{mathtools}
\usepackage{bm}
\usepackage{graphicx}
\usepackage[hidelinks]{hyperref}
\usepackage{url}
\usepackage{booktabs}
\usepackage{placeins}
\usepackage{tikz}
\usetikzlibrary{arrows.meta,positioning,fit,backgrounds}

\newtheorem{theorem}{Theorem}
\newtheorem{corollary}{Corollary}
\newtheorem{lemma}{Lemma}
\newtheorem{proposition}{Proposition}
\newtheorem{definition}{Definition}
\newtheorem{assumption}{Assumption}
\newtheorem{remark}{Remark}
\newtheorem{example}{Example}

\newcommand{\R}{\mathbb{R}}
\newcommand{\E}{\mathbb{E}}

\newcommand{\Cov}{\mathrm{Cov}}
\newcommand{\tr}{\operatorname{tr}}

\newcommand{\diag}{\operatorname{diag}}
\newcommand{\op}{\mathrm{op}}
\newcommand{\SO}{\mathrm{SO}}

\newcommand{\Aset}{\mathcal{A}}
\newcommand{\Dset}{\mathcal{D}}
\newcommand{\Sset}{\mathcal{S}}

\newcommand{\inlinehead}[1]{\noindent\emph{#1}\quad}

\title{Why Do Accumulated Transformations Extrapolate?}

\author{\name Mahesh Godavarti \\\\ \addr A Carrot, Inc.}

\begin{document}

\maketitle

\begin{abstract}
PaTH Attention showed that replacing RoPE's position-indexed
rotations with accumulated data-dependent Householder reflections
yields strong length extrapolation, though performance eventually
degrades at extreme context lengths. This paper asks whether that
behavior depends on Householder-specific structure or reflects a
more general property of accumulated transformations along
source-to-query paths. We study a simpler variant that keeps RoPE's
block-diagonal $\SO(2)$ rotations but replaces position-indexed
angles with accumulated token-dependent angles. This simpler
mechanism exhibits the same qualitative pattern: improved
extrapolation followed by degradation at sufficiently long
contexts. We prove that the
result extends to accumulated orthogonal transformations satisfying
certain regularity conditions: their products become incoherent after a
finite number of steps, suppressing attention to distant tokens.
We analyze the mechanism through a stylized model of attention
that explains both behaviors.
Accumulated rotations of queries and keys create a finite mixing
window independent of context length; the per-token suppression
learned during training transfers unchanged to any evaluation length,
and high-dimensional concentration produces a score gap that suppresses
far-token attention while near-route transport preserves the target
signal. This suggests a concrete mechanism linking accumulated
transport to length extrapolation. On the other hand, a lower
bound shows that accumulated rotations must eventually degrade: as the
far set grows with context length, no choice of rotations can
guarantee preservation of the near target signal without explicit
far-mass control. Furthermore, we show that for $\SO(2)$ rotations,
also rotating values makes residual far contributions combine
incoherently, extending the extrapolation range.
Controlled transformer experiments support these predictions.
Random accumulated rotations substantially improve extrapolation
over RoPE, learned token-dependent rotations maintain
near-training-length perplexity far beyond the training context, and
rotating queries, keys, and values improves over rotating queries and
keys alone. Rotation-only
models still degrade at extreme lengths, while ALiBi remains
approximately length-stable, consistent with the need for explicit
far-mass control.
\end{abstract}

\section{Introduction}
\label{sec:intro}

PaTH Attention~\citep{yang2025path} replaces
RoPE's~\citep{su2024roformer} position-indexed rotations with
products of data-dependent Householder reflections and demonstrates
strong extrapolation at 760M parameters: each token contributes a
transformation, and the source-query relationship is determined by
the product of intervening steps rather than by absolute position.
The mechanism behind this established extrapolation phenomenon is
not well understood. This paper seeks to shed light on \emph{why}
accumulated transformations help length extrapolation.

We also ask whether the mechanism requires Householder's full
generality. We find that even RoPE's commuting block-diagonal
$\SO(2)$ rotations, with accumulated token-dependent angles instead
of position-indexed ones, extrapolate and then degrade in the same
qualitative pattern. This suggests the mechanism is not specific to
any particular orthogonal structure, and indeed we prove that
accumulated products of orthogonal transformations satisfying
certain regularity conditions become incoherent, suppressing
attention to distant tokens. We verify that both $\SO(2)$ rotations
and Householder reflections satisfy these conditions.

We call the ordered source-to-query path from token~$j$ to
query~$i$ a \emph{route}. The intervening tokens on this route
generate per-token rotations whose product gives the source-query
rotation. In RoPE, this rotation is determined entirely by the
distance $i{-}j$; in accumulated transport, it depends on which
tokens lie along the route. Distant routes pass through many
independent token-dependent steps and become incoherent, while
nearby routes can remain approximately aligned. The boundary between these two
regimes is independent of context length, so
the model sees the same near/far structure at any evaluation length.
This train/test distributional match is a necessary condition for
extrapolation, but it is not sufficient by itself.
NoPE~\citep{kazemnejad2023nope} provides an instructive
counterexample: identity transport is length-stable, but it does
not create a score gap that suppresses far tokens.
Distributional match alone does not quantify how strongly far tokens
are suppressed, whether that suppression can be overwhelmed as
context grows, or what happens to far values that survive score
selection. The analysis below addresses these questions.

We develop a stylized model of attention that explains both the
extrapolation and the eventual degradation:
\begin{enumerate}
\item \textbf{Score-side decoherence (mechanism for extrapolation).}
  Accumulated rotations of queries and keys create a finite mixing window whose
  boundary is independent of context
  length. Once training length covers this window, the near/far
  route regime is the same at training and evaluation within the
  stylized model. Within this regime, high-dimensional
  concentration produces a score gap that suppresses
  far-token attention mass, while near-route transport preserves
  the target signal. We prove that the result holds for any
  accumulated orthogonal transformation with a spectral gap
  (Appendix~\ref{app:householder-decoherence}).
\item \textbf{Far-mass lower bound (mechanism for eventual degradation).}
  Per-token suppression is stable across lengths, but the far set
  grows with context. A lower bound proves that this growth is
  fundamental: without explicit far-mass control, no choice of
  rotations can guarantee that the target signal is preserved at
  unbounded lengths. This bound applies to any orthogonal
  transport. This is consistent with the flat extrapolation of
  distance-bias methods such as
  ALiBi~\citep{press2022alibi}, which directly control the total
  far mass that rotations alone cannot bound.
\end{enumerate}
In addition, we show that for $\SO(2)$ rotations, rotating values as
well as queries and keys extends the extrapolation range, since far
values that still receive attention mass combine incoherently,
bounding the far contribution's covariance.

We do not aim to reproduce PaTH at scale. Instead, we train small
decoder-only transformers with accumulated $\SO(2)$ rotations to
isolate the mechanism; the results support each prediction. Random accumulated rotations
instantiate the independence and spectral-gap assumptions most
directly among our experimental variants and strongly improve
extrapolation over RoPE. Rotation-only models degrade gradually at
extreme lengths while ALiBi remains flat, consistent with the
far-mass requirement.
Adding value rotation further reduces long-context degradation,
consistent with the theoretical prediction. Learned token-dependent
rotations match RoPE at training length and maintain
near-training-length perplexity through $16\times$ the training
context.

The remainder of the paper follows this progression.
Section~\ref{sec:model} defines the stylized attention model
and the route notation used throughout.
Section~\ref{sec:score-control} shows that accumulated rotations
produce a content-dependent mixing window and a score gap that
forces small total far attention mass, establishing an upper bound
on far-token interference.
Section~\ref{sec:far-mass-lower-bound} proves a lower bound showing
that even stable per-token suppression cannot eliminate far-mass
leakage as the far set grows; explicit far-mass control (e.g.,
distance bias) is structurally necessary.
Section~\ref{sec:far-covariance} shows that, for the $\SO(2)$
variant, rotating values as well as queries and keys tightens the
upper bound on far-token interference by making the surviving far
contributions combine incoherently.
Section~\ref{sec:signal-gain} establishes near-signal preservation
and summarizes the combined picture.
Section~\ref{sec:empirical-context} tests the resulting predictions
in controlled $\SO(2)$ experiments.
Section~\ref{sec:discussion} discusses scope and relation to other
positional methods; Section~\ref{sec:conclusion} concludes.

\section{Preliminaries and Stylized Model}
\label{sec:model}

\subsection{Transformer attention and the aggregation model}
\label{sec:transformer-attention}

For a single attention head, let $x_j \in \R^{d_{\rm model}}$
be the residual representation at position~$j$.  Standard
scaled dot-product attention forms
\begin{equation}\label{eq:standard-qkv}
  q_i = W_Q x_i, \qquad
  k_j = W_K x_j, \qquad
  v_j = W_V x_j,
\end{equation}
then computes scores and softmax weights. With an explicit logit
scale $\lambda>0$,
\begin{equation}\label{eq:standard-softmax}
  s_{ij}
  = \frac{q_i^\top k_j}{\sqrt{d_k}} + b_{ij},
  \qquad
  \alpha_{ij}
  = \frac{\exp(\lambda s_{ij})}
  {\sum_{\ell}\exp(\lambda s_{i\ell})},
\end{equation}
where $b_{ij}$ may include a causal mask or positional bias.
The usual normalization is recovered by taking $\lambda=1$; larger
$\lambda$ makes the softmax more selective.
The attention-head output before the output projection is
\begin{equation}\label{eq:standard-attn-output}
  o_i = \sum_j \alpha_{ij}\, v_j .
\end{equation}
Thus attention has a score side, which produces the weights
$\alpha_{ij}$, and a value side, which forms the weighted
value sum.
From this point on, $d$ denotes the value/transport dimension,
assumed even; it may be the value-projection dimension rather
than the full residual dimension $d_{\rm model}$.

This section defines the score/value abstraction used to study
far interference after the score side has selected the
weights. The value-side aggregation is
\begin{equation}\label{eq:attention-aggregation}
  c_i
  = \sum_j \alpha_{ij}\, P_{j \to i}\, v_j .
\end{equation}
The ordinary transformer value sum is recovered by taking
$P_{j \to i}=I_d$, so that $c_i=o_i$. This covers both the
identity-rotation baseline and the standard RoPE-style baseline in
which Q/K uses position-dependent rotation but V is summed
directly. In the position-dependent Q/K/V comparison,
$P_{j\to i}$ is chosen from the source-query offset. In the
content-dependent Q/K/V comparison, $P_{j\to i}$ is accumulated
from the intervening tokens. The Q/K use of the same route-level
near/far split is analyzed in
Theorem~\ref{thm:score-separation}.

In the stylized model, near values carry a latent target
component; far values are background. The question is whether the
latent component remains recoverable as more far terms are added.

\subsection{Route transport}

A transport operator is an orthogonal route operator
$P_{j\to i}$ associated with the source-to-query interval.
It may be content-independent, as in position-based rotations, or
content-dependent, as in accumulated rotations generated from
intervening token representations.
On the value path, it rotates each selected value vector before
summation when value transport is enabled. On the score path, the
same route geometry can be used to compare transported query and key
features. Variants that rotate only queries and keys, including
standard RoPE-style attention, have no V-side transport:
$P_{j\to i}=I_d$.

Let $c_t$ denote the token at position~$t$.
Each position carries a per-token orthogonal step
\begin{equation}\label{eq:step-rotation}
  M_t \in O(d).
\end{equation}
For a source $j < i$, the \emph{route} from~$j$ to~$i$ is the
ordered source-to-query interval.
Define the prefix product in left-to-right (increasing-$t$) order:
$A_i = M_0 M_1 \cdots M_{i-1}$, with $A_0 = I$.
The route transport is
\begin{equation}\label{eq:route-product}
  P_{j \to i}
  = A_i^{-1} A_j
  = M_{i-1}^{-1} M_{i-2}^{-1} \cdots M_j^{-1},
\end{equation}
the product of inverse steps in query-to-source
(decreasing-$t$) order.

\emph{Content-dependent route transport} uses
\begin{equation}\label{eq:content-step}
  M_t = M(c_t),
\end{equation}
where $M(\cdot)$ maps each token representation to a learned
orthogonal matrix.
The route from $j$ to $i$ then depends on all intervening
content. All operators remain norm-preserving.

\emph{Position-only transport} uses a constant step
$M_t = M_0$ for all~$t$, giving
$P_{j \to i} = M_0^{-(i-j)}$.
The transport depends on the offset $i - j$ but not on
intervening content.

Routes to the same query are nested interval products.
For example, $P_{i-5 \to i}$ and $P_{i-2 \to i}$ share
the final two per-token steps, so far routes to a fixed query are
not independent. The covariance analysis below works with this
nested dependence directly.

A transported aggregation layer computes convex weights
$\alpha_{ij} \ge 0$, $\sum_j \alpha_{ij} = 1$, and produces
the output in~\eqref{eq:attention-aggregation}. The
score side determines the weights; when value transport is used,
the value side rotates the selected values before summation.
The block-diagonal $\SO(2)$ specialization and its implementation
convention are introduced in Section~\ref{sec:far-covariance},
where the commutative structure is used.

The following table collects the terminology and notation used in
the stylized model and proof chain.

\begin{center}
\small
\begin{tabular}{p{0.26\linewidth}p{0.66\linewidth}}
\toprule
Term or symbol & Meaning in this paper \\
\midrule
logarithms & natural logarithms \\
$\|\cdot\|_\op,\|\cdot\|_F$ & operator norm and Frobenius norm \\
$O(d),\SO(2)^{d/2}$ & orthogonal group in dimension~$d$ and
block-diagonal two-dimensional rotations ($\SO(2)$ introduced in
Section~\ref{sec:far-covariance}) \\
$\mathcal{N}(\mu,\Sigma)$ & Gaussian distribution with mean~$\mu$
and covariance~$\Sigma$ \\
score path & Q/K computation that produces attention weights \\
value path & weighted V aggregation after weights are selected \\
route & ordered source-to-query interval \\
route transport $P_{j\to i}$ & orthogonal rotation accumulated from
source~$j$ to query~$i$ \\
matched-score idealization & normalized Q/K score; the $\SO(2)$ formula
$B^{-1}\sum_b \cos\Theta_{j\to i,b}$ is given in
Section~\ref{sec:far-covariance} \\
score gap & margin between near-route and far-route Q/K scores \\
spectral gap & $\|\E[M_t]\|_{\op} < 1$: the average per-token
step is contractive \\
finite mixing window $w_{\varepsilon_{\rm mix}}$ & route length after
which accumulated content-dependent transport is decorrelated \\
active source set $\Aset_i$ & selected positions included in the
aggregation for query~$i$ \\
$\Sset_i,\Dset_i$ & target-bearing near set and far background set, with
$\Aset_i=\Sset_i\dot\cup\Dset_i$ \\
$\mathcal{N}_i(w),\mathcal{F}_i(w)$ & route-level near and far sets for
window width~$w$ \\
$\bm{\alpha}_\Sset$ & near attention-weight vector
$(\alpha_{ij})_{j\in\Sset_i}$ \\
$\bm{\alpha}_\Dset$ & far attention-weight vector
$(\alpha_{ij})_{j\in\Dset_i}$ \\
$\rho_\Dset=\|\bm{\alpha}_\Dset\|_1$ & total far mass, so
$\|\bm{\alpha}_\Sset\|_1=1-\rho_\Dset$ \\
$a_\Dset=\|\bm{\alpha}_\Dset\|_2$ & far-weight $\ell_2$ norm \\
$B_{\Sset,i}$ & transported near-signal coefficient
$\sum_{j\in\Sset_i}\alpha_{ij}P_{j\to i}$ \\
$\kappa$ & near-signal gain constant \\
$\Delta_\Dset(e)$ & far covariance
$\Cov(e_i\mid\mathcal{E}_i=e)$ \\
$\mathcal{E}_{i,L}$ & aggregation environment at query~$i$ and
length~$L$; write $\mathcal{E}_i$ when $L$ is clear, and $e$ for a
fixed realization \\
shared-background model & far-value model
$v_j=c_0G_{\rm com}+w_j$ used to isolate coherent far interference \\
\bottomrule
\end{tabular}
\end{center}

\subsection{Near and Far Route Regimes}
\label{sec:windowed-output}

Fix a query position~$i$ and consider causal source positions
$j<i$.  The route length is
$n=i-j$.
When we use route length $k$ instead of source index~$j$, the
notation $P_{i-k\to i}$ means the same source-to-query transport
as $P_{j\to i}$ with $j=i-k$.
For a window width~$w$, define the near and far
index sets
\[
  \mathcal{N}_i(w)=\{j:0<i-j<w\},
  \qquad
  \mathcal{F}_i(w)=\{j:i-j\ge w\}.
\]
These are route-level sets. They are used by the Q/K score
analysis and the value-side decomposition
\begin{equation}\label{eq:windowed-output}
  c_i = c_i^{\rm near}(w) + c_i^{\rm far}(w),
\end{equation}
where
\begin{equation}\label{eq:near-window-output}
  c_i^{\rm near}(w)
  =
  \sum_{j\in\mathcal{N}_i(w)}
  \alpha_{ij}P_{j\to i}v_j
\end{equation}
is the near-window contribution and
\begin{equation}\label{eq:far-regime-output}
  c_i^{\rm far}(w)
  =
  \sum_{j\in\mathcal{F}_i(w)}
  \alpha_{ij}P_{j\to i}v_j
\end{equation}
is the far contribution.
The near term contains the target-bearing signal.
Section~\ref{sec:score-control} gives conditions
under which content-dependent transport supplies a finite mixing
window for this split, and Section~\ref{sec:far-covariance} later
bounds the covariance of the selected far contribution.
On the score path, the same near and far route sets index the
Q/K comparisons. The transported Q/K score $S_{j\to i}$ between
source~$j$ and query~$i$ depends on the route transport
$P_{j\to i}$: near routes have transport close to the identity
and yield high scores; far routes with approximately
Haar-random transport yield low scores. Using a logit-scale
convention, the corresponding softmax weights over a finite
active source set~$\Aset_i$ are
\begin{equation}\label{eq:transported-score-softmax}
  \alpha_{ij}
  =
  \frac{\exp(\lambda S_{j\to i})}
  {\sum_{m\in\Aset_i}\exp(\lambda S_{m\to i})},
  \qquad \lambda>0 .
\end{equation}
The explicit score formula (a cosine average over block phases)
is given in Section~\ref{sec:far-covariance} when the $\SO(2)$
specialization is introduced.

\subsection{Signal-interference decomposition}
\label{sec:aggregation-env}

The active source set for query~$i$ is
$\Aset_i = \Sset_i \,\dot\cup\, \Dset_i$, where $\Sset_i$ is a
near-window target-bearing set and $\Dset_i$ is a far set.
The transported near-signal coefficient is
\begin{equation}\label{eq:signal-gain}
  B_{\Sset,i} = \sum_{j \in \Sset_i} \alpha_{ij}\,
  P_{j \to i},
\end{equation}
and the far contribution is
\begin{equation}\label{eq:interference}
  e_i = \sum_{j \in \Dset_i} \alpha_{ij}\, P_{j \to i}\, v_j.
\end{equation}
Let $\mathcal{E}_{i,L}$ denote the \emph{aggregation environment}:
the near and far index sets, the weights
$\{\alpha_{ij}^{(L)}\}$, and the route transports
$P_{j \to i}^{(L)}$. The far covariance after conditioning on
a realization $\mathcal{E}_{i,L}=e$ is
\begin{equation}\label{eq:Delta-D}
  \Delta_\Dset(e) = \Cov(e_i \mid \mathcal{E}_i = e).
\end{equation}
The analysis focuses on the setting where far values share
structure that creates nonzero cross-covariances after transport.
The canonical specialization is the \emph{shared-background model}
(Appendix~\ref{app:formal-main-results},
Definition~\ref{def:common-component}), in which far tokens share a
zero-mean Gaussian component.
Ordinary value summation leaves this component coherent; value
transport can make the weighted sum small.

\section{How Accumulated Rotations Suppress Far Attention}
\label{sec:score-control}
\label{sec:baselines}

Accumulated content-dependent orthogonal transformations create a
content-dependent mixing window. The argument proceeds in three
steps: a spectral-gap mixing result, a score gap from
high-dimensional concentration, and far-weight bounds after softmax.

The starting point is a finite mixing window
(Theorem~\ref{thm:finite-transport-window}). For any i.i.d.\
random orthogonal step matrices $M_t \in O(d)$ with
$\|\E[M_t]\|_{\mathrm{op}} \le \beta < 1$ (a \emph{spectral gap}),
the accumulated product $P_n = M_1 \cdots M_n$ satisfies
$\|\E[P_n]\|_{\mathrm{op}} \le \beta^n$. The first moment of the
route transport therefore decays geometrically, so that after a
finite number of steps $w_{\varepsilon_{\rm mix}}$ (depending on
$\beta$ and the tolerance, not on the total context length) the
accumulated transport is decorrelated.
The operator-norm condition
$\|\E[M_t]\|_{\mathrm{op}} < 1$ is a convenient sufficient
contraction condition used throughout this paper; the exact
necessary and sufficient condition for first-moment decorrelation
is $\rho(\E[M_t]) < 1$, where $\rho$ denotes the spectral radius.
(For i.i.d.\ steps, $\E[P_n] = (\E[M])^n \to 0$ if and only if
$\rho(\E[M]) < 1$.)
When $\rho(\E[M_t]) = 1$---in particular whenever $M_t$ is
deterministic---the accumulated product retains a non-decaying
component and no decorrelation occurs.
\begin{example}[$\SO(2)$: uniform step angles]
\label{ex:so2-gap}
If the step angle is uniform on $[-a,a]$ with $a>0$,
then $\beta = |\sin(a)/a| < 1$. The random-rotation experiments
(Section~\ref{sec:empirical-context}) use this distribution.
\end{example}

\begin{example}[Householder reflections]
\label{ex:householder-gap}
Let $d \ge 2$.
For Householder reflections $H_t = I - 2v_t v_t^\top$ with normal
vector $v_t$ drawn from a distribution~$\nu$ on $S^{d-1}$, the
spectral gap is
$\|\E[H_t]\|_{\mathrm{op}} = \|I - 2\Sigma_\nu\|_{\mathrm{op}}$,
where $\Sigma_\nu = \E[v_t v_t^\top]$. This is less than one
whenever the support of~$\nu$ spans~$\R^d$. Idealized PaTH-like
distributions satisfying this condition are an instance.
\end{example}

\begin{example}[Givens rotations in a random plane]
\label{ex:givens-gap}
A Givens rotation $G_{pq}(\theta)$ rotates the $(p,q)$ coordinate
plane by angle~$\theta$ and acts as the identity on all other
coordinates. If the plane $(p_t,q_t)$ is drawn uniformly from all
$\binom{d}{2}$ coordinate pairs and $\theta_t$ is uniform on
$[-a,a]$ independently, then
$\E[M_t] = \bigl(1 - \tfrac{2(1-\sin(a)/a)}{d}\bigr)\,I$, so
$\beta = \bigl|1 - \tfrac{2(1-\sin(a)/a)}{d}\bigr| < 1$
for $d \ge 2$.
\end{example}

\begin{example}[Block-diagonal $\SO(3)$ rotations]
\label{ex:so3-gap}
Assume $d = 3B$. In each 3D block, rotate by a fixed angle
$\theta \in (0,\pi)$ around an axis $\hat{n}_t$ drawn uniformly
from $S^2$. Then
$\E[M_t] = \tfrac{2\cos\theta+1}{3}\,I_3$ per block, giving
$\beta = |2\cos\theta+1|/3 < 1$.
\end{example}

\begin{example}[Torus rotation via a fixed skew-symmetric matrix]
\label{ex:full-so-gap}
Assume $d$ is even. Let $A \in \mathfrak{so}(d)$ be a fixed
nonsingular skew-symmetric matrix with eigenvalue magnitudes
$\lambda_1,\ldots,\lambda_{d/2}$, all positive. For a
content-dependent scalar
$\epsilon_t \sim \mathrm{Uniform}[-a,a]$ with $a > 0$, set
$M_t = \exp(\epsilon_t A)$. Then
$\beta = \max_j |\sin(a\lambda_j)/(a\lambda_j)| < 1$.
\end{example}

\noindent Identity and position-only transports have
$\|\E[M_t]\|_{\mathrm{op}} = 1$,
so neither RoPE nor NoPE creates this content-random mixing. Once training
length covers $w_{\varepsilon_{\rm mix}}$, longer contexts add only
routes from the already-present decorrelated regime. This
distributional match between training and evaluation is a necessary,
but not sufficient, condition for extrapolation: it ensures that the per-token
suppression the model learns during training transfers unchanged to
any evaluation length. The remainder of this section quantifies how
strong that per-token suppression is (the score gap), and
Section~\ref{sec:far-mass-lower-bound} shows why per-token
suppression alone is not enough when the far set grows.

The mixing window becomes a score gap through high-dimensional
concentration (Theorem~\ref{thm:score-separation}). Near routes
have transport close to the identity and therefore yield high
transported Q/K scores. Far routes in the decorrelated regime have
approximately Haar-random transport; sphere concentration
(Lévy's lemma on~$S^{d-1}$) then makes the probability of a large
far score decay exponentially in the dimension~$d$. Under an
additional TV-mixing condition (spectral gap of the random walk
on~$O(d)$), the product distribution converges to Haar measure on
the appropriate limiting space---$\SO(d)$ for determinant-$+1$
steps, the parity-determined determinant component for fixed
determinant-$(-1)$ steps, and full~$O(d)$ for mixed-sign steps.
Once the product is Haar-like, it sends a fixed key vector to an
approximately uniform point on the sphere, so its dot product with
a fixed query is unlikely to be large.
For the $\SO(2)$ specialization (Section~\ref{sec:far-covariance}),
the same mechanism admits a scalar analysis: each route accumulates
scalar block phases whose independence across the $d/2$ blocks lets
Hoeffding's inequality yield a score-gap rate of~$d/4$, compared
with the general Lévy rate of~$(d{-}1)/2$.
RoPE-style blocks share the same $2\!\times\!2$ rotation geometry
but have deterministic phases, so they do not satisfy the
content-random condition that drives the tail bound.

Finally, for a finite source set, the score gap plus a sufficiently
large logit scale~$\lambda$ forces small total far mass and small
far-weight $\ell_2$ norm
(Proposition~\ref{prop:score-budget}). The required scale depends
logarithmically on the far-candidate bound~$M_{\max}$; full softmax
with length-independent logits over an unbounded far set cannot
provide these bounds by itself. Formal statements and proofs for
the $\SO(2)$ case are in
Appendix~\ref{app:formal-main-results}.

Idealized Householder-step distributions satisfying the spectral-gap
conditions include PaTH-like transformations
(Example~\ref{ex:householder-gap}); verifying the TV-mixing
condition for learned PaTH transformations remains future work.
The general orthogonal formal statements and proofs are in
Appendix~\ref{app:householder-decoherence}.
Proposition~\ref{prop:householder-tv} shows that
Householder reflections whose normal vector has a smooth density
bounded below on~$S^{d-1}$ satisfy the stronger component-TV
mixing condition (Assumption~\ref{ass:householder-tv-mixing}),
and Example~\ref{ex:heat-kernel-tv} gives a heat-kernel-dithered
Householder-compatible family satisfying the same condition.

\section{Why Rotations Alone Are Not Enough}
\label{sec:far-mass-lower-bound}

Under full softmax with bounded logits, the total far attention mass
satisfies $\rho_\Dset \ge M_L/(M_L + K e^{2\lambda})$, where $K$
is the near-set size, $M_L$ is the far-set size, and $\lambda$ is
the logit bound (Proposition~\ref{prop:dense-softmax}). As
$M_L\to\infty$, this forces $\rho_\Dset\to 1$, regardless of the
rotation structure. The near-signal coefficient is then bounded by
$1-\rho_\Dset$, which vanishes at unbounded lengths
(Proposition~\ref{prop:far-mass-signal-upper}). Within this full-softmax, bounded-logit, rotation-only setting,
no choice of orthogonal transport ($\SO(2)$ blocks or Householder
products alike) can prevent this without explicit far-mass control.

This is why rotation-only models, despite the score gap from
Section~\ref{sec:score-control}, still degrade at extreme
extrapolation lengths: without an additional mechanism that directly
suppresses far attention mass, the growing far set eventually erodes
the near-signal coefficient. ALiBi's distance-dependent score bias
and FoX's data-dependent forget gates are two existing designs that
provide this control; their flat extrapolation is consistent
with this prediction.

\section{How Rotating Values Provides Additional Protection: an \texorpdfstring{$\SO(2)$}{SO(2)}-Specific Result}
\label{sec:far-covariance}
\label{sec:far-covariance-control}

Section~\ref{sec:score-control} controls how much far content is
selected by the score side. Some far mass can nevertheless remain.
If far values contain a shared background component, ordinary value
summation can preserve that component coherently even when the far
weights are small. The $\SO(2)$ Q/K/V variant targets this residual
error: by rotating values along the same accumulated route, far
values that survive score selection are made to combine
incoherently. The value-side analysis exploits the commutativity
of accumulated $\SO(2)$ rotations on the value path.

\paragraph{$\SO(2)$ notation.}
This section uses the block-diagonal $\SO(2)$ specialization of
the general orthogonal framework introduced in
Section~\ref{sec:model}. Let $B = d/2$ be the number of
two-dimensional rotation blocks. For
$\psi \in \R^{B}$, let $R(\psi)$ be the block-diagonal matrix
with a $2\!\times\!2$ rotation $R(\psi_b)$ on block~$b$. Each
position carries a block-diagonal step rotation
$R_t = R(\psi_t) \in \SO(2)^{B}$, which is a special case of the
per-token step $M_t$ from~\eqref{eq:step-rotation}. Because
block rotations commute, accumulated angles
$\Theta_i = \sum_{t<i} \psi_t$ give $A_i = R(\Theta_i)$ and
the route transport takes the simple form
$P_{j \to i} = R(\Theta_j - \Theta_i)$. Write the accumulated
phase in block~$b$ as $\Theta_{j\to i,b}$.

\emph{Content-dependent transport} uses the step-angle
\begin{equation}\label{eq:content-angle}
  \psi_t = \omega + g(c_t),
\end{equation}
where $g$ maps each token to a learned angle vector in~$\R^{B}$.

In the matched-score idealization, the block-normalized
transported Q/K score is
\begin{equation}\label{eq:block-score}
  S_{j \to i}
  = \frac{1}{B} \sum_{b=1}^{B} \cos \Theta_{j \to i,b}.
\end{equation}
Near routes have small accumulated phases in most blocks; far
routes have approximately uniform phases.

\emph{Position-only transport} uses $\psi_t = \omega$ (a
constant frequency vector), giving
$P_{j \to i} = R(\omega(j-i))$.

\paragraph{Implementation convention.}
The accumulated rotations are applied before the attention call.
Let $A_i = R(\Theta_i)$ be the prefix-product rotation at
position~$i$. For Q/K-only transport, queries are pre-rotated by
$A_i^{-1}$ and keys by~$A_j^{-1}$, so that the dot product
$q_i^\top k_j$ contains $\cos(\Theta_j - \Theta_i)$ terms as in
RoPE. (The cosine terms are invariant to the sign convention
because $\cos(\Theta_j - \Theta_i) = \cos(\Theta_i - \Theta_j)$.
The sine/antisymmetric terms do change sign, so $q^\top R(\theta)k
\neq q^\top R(-\theta)k$ for general $q,k$; however, models
trained from scratch with a fixed convention learn queries and keys
adapted to that convention, so the choice is absorbed during
training.) For Q/K/V transport, values are additionally pre-rotated by
$A_j$ before the attention call, and the attention output is
post-rotated by $A_i^{-1}$: this gives
$\sum_j \alpha_{ij}\, A_i^{-1} A_j\, v_j
= \sum_j \alpha_{ij}\, P_{j\to i}\, v_j$ as required.

\paragraph{Why commutativity matters.}
The commutativity of $\SO(2)$ blocks is what makes accumulated
route angles into sums of per-token angles. This additive structure
enables per-block Fourier analysis and the leave-one-block
decoupling used in the value-side proof below: by removing one
block's phase contribution from the score, the remaining $B-1$
blocks provide proxy weights that are independent of the removed
block's transport. The resulting adaptivity penalty is
controlled by one block's influence on the logit, of order
$e^{2\lambda/B}-1$.

\paragraph{Value-side decoherence.}
The central difficulty is that the score-selected weights depend on
the same route phases whose cancellation we want to prove. The proof
of Theorem~\ref{thm:same-path-nested-covariance} handles this by
removing one block's contribution from the score, applying a
prefix-product cancellation bound to that block with the resulting
proxy weights, and then comparing the proxy and true softmax
weights. The cost is an adaptivity penalty of order
$e^{2\lambda/B}-1$, reflecting one block's influence on the logit.
Given the score-side far-mass and far-weight $\ell_2$ bounds as
input, this leave-one-block decoupling yields a spectral covariance
bound in the shared-background model. On a high-probability
route-phase event, the far covariance satisfies
$\Delta_\Dset(e)\preceq \bar\delta^2 I_d$,
where $\bar\delta^2$ depends on the spectral gap, the
score-side far-mass bound, the logit scale, and the number of
blocks. The formal statement and proof are in
Appendix~\ref{app:formal-main-results},
Theorem~\ref{thm:same-path-nested-covariance}.

\paragraph{Householder products.}
PaTH applies accumulated Householder reflections only to queries and
keys, not to values, so its extrapolation depends entirely on
score-side mechanisms. Householder transformations could also be applied to values, but
proving value-side decoherence for noncommuting products would
require a decorrelation argument for prefix products of random
matrices, which remains open.

\section{Near-Signal Preservation and Summary}
\label{sec:signal-gain}

Sections~\ref{sec:score-control}--\ref{sec:far-covariance}
control far interference. The remaining question is whether the
near-window contribution still carries the target signal after
transport. If every target-bearing near route acts approximately as
the identity on the latent signal subspace, then their weighted
combination preserves the signal and the near-signal gain is bounded
away from zero (Lemma~\ref{thm:robust-coherence};
Appendix~\ref{app:subgaussian} gives a probabilistic sufficient
condition under sub-Gaussian angle fluctuations). Combining this
with the score-side far-mass bound yields the overall
near-signal gain condition
(Corollary~\ref{cor:score-near-gain}).

Taken together, these three mechanisms give a unified picture
within the stylized model:
score-side decoherence bounds the total far attention mass
(Section~\ref{sec:score-control}); the far-mass lower bound shows
this control is eventually overwhelmed without explicit distance bias
(Section~\ref{sec:far-mass-lower-bound}); and value-side
decoherence makes the surviving far contribution combine
incoherently (Section~\ref{sec:far-covariance}). Near-signal
preservation ensures that the useful component is not destroyed by
the same transport. Conditioned on fixed far-mass and alignment quantities, the
per-route bounds do not depend on evaluation length
(Section~\ref{sec:far-mass-lower-bound} shows why those
quantities can nevertheless degrade as the far set grows), so
the near/far structure seen during
training transfers unchanged to longer sequences.

\section{Controlled Experiments}
\label{sec:empirical-context}

\subsection{Experimental setup}

The experiments below serve as controlled demonstrations of the
analysis's predictions (score-side mixing, value-side
cancellation, the Q/K versus Q/K/V distinction, and the need for
far-mass control) in trained transformers. All models are
decoder-only causal transformers trained on OpenWebText at context
length~512 and evaluated up to 65{,}536 tokens (a 128-fold
increase). The comparison isolates two factors: whether rotations
are position-indexed or accumulated, and whether the accumulated
transport is applied only to Q/K or also to V. Full architecture,
optimizer, sampling, and evaluation details are given in
Appendix~\ref{app:experimental-details}.

\subsection{Main rotation comparison}

Table~\ref{tab:empirical-context} reports length-extrapolation
perplexity for the main rotation variants.

\begin{table}[!htbp]
\centering
\caption{Length extrapolation perplexity for the main trained
rotation variants. Models are trained at context length~512 and
evaluated up to length~65536. The ratio columns report perplexity
at the indicated length divided by perplexity at~512.}
\label{tab:empirical-context}
\resizebox{\linewidth}{!}{%
\begin{tabular}{lllccccccc}
\toprule
Model & Rule & Rotated tensors & 512 & 4096 & 8192 & 16384 & 65536 & 8K/512 & 65K/512 \\
\midrule
RoFormer/RoPE & position-indexed RoPE phase & Q/K
  & 23.54 & 223.88 & 381.82 & 600.53 & 905.18 & 16.2x & 38.5x \\
Fixed RoPE value rotation & fixed RoPE phase with value transport & Q/K/V
  & 23.17 & 127.81 & 214.18 & 306.73 & 485.66 & 9.24x & 21.0x \\
Random & accumulated random phase & Q/K
  & 24.19 & 25.90 & 38.81 & 70.63 & 181.27 & 1.60x & 7.49x \\
Random & accumulated random phase & Q/K/V
  & 24.18 & 22.58 & 24.83 & 27.54 & 38.54 & 1.03x & 1.59x \\
Learned token rotation & learned per-token phase & Q/K
  & 23.56 & 22.56 & 27.62 & 38.93 & 139.63 & 1.17x & 5.93x \\
Learned token rotation & learned per-token phase & Q/K/V
  & 23.60 & 22.82 & 27.89 & 37.58 & 102.53 & 1.18x & 4.35x \\
\bottomrule
\end{tabular}%
}
\end{table}

\subsection{Far-mass control comparison}

ALiBi tests the far-mass-control prediction: its distance-dependent
score bias directly suppresses far attention mass, providing a
reference for the rotation-only models.

\begin{table}[!htbp]
\centering
\caption{ALiBi distance-bias baseline, included to test the
far-mass-control prediction of the stylized-model analysis.}
\label{tab:alibi-reference}
\begin{tabular}{lccccccc}
\toprule
Model & 512 & 4096 & 8192 & 16384 & 65536 & 8K/512 & 65K/512 \\
\midrule
ALiBi & 23.87 & 21.75 & 22.46 & 23.19 & 22.84 & 0.94x & 0.96x \\
\bottomrule
\end{tabular}
\end{table}

\subsection{Takeaways}

The experiments support three predictions of the stylized-model analysis.
First, accumulated rotations substantially improve extrapolation
over position-indexed RoPE: learned token rotations achieve 1.17x
at 8K/512 versus RoPE's 16.2x. The random accumulated-rotation
control provides the most direct test since it satisfies the model
assumptions by construction. Second, rotating values in addition to
queries and keys consistently improves long-context behavior,
especially for random rotations (1.59x at 65K/512 versus 7.49x for
queries and keys only). Third, rotation-only models still degrade
at extreme lengths while ALiBi remains flat (0.96x at 65K/512),
consistent with the far-mass lower bound. Residual degradation
reflects both the absence of explicit far-mass control and
multi-layer dynamics beyond the single-layer stylized model.

\section{Discussion and Future Work}
\label{sec:discussion}

\subsection{What the stylized model does and does not claim}

The analysis in this paper sheds light on why accumulated
transformations extrapolate. Score-side decoherence holds for any accumulated orthogonal
transformation with a spectral gap; the $\SO(2)$ case is developed
in detail because it additionally enables value-side analysis. The theoretical results analyze the
attention mechanism in isolation: the score/value subcomputation at a
single layer with given inputs. The finite-window result does not
say that each far contribution becomes small; every rotation remains
norm-preserving. It says that route transport creates a finite
regime split whose boundary does not move with evaluated context
length. Tokens beyond the mixing window may still participate, but
they arrive from an already-present decorrelated regime rather than from a
new deterministic offset regime.

The finite-window argument rests on the per-token transformations
having enough variation across content. The general condition is
a spectral gap:
$\|\E[M_t]\|_{\mathrm{op}} < 1$, ensuring that accumulated products
mix. Examples~\ref{ex:so2-gap}--\ref{ex:full-so-gap} verify this
condition for five families of orthogonal steps, including the
$\SO(2)$ uniform-angle case used in the experiments and the
Householder case relevant to PaTH. The per-token transformation is also role-blind: it is
applied at position~$t$, independently of which query later reads
from that position. Since the same token can be relevant for one
query and irrelevant for another, the transformation cannot be
defined by setting it to the identity on signal routes and
randomizing it on far routes. Signal preservation enters through the
realized near-signal gain condition; role-blind diversity is used for
residual far suppression. The model also separates content used for
transformation generation, score-side information used for weights,
value vectors, and the latent signal; this separation is part of the
model assumption.

\paragraph{Beyond the train/test match intuition.}
Accumulated content-dependent transport creates a finite mixing
window whose boundary does not move with context length, so the
model sees the same near/far regime at training and evaluation.
However, distributional match is
a \emph{necessary} condition for extrapolation, not a sufficient
one. The score-gap analysis
(Section~\ref{sec:score-control}) shows that mixing actually
translates into small far attention mass, not just distributional
similarity. The far-mass lower bound
(Section~\ref{sec:far-mass-lower-bound}) shows that this
suppression is eventually overwhelmed, explaining the degradation
that distributional match alone does not predict. The value-side
analysis (Section~\ref{sec:far-covariance}) shows that rotating
values provides additional protection beyond score-side suppression.
Together these move from an intuition to quantitative bounds and
structural limitations.

The random-rotation experiments are useful because they instantiate
the independence and spectral-gap assumptions directly. Their
behavior therefore provides a controlled check of the stylized model,
while the learned token-rotation experiments test whether a trained
model can exploit a related mechanism.

\subsection{Relation to other position methods}

Position Interpolation~\citep{chen2024extending},
NoPE~\citep{kazemnejad2023nope}, Selective
RoPE~\citep{movahedi2026selective}, and standard RoPE all have
deterministic per-step transport, so
$\|\E[M_t]\|_{\mathrm{op}} = 1$ and the spectral-gap condition
does not apply.
Randomized RoPE~\citep{ruoss2023randomized} also extrapolates; it
randomizes position indices during training, exposing the model to
large offsets. At inference time, however, positions are
sequential and the per-step rotation is a deterministic function
of~$t$, so $\|\E[M_t]\|_{\mathrm{op}} = 1$: there is no
inference-time mixing window. The RoPE blocks are also deterministic functions
of the same position offset, so there is no cross-block
independence of the kind used in the score concentration argument.
The separate-path result in
Section~\ref{sec:separate-path-mixing} applies to an explicitly
random phase-perturbation model with independent value-side phases
after score selection, not to randomized RoPE obtained only by
sampling position indices.

Commutativity has an expressivity consequence: because the
$\SO(2)$ blocks commute, the accumulated route angle is a
\emph{sum} of the intervening step angles. For the learned
per-token variant, the route relation between source~$j$ and
query~$i$ therefore depends on the multiset (bag) of intervening
token identities, not on their order. This is strictly less
expressive than PaTH's accumulated Householder products, which
generate all of~$O(d)$ and can represent order-sensitive
transformations. Commutativity is what enables the value-side
decoherence analysis of Section~\ref{sec:far-covariance}: the
additive angle structure allows per-block Fourier analysis and
the leave-one-block decoupling that bounds far covariance.

The Forgetting
Transformer~\citep{lin2025forgetting} adds data-dependent
forget gates to the attention logits and also reports strong length
extrapolation; like ALiBi, it operates on the score computation.
Because accumulated rotations and score-side mechanisms operate on
different parts of the attention computation, they are complementary
and could in principle be combined. This is consistent with PaTH's
own strongest results, which combine accumulated Householder
transformations with the Forgetting Transformer's data-dependent
score gates (PaTH-FoX) to achieve flat extrapolation.

\paragraph{Non-normalizing activations.}
The far-mass lower bound in Section~\ref{sec:far-mass-lower-bound}
is specific to full softmax normalization. It relies on the constraint
$\sum_j \alpha_{ij}=1$: with bounded logits, a growing far set contributes
an increasingly large share of the denominator, eventually diluting the
near-token mass. A non-normalizing activation such as
$w_{ij}=\log(1+e^{s_{ij}})$ or $w_{ij}=\sigma(s_{ij})$ removes this
particular denominator effect, since each token's raw weight is computed
independently and adding far tokens does not change the raw weights of
near tokens.

However, this does not by itself solve long-context degradation. It replaces
mass dilution by a scale-control problem: many far tokens with individually
small but nonzero weights can still produce a large aggregate far contribution.
Thus the relevant quantities become the unnormalized analogues of total far
weight and far-weight $\ell_2$ norm,
$\sum_{j\in\Dset} w_{ij}$ and $(\sum_{j\in\Dset} w_{ij}^2)^{1/2}$, rather than
softmax mass. Score-side decoherence would still push far scores downward,
and value-side decoherence can still make surviving far contributions combine
incoherently, but a separate argument would be needed to show that these
unnormalized far-weight sums remain bounded as context grows. In practice,
non-normalized attention therefore requires explicit scale control, such as
normalization, learned thresholds, temperature/gating, sparsification, or other
mechanisms that make the effective far-weight tail summable.

\subsection{Limitations and future work}

The main limitation is that rotation-only score gaps do not control
total far mass under full softmax over a growing context. The
far-mass lower bound (Section~\ref{sec:far-mass-lower-bound}) makes
this structural: without explicit far-mass control (sparsity,
masking, distance bias, or a growing score gap), the near-signal
coefficient eventually degrades. The experiments confirm this:
rotation-only models degrade gradually at extreme lengths.

\paragraph{Future work.}
Several directions would extend the present analysis.
\emph{Learned noncommuting transformations:}
Appendix~\ref{app:householder-decoherence} establishes score-side
decoherence for any accumulated orthogonal transformation under
spectral-gap assumptions. A direct analysis of learned PaTH
transformations (verifying that the spectral-gap condition holds in
practice) and especially value-side decoherence for noncommuting
products, remains open.
\emph{Mechanistic probes:} direct measurements of spectral
gaps (empirical decay of $\|\E[P_n]\|_{\mathrm{op}}$ as a function
of route length), attention-mass profiles as a function of distance,
and value-side cancellation statistics in trained models would
connect the model assumptions to observed transformer behavior.
\emph{Multi-layer theory:} extending the single-layer
analysis to account for residual-stream position leakage across
layers would close the gap between the theory's length-stable
predictions and the gradual degradation observed at extreme
extrapolation.

\section{Conclusion}
\label{sec:conclusion}

PaTH Attention established that accumulated data-dependent
transformations yield strong length extrapolation; this paper
offers a mechanistic explanation. A stylized model of
attention shows that accumulated
rotations suppress far-token attention while preserving the
near-window target signal. Score-side decoherence
(Section~\ref{sec:score-control}) holds for any accumulated
orthogonal transformation with a spectral gap; idealized
Householder-step distributions, including PaTH-like transformations,
satisfy this condition. A far-mass lower bound
(Section~\ref{sec:far-mass-lower-bound}) shows this suppression is
eventually overwhelmed without explicit distance bias. In addition, we show that for $\SO(2)$ rotations, accumulated
V rotations bound the residual far contribution.
Controlled experiments support all three predictions:
accumulated rotations substantially improve extrapolation over RoPE,
rotating values improves over rotating only queries and keys,
and rotation-only models degrade at extreme lengths while ALiBi
remains flat (Section~\ref{sec:empirical-context}).

\bibliography{references}
\bibliographystyle{tmlr}

\appendix

\section{Guide to the Formal Results}
\label{app:guide}

Figure~\ref{fig:proof-dependency-graph} summarizes the dependency
structure of the formal results.
Table~\ref{tab:formal-result-roles} gives the role and downstream
use of each result.
The proof chains mirror the main-text progression:
Theorem~1 gives the $\SO(2)$ first-harmonic mixing window,
Theorem~2 converts that window into a many-block score gap, and
Proposition~1 converts the score gap into softmax far-mass and
far-weight bounds over a finite candidate set.
Lemmas~1--2 and Proposition~2 verify the stronger component-TV
mixing condition needed for general orthogonal products, including
idealized Householder reflections.
Theorem~3 records first-moment decay for arbitrary orthogonal
products, while Theorem~4 gives the stronger TV convergence
required by Theorem~5; Corollary~1 then packages the general
orthogonal score-gap and softmax-scaling result.
Propositions~3--4 prove the complementary lower bound showing that
bounded-logit full softmax eventually assigns asymptotically all
mass to the growing far regime, forcing near-signal degradation
without explicit far-mass control.
Theorem~6 proves the same-path $\SO(2)$ value-side covariance
bound, while Theorem~7 and Corollary~3 give the simpler
separate-path value-side analogue.
Finally, Lemma~3 and Corollary~2 establish near-signal
preservation when near routes remain close to identity, and
Lemma~4 gives a probabilistic sufficient condition for this
alignment.

\begin{figure}[p]
\centering
\begin{tikzpicture}[
  >=Latex,
  box/.style={
    draw, rounded corners, align=center, font=\footnotesize,
    inner sep=4pt, minimum width=26mm, minimum height=9mm
  },
  aux/.style={box, fill=gray!8},
  main/.style={box, very thick},
  resultbox/.style={box, fill=gray!15, very thick},
  dep/.style={->, thick},
  mot/.style={->, dashed, gray!70}
]

\node[aux] (ex15) at (0,0)
  {Examples 1--5\\verify spectral gap};
\node[aux] (lem12) at (5,0)
  {Lemmas 1--2\\TV-mixing criteria};
\node[aux] (ex68) at (8.5,0)
  {Examples 6--8\\TV-mixing families};

\node[main] (t1) at (0,-1.8)
  {Thm.~1\\$\SO(2)$ mixing window};
\node[aux] (p2) at (5,-1.3)
  {Prop.~2\\Householder TV mixing};
\node[aux] (ass1) at (11.5,0)
  {Assump.~1\\first-moment gap};
\node[aux] (t3) at (9.5,-1.8)
  {Thm.~3\\first-moment decay\\(conceptual)};

\node[aux] (ass2) at (5,-2.6)
  {Assump.~2\\component-TV mixing};

\node[main] (t2) at (0,-3.6)
  {Thm.~2\\$\SO(2)$ score gap};
\node[main] (t4) at (5,-4.0)
  {Thm.~4\\TV convergence};

\node[main] (p1) at (0,-5.4)
  {Prop.~1\\softmax far-mass\\and $\ell_2$ bounds};
\node[main] (t5) at (5,-5.4)
  {Thm.~5\\orthogonal score\\separation};

\node[main] (c1) at (5,-7.0)
  {Cor.~1\\orthogonal softmax bounds};

\node[resultbox] (score) at (2.5,-8.5)
  {Score-side bounds\\
  $\rho_\Dset\le\rho_\star,\;\|\bm{\alpha}_\Dset\|_2\le a_\star$};

\node[main] (p3) at (9.5,-3.6)
  {Prop.~3\\far-mass lower bound};
\node[main] (p4) at (9.5,-5.4)
  {Prop.~4\\near-signal upper bound};
\node[resultbox] (limit) at (9.5,-7.0)
  {Limitation:\\rotations alone\\eventually fail};

\node[aux] (ass3) at (0,-10.5)
  {Assump.~3\\same-path coupling};
\node[aux] (def1) at (3.5,-10.5)
  {Def.~1\\shared-background\\far values};
\node[main] (t6) at (1.5,-12.3)
  {Thm.~6\\same-path Q/K/V\\covariance};

\node[aux] (ass4) at (7.5,-10.5)
  {Assump.~4\\separate V path};
\node[main] (t7) at (7.5,-12.3)
  {Thm.~7\\separate-path\\covariance};
\node[main] (c3) at (7.5,-14.1)
  {Cor.~3\\far-weight\\covariance bound};

\node[aux] (l4) at (10.5,-10.5)
  {Lemma~4\\sub-Gaussian angles};
\node[main] (l3) at (10.5,-12.3)
  {Lemma~3\\near-route gain};
\node[main] (c2) at (10.5,-14.1)
  {Cor.~2\\near-signal\\preservation};

\node[aux] (ex9) at (-1.5,-12.3)
  {Ex.~9\\value coherence};
\node[aux] (ex10) at (4.5,-12.3)
  {Ex.~10\\adaptive selection};

\draw[dep] (ex15) -- (t1);
\draw[dep] (t1) -- (t2);
\draw[dep] (t2) -- (p1);
\draw[dep] (p1) -- (score);

\draw[dep] (lem12) -- (p2);
\draw[mot] (p2) -- (ass2);
\draw[mot] (ex68) -- (ass2);
\draw[dep] (ass2) -- (t4);
\draw[dep] (t4) -- (t5);
\draw[dep] (t5) -- (c1);
\draw[dep] (c1) -- (score);

\draw[mot] (ass1) -- (t3);
\draw[mot] (t3) -- (t4);
\draw[dep] (p1) -- (c1);

\draw[dep] (p3) -- (p4);
\draw[dep] (p4) -- (limit);

\draw[dep] (score) -- (t6);
\draw[dep] (score) -- (c3);
\draw[dep] (score) -- (c2);

\draw[dep] (ass3) -- (t6);
\draw[dep] (def1) -- (t6);

\draw[dep] (ass4) -- (t7);
\draw[dep] (def1) -- (c3);
\draw[dep] (t7) -- (c3);

\draw[dep] (l4) -- (l3);
\draw[dep] (l3) -- (c2);

\draw[mot] (ex9) -- (t6);
\draw[mot] (ex10) -- (ass3);
\draw[mot] (ex10) -- (ass4);

\end{tikzpicture}
\caption{
Dependency graph for the formal results. Solid arrows denote
formal use; dashed arrows denote motivation or conceptual support.
The top half contains the score-side chains ($\SO(2)$ on the left,
general orthogonal in the center) and the negative lower-bound
chain on the right. The bottom half contains the value-side and
near-signal preservation chains, all fed by the shared score-side
bounds. Theorem~3 is shown as a conceptual branch; Theorem~4
supplies the TV mixing used by Theorem~5.
}
\label{fig:proof-dependency-graph}
\end{figure}

\begin{table}[p]
\centering
\caption{Role and downstream use of each formal result.}
\label{tab:formal-result-roles}
\scriptsize
\begin{tabular}{lp{48mm}p{48mm}}
\toprule
\textbf{Result} & \textbf{Purpose} & \textbf{Where used} \\
\midrule
\multicolumn{3}{l}{\emph{$\SO(2)$ score-side chain}} \\
Examples 1--5
  & Verify the first-moment/spectral-gap condition for $\SO(2)$, Householder, Givens, $\SO(3)$, and torus rotations
  & Justify the setup before Thm.~1; contrast with identity/RoPE \\
Theorem 1
  & $\SO(2)$ mixing window: accumulated first harmonic decays after a fixed number of steps
  & Supplies the far/decorrelated regime for Thm.~2 \\
Theorem 2
  & Many-block score gap via Hoeffding concentration across $\SO(2)$ blocks
  & Feeds into Prop.~1 \\
Proposition 1
  & Converts score gap into softmax far-mass and far-weight $\ell_2$ bounds
  & Bridge to Thm.~6, Cor.~2, Cor.~3; combined with Thm.~5 in Cor.~1 \\
\midrule
\multicolumn{3}{l}{\emph{General orthogonal score-side chain}} \\
Assumption 1
  & First-moment orthogonal gap $\|\E[M_t]\|_{\mathrm{op}} < 1$
  & Used by Thm.~3 \\
Assumption 2
  & Component-TV mixing: $L^2$ smoothing and spectral gap for the random walk
  & Used by Thm.~4 \\
Lemmas 1--2
  & Spread-out support criterion and analytic pushforward smoothing
  & Used by Prop.~2 and Examples 6--8 \\
Proposition 2
  & Householder reflections satisfy component-TV mixing
  & Justifies relevance to PaTH-like transformations \\
Examples 6--8
  & Heat-kernel, matrix-Fisher, and Lie-algebra families satisfy Assump.~2
  & Support plausibility of Thms.~4--5 \\
Theorem 3
  & First-moment decay for orthogonal products: $\|\E[P_n]\|_{\mathrm{op}} \le \beta^n$
  & Conceptual generalization of Thm.~1; not used by Thm.~5 \\
Theorem 4
  & TV convergence of accumulated products to Haar measure
  & Powers Thm.~5 \\
Theorem 5
  & General orthogonal score separation via TV mixing and Lévy concentration
  & Used by Cor.~1 \\
Corollary 1
  & Packages Thm.~5 + Prop.~1 into general orthogonal softmax bounds
  & Clean general-orthogonal score-side conclusion \\
\midrule
\multicolumn{3}{l}{\emph{Negative / degradation chain}} \\
Proposition 3
  & Full softmax with bounded logits assigns asymptotically all mass to a growing far set
  & Used by Prop.~4 \\
Proposition 4
  & Near-signal upper bound: far-mass leakage forces near-signal degradation
  & Formal basis for Sec.~4 \\
\midrule
\multicolumn{3}{l}{\emph{Value-side chain}} \\
Example 9
  & Position-only value rotations can remain coherent at some frequencies
  & Motivates content-dependent value analysis \\
Definition 1
  & Shared-background far-value model $v_j = c_0 G_{\mathrm{com}} + w_j$
  & Used by Thms.~6--7 and Cor.~3 \\
Assumption 3
  & Same-path route-phase score/value coupling
  & Used by Thm.~6 \\
Theorem 6
  & Same-path $\SO(2)$ far-covariance bound via leave-one-block decoupling
  & Main value-side result for Q/K/V \\
Assumption 4
  & Separate V-path: value phases independent of score-side selection
  & Used by Thm.~7 \\
Theorem 7
  & Separate-path value-covariance bound
  & Used by Cor.~3 \\
Corollary 3
  & Packages Thm.~7 with score-side bounds into a spectral covariance bound
  & Clean value-side conclusion \\
Example 10
  & Adaptive weight selection can defeat phasor cancellation
  & Justifies independence conditions in Assumps.~3--4 \\
\midrule
\multicolumn{3}{l}{\emph{Near-signal preservation chain}} \\
Lemma 3
  & Near-route closeness to identity gives near-signal gain
  & Used by Cor.~2 \\
Corollary 2
  & Combines far-mass control with near alignment for a quantitative gain bound
  & Final positive near-signal statement \\
Lemma 4
  & Sub-Gaussian short-route angles imply the closeness condition of Lemma~3
  & Probabilistic sufficient condition for Lemma~3 \\
\bottomrule
\end{tabular}
\end{table}

\section{Score-Side, Value-Side, and Signal-Preservation Proofs}
\label{app:proofs-setup}
\label{app:formal-main-results}

This section gives the formal statements and proofs for the score-side
($\SO(2)$ and general orthogonal), far-mass lower bound, value-side,
and signal-preservation results stated in the main text.

\subsection{Verification of Examples~\ref{ex:so2-gap}--\ref{ex:full-so-gap}}

The following examples verify the first-moment gap used for
finite-window decay (Theorem~\ref{thm:finite-transport-window}).

\begin{proof}[Example~\ref{ex:so2-gap}]
$|\E[e^{i\theta}]|
= \bigl|\frac{1}{2a}\int_{-a}^{a} e^{i\theta}\,d\theta\bigr|
= |\sin(a)/a| < 1$ for $a > 0$.
\end{proof}

\begin{proof}[Example~\ref{ex:householder-gap}]
$\E[H_t]
= \E[I - 2v_t v_t^\top]
= I - 2\E[v_t v_t^\top]
= I - 2\Sigma_\nu$.
The eigenvalues of $\Sigma_\nu$ lie in $[0,1]$ (it is the
second-moment matrix of unit vectors, so it is positive
semidefinite with $\tr(\Sigma_\nu) = 1$), and therefore
$\|I - 2\Sigma_\nu\|_{\mathrm{op}}
= \max_j |1 - 2\mu_j|$, where $\mu_j$ are the eigenvalues
of~$\Sigma_\nu$. This is less than one if and only if every
$\mu_j \in (0,1)$, equivalently, $\nu$ is not supported on a
proper subspace.
\end{proof}

\begin{proof}[Example~\ref{ex:givens-gap}]
For a fixed plane $(p,q)$ and $\theta \sim \mathrm{Uniform}[-a,a]$,
$\E[G_{pq}(\theta)]
= I + (\tfrac{\sin a}{a} - 1)(e_p e_p^\top + e_q e_q^\top)$,
since $\E[\cos\theta] = \sin(a)/a$ and $\E[\sin\theta] = 0$.
Averaging over all $\binom{d}{2}$ planes: each basis vector~$e_j$
appears in $d{-}1$ pairs, so
$\E_{p,q}[e_p e_p^\top + e_q e_q^\top]
= \frac{d-1}{\binom{d}{2}}\,I = \frac{2}{d}\,I$.
Therefore
$\E[M_t] = \bigl(1 - \tfrac{2(1-\sin(a)/a)}{d}\bigr)\,I$.
\end{proof}

\begin{proof}[Example~\ref{ex:so3-gap}]
By the Rodrigues formula, a rotation by angle~$\theta$ around
axis~$\hat{n}$ is
$R = \cos\theta\,I_3
+ (1-\cos\theta)\,\hat{n}\hat{n}^\top
+ \sin\theta\,[\hat{n}]_\times$.
For $\hat{n}$ uniform on $S^2$:
$\E[\hat{n}\hat{n}^\top] = \tfrac{1}{3}\,I_3$ and
$\E[[\hat{n}]_\times] = 0$.
Hence $\E[R] = \tfrac{2\cos\theta+1}{3}\,I_3$.
\end{proof}

\begin{proof}[Example~\ref{ex:full-so-gap}]
Since $d$ is even and $A$ is nonsingular, $A$ has eigenvalues
$\pm i\lambda_1,\ldots,\pm i\lambda_{d/2}$ with all
$\lambda_j > 0$. In the eigenbasis, $\exp(\epsilon A)$ is
block-diagonal with $2\times 2$ rotation blocks of angle
$\epsilon\lambda_j$.
Applying Example~\ref{ex:so2-gap} per block:
$\|\E[\exp(\epsilon A)]\|_{\mathrm{op}}
= \max_j |\sin(a\lambda_j)/(a\lambda_j)| < 1$.
\end{proof}

\subsection{Finite Transport Window}
\begin{theorem}[Finite Stable Far Regime from Content-Dependent Accumulated Rotations]
\label{thm:finite-transport-window}
\label{thm:finite-window}
Fix a query position~$i$ and consider route lengths
$n = i - j \ge 1$.
In one rotation block, define the step phasor
\[
  H_t = \exp\{-i\psi_t\},
  \qquad \psi_t = \omega + g(c_t),
\]
using the value-side sign convention for the transport definition
above.
Assume the step phasors along the route are independent and
satisfy the uniform first-harmonic bound
\[
  |\E H_t| \le \beta < 1
\]
for every position~$t$. For any tolerance
$\varepsilon_{\rm mix}\in(0,1)$, if $\beta=0$, set
$w_{\varepsilon_{\rm mix}}=1$; every route of length at least
one is already mixed. Otherwise assume $0<\beta<1$ and define
\[
  w_{\varepsilon_{\rm mix}}
  =
  \left\lceil
    \frac{\log(1/\varepsilon_{\rm mix})}{\log(1/\beta)}
  \right\rceil .
\]
Let
\[
  e^{i\Theta_n}
  = \prod_{t=i-n}^{i-1} H_t.
\]
Then every route of length $n\ge w_{\varepsilon_{\rm mix}}$ is in the
far/decorrelated first-harmonic regime:
\[
  \left|\E e^{i\Theta_n}\right|\le \varepsilon_{\rm mix} .
\]
Routes with $n<w_{\varepsilon_{\rm mix}}$ are near routes.
The boundary $w_{\varepsilon_{\rm mix}}$ depends only on the
uniform first-harmonic gap and the tolerance, not on total context
length.  The i.i.d.\ content model is a special stationary
case.
\end{theorem}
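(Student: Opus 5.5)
The plan is to factor the expectation using independence and then convert the geometric decay into the stated window. First, the statement defines $e^{i\Theta_n}$ as the product $\prod_{t=i-n}^{i-1} H_t$ of the step phasors along the route. These phasors are assumed independent and bounded (each has modulus one), so all expectations exist. The expectation of a product of independent complex random variables is the product of the expectations, which gives
\[
  \left|\E e^{i\Theta_n}\right|
  = \prod_{t=i-n}^{i-1} \left|\E H_t\right|
  \le \beta^{n},
\]
using the uniform first-harmonic bound $|\E H_t|\le\beta$ for every position $t$ on the route. Stationarity is not needed here, only independence together with the uniform bound. This is the scalar (single-block) case of the first-moment decay $\|\E[P_n]\|_{\op}\le\beta^n$ described in Section~\ref{sec:score-control}.

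Second, I will split into the two cases the statement distinguishes. If $\beta=0$, then $\beta^n=0\le\varepsilon_{\rm mix}$ for every $n\ge 1$, so $w_{\varepsilon_{\rm mix}}=1$ works. If $0<\beta<1$, then $\log(1/\beta)>0$, and because $\beta<1$ the map $n\mapsto\beta^n$ is nonincreasing. Suppose $n\ge w_{\varepsilon_{\rm mix}}$. Since $w_{\varepsilon_{\rm mix}}$ is the ceiling of $\log(1/\varepsilon_{\rm mix})/\log(1/\beta)$, this gives $n\log(1/\beta)\ge\log(1/\varepsilon_{\rm mix})$, that is, $\beta^n\le\varepsilon_{\rm mix}$. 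Combining this with the first step yields $|\E e^{i\Theta_n}|\le\varepsilon_{\rm mix}$.

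Since $\varepsilon_{\rm mix}\in(0,1)$, the logarithm $\log(1/\varepsilon_{\rm mix})$ is positive, so $w_{\varepsilon_{\rm mix}}\ge 1$ and the window is a valid route length. Its definition involves only $\beta$ and $\varepsilon_{\rm mix}$, never $i$ or the total context length $L$, which gives the length-independence claim directly. Calling routes with $n<w_{\varepsilon_{\rm mix}}$ near routes is a definition and needs no proof. For the closing remark, the i.i.d. content model (tokens drawn i.i.d. and $\psi_t=\omega+g(c_t)$) gives independent phasors with common mean $\E H_t=e^{-i\omega}\E e^{-ig(c)}$, so the uniform bound reduces to the single condition $|\E e^{-ig(c)}|\le\beta$.

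There is no real obstacle. The only point needing care is the factorization step: it requires joint independence of the $H_t$ along the route, not just pairwise independence. I would also note explicitly that the sign convention in the phasor does not matter, since $|\E \overline{H_t}|=|\E H_t|$.
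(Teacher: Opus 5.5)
Your proposal is correct and follows the paper's proof: independence factors $\E e^{i\Theta_n}$ into $\prod_t \E H_t$, giving the bound $\beta^n$. You then treat $\beta=0$ separately and, for $0<\beta<1$, invert the ceiling definition of $w_{\varepsilon_{\rm mix}}$ to get $\beta^n\le\varepsilon_{\rm mix}$.
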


\begin{proof}
By independence of the step phasors,
\[
  \E e^{i\Theta_n}
  = \prod_{t=i-n}^{i-1} \E H_t.
\]
Taking absolute values gives
\[
  \left|\E e^{i\Theta_n}\right|
  \le \prod_{t=i-n}^{i-1} |\E H_t|
  \le \beta^n.
\]
If $\beta=0$, then $w_{\varepsilon_{\rm mix}}=1$ and
$\beta^n=0$ for every route length $n\ge1$.
If $n\ge w_{\varepsilon_{\rm mix}}$, then
$\beta^n \le \varepsilon_{\rm mix}$.
Thus every route longer than the window has first-harmonic
magnitude at most~$\varepsilon_{\rm mix}$.  Since $w_{\varepsilon_{\rm mix}}$
depends only on~$\beta$ and~$\varepsilon_{\rm mix}$, the boundary is
independent of total context length.
\end{proof}

\inlinehead{Finite-harmonic extension.}
The same argument bounds any fixed finite set of harmonics. If
$|\E H_t^m|\le \beta_m<1$ uniformly for
$m=1,\ldots,q$, then
\[
  |\E e^{im\Theta_n}|\le \beta_m^n .
\]
A single finite window bounds all harmonics
$m=1,\ldots,q$ by taking the maximum of the corresponding
windows.

\subsection{Score-Side Gap}
\begin{theorem}[Many-Block Score Gap]
\label{thm:score-separation}
Fix a query position~$i$. Suppose there is one near target
route $r \to i$ such that, for every block~$b$,
\begin{equation}\label{eq:near-phase-bound}
  |\Theta_{r \to i, b}| \le \delta,
  \qquad 0 \le \delta < \pi/2.
\end{equation}
Then
\begin{equation}\label{eq:near-score-lb}
  S_{r \to i} \ge \cos\delta.
\end{equation}
Now let $j \to i$ be a far route. Assume that the block
phases $\{\Theta_{j \to i, b}\}_{b=1}^{B}$ are independent
across~$b$, and that each block is in the
$\varepsilon_{\rm sc}$-far first-harmonic regime:
\begin{equation}\label{eq:far-block-condition}
  \bigl|\E e^{i\Theta_{j \to i, b}}\bigr| \le \varepsilon_{\rm sc}.
\end{equation}
The block phases need not be identically distributed; the
proof only uses independence across blocks and the uniform
first-harmonic bound.
Then, for every score threshold $s > \varepsilon_{\rm sc}$,
\begin{equation}\label{eq:hoeffding-score}
  \Pr\bigl[S_{j \to i} \ge s\bigr]
  \le \exp\!\biggl(-\frac{B(s - \varepsilon_{\rm sc})^2}{2}\biggr).
\end{equation}
If $\Dset_i^{\rm cand}$ is a finite candidate set of far
routes with $|\Dset_i^{\rm cand}|=M$, then
\begin{equation}\label{eq:union-score}
  \Pr\Bigl[\max_{j \in \Dset_i^{\rm cand}} S_{j \to i} \ge s\Bigr]
  \le M\exp\!\biggl(-\frac{B(s - \varepsilon_{\rm sc})^2}{2}\biggr).
\end{equation}
Consequently, if $s < \cos\delta$, then with probability at
least
\[
  1 - M\exp\!\biggl(-\frac{B(s - \varepsilon_{\rm sc})^2}{2}\biggr),
\]
the near route has a normalized score gap of at least
$\cos\delta - s$ over every route in the finite far
candidate set.
\end{theorem}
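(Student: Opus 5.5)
The near-route bound is a direct calculation, and the far-route tail comes from Hoeffding's inequality applied to the block cosines. For the near bound, cosine is even and decreasing on $[0,\pi]$, so \eqref{eq:near-phase-bound} gives $\cos\Theta_{r\to i,b}\ge\cos\delta$ for every block $b$. Averaging over the $B$ blocks in \eqref{eq:block-score} then yields \eqref{eq:near-score-lb}. This step needs no probability at all.

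For the far route, define $X_b=\cos\Theta_{j\to i,b}$, so that $S_{j\to i}=\frac1B\sum_b X_b$. The $X_b$ are independent across blocks by assumption and take values in $[-1,1]$. The mean is controlled by the first harmonic: $\E X_b=\mathrm{Re}\,\E e^{i\Theta_{j\to i,b}}$, so \eqref{eq:far-block-condition} gives $|\E X_b|\le\varepsilon_{\rm sc}$. Hence $\E S_{j\to i}\le\varepsilon_{\rm sc}$, and for $s>\varepsilon_{\rm sc}$
\[
\Pr[S_{j\to i}\ge s]\le\Pr\bigl[S_{j\to i}-\E S_{j\to i}\ge s-\varepsilon_{\rm sc}\bigr].
\]
Hoeffding's inequality for independent variables with range length $2$ bounds the right-hand side by $\exp\bigl(-2B^2(s-\varepsilon_{\rm sc})^2/(B\cdot 4)\bigr)=\exp\bigl(-B(s-\varepsilon_{\rm sc})^2/2\bigr)$, which is \eqref{eq:hoeffding-score}. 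Only independence and boundedness are used, so non-identically distributed blocks are allowed, as the statement asserts.

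The remaining claims follow from a union bound. Applying \eqref{eq:hoeffding-score} to each of the $M$ routes in $\Dset_i^{\rm cand}$ gives \eqref{eq:union-score}; no independence across routes is needed, which matters here because nested routes share steps. On the complementary event, every far score satisfies $S_{j\to i}<s$. Combined with $S_{r\to i}\ge\cos\delta$, this gives a gap $S_{r\to i}-S_{j\to i}>\cos\delta-s$ whenever $s<\cos\delta$.

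The main obstacle is not technical but one of careful bookkeeping. The first-harmonic bound controls only the real part of the mean, so I must check that it bounds $\E\cos$, not merely the modulus. I must also confirm that the Hoeffding constant for range $[-1,1]$ produces exactly the stated exponent $B(s-\varepsilon_{\rm sc})^2/2$.
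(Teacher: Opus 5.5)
Your proposal is correct and follows the paper's proof step for step: the deterministic near bound via $\cos\Theta_{r\to i,b}\ge\cos\delta$, Hoeffding on the independent block cosines $X_b\in[-1,1]$, and a union bound over the $M$ far candidates. For the mean, note that $\E X_b=\mathrm{Re}\,\E e^{i\Theta_{j\to i,b}}\le|\E e^{i\Theta_{j\to i,b}}|\le\varepsilon_{\rm sc}$, so the modulus bound dominates the real part (not the reverse, as your last paragraph phrases it); your exponent check $2B^2(s-\varepsilon_{\rm sc})^2/(4B)=B(s-\varepsilon_{\rm sc})^2/2$ is right, and your remark that the union bound needs no independence across nested routes makes explicit a point the paper leaves implicit.
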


\begin{proof}
The near-route bound follows immediately from
$|\Theta_{r \to i, b}| \le \delta$, since
$\cos\Theta_{r \to i, b} \ge \cos\delta$ for every block.
Averaging over blocks gives $S_{r \to i} \ge \cos\delta$.

For a far route, define $X_b = \cos\Theta_{j \to i, b}$.
Then $X_b \in [-1, 1]$, and
condition~\eqref{eq:far-block-condition} implies
\[
  \E X_b = \mathrm{Re}\,\E e^{i\Theta_{j \to i, b}}
  \le \varepsilon_{\rm sc}.
\]
By independence across blocks,
$S_{j \to i} = \frac{1}{B}\sum_{b=1}^{B} X_b$.
Hoeffding's inequality~\cite{hoeffding1963probability} gives
\[
  \Pr\bigl[S_{j \to i} - \E S_{j \to i} \ge s - \varepsilon_{\rm sc}
  \bigr]
  \le \exp\!\biggl(-\frac{B(s - \varepsilon_{\rm sc})^2}{2}\biggr),
\]
which proves~\eqref{eq:hoeffding-score}. The
maximum-over-$M$-routes bound~\eqref{eq:union-score} follows
by the union bound. The score-gap claim follows by
combining~\eqref{eq:near-score-lb}
and~\eqref{eq:union-score}.
\end{proof}

\begin{remark}[Block independence]
\label{rem:block-independence}
Theorem~\ref{thm:score-separation} requires the block
phases $\{\Theta_{j \to i, b}\}_{b=1}^{B}$ to be independent
across blocks for each far route. This holds by construction in
the stylized random-rotation model. Learned token-dependent
rotations can create cross-block correlations through shared
content embeddings; verifying block independence (or a
suitable decorrelation substitute) for learned models is an
empirical matter.
\end{remark}

\subsection{Score Gap and Softmax Scale}
\begin{proposition}[Score Gap and Softmax Scale Give Far-Mass and Far-Weight Bounds]
\label{prop:score-budget}
\label{prop:score-scaling}
Let $\Sset$ be a near target-bearing set with
$|\Sset| = K \ge 1$, and let $\Dset$ be a far-regime candidate
set with $|\Dset| = M \le M_{\max}$ and $M_{\max}\ge1$.
Suppose the normalized
scores satisfy
$S_{j \to i} \ge c_{\rm near}$ for every $j \in \Sset$,
and
$S_{k \to i} \le c_{\rm far}$ for every $k \in \Dset$,
with score gap
$g = c_{\rm near} - c_{\rm far} > 0$.
Let the softmax logits be
$\ell_{j \to i} = \lambda\, S_{j \to i}$, $\lambda > 0$,
and let $\alpha_j$ be the resulting full-softmax weights over
$\Sset \cup \Dset$. Fix target bound levels
$\rho_\star \in (0,1)$, $a_\star > 0$.
If
\begin{equation}\label{eq:score-scale-condition}
  \lambda g
  \ge \max\!\left\{0,\;
    \log\!\left(
      \frac{M_{\max}(1 - \rho_\star)}{K \rho_\star}
    \right),\;
    \log\!\left(
      \frac{\sqrt{M_{\max}}}{K a_\star}
    \right)
  \right\},
\end{equation}
then
$\rho_\Dset := \sum_{k \in \Dset} \alpha_k \le \rho_\star$
and
$\|\bm{\alpha}_\Dset\|_2 \le a_\star$.

For the one-near-route specialization of
Theorem~\ref{thm:score-separation}, take $K = 1$,
$c_{\rm near} = \cos\delta$, and $c_{\rm far} = s$.
\end{proposition}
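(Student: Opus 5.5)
The plan is to bound each far weight individually by comparing it with the near mass, and then sum. Write $Z=\sum_{m\in\Sset\cup\Dset}e^{\lambda S_{m\to i}}$. Since every near score is at least $c_{\rm near}$, we have $Z\ge Z_\Sset:=\sum_{j\in\Sset}e^{\lambda S_{j\to i}}\ge K e^{\lambda c_{\rm near}}$. Since every far score is at most $c_{\rm far}$, each far weight satisfies
\[
  \alpha_k=\frac{e^{\lambda S_{k\to i}}}{Z}\le \frac{e^{\lambda c_{\rm far}}}{K e^{\lambda c_{\rm near}}}=\frac{e^{-\lambda g}}{K},\qquad k\in\Dset .
\]
This single-weight bound drives both conclusions.

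\textbf{Far-weight $\ell_2$ bound.} From the single-weight bound, $\|\bm{\alpha}_\Dset\|_2\le \sqrt{M}\,e^{-\lambda g}/K\le \sqrt{M_{\max}}\,e^{-\lambda g}/K$. The third term in~\eqref{eq:score-scale-condition} gives $e^{-\lambda g}\le K a_\star/\sqrt{M_{\max}}$, so $\|\bm{\alpha}_\Dset\|_2\le a_\star$.

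\textbf{Far-mass bound.} Here I would not use the crude bound $\rho_\Dset\le M e^{-\lambda g}/K$, because the target constant is $(1-\rho_\star)/\rho_\star$ rather than $1/\rho_\star$. Instead, write $Z_\Dset=\sum_{k\in\Dset}e^{\lambda S_{k\to i}}\le M e^{\lambda c_{\rm far}}$ and $\rho_\Dset=Z_\Dset/(Z_\Sset+Z_\Dset)$. This expression is increasing in $Z_\Dset$ and decreasing in $Z_\Sset$, so
\[
  \rho_\Dset\le \frac{M e^{\lambda c_{\rm far}}}{K e^{\lambda c_{\rm near}}+M e^{\lambda c_{\rm far}}}=\frac{1}{1+(K/M)e^{\lambda g}} .
\]
The condition $\lambda g\ge\log\bigl(M_{\max}(1-\rho_\star)/(K\rho_\star)\bigr)$ gives $(K/M)e^{\lambda g}\ge (K/M_{\max})e^{\lambda g}\ge(1-\rho_\star)/\rho_\star$. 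Hence $\rho_\Dset\le 1/(1+(1-\rho_\star)/\rho_\star)=\rho_\star$.

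\textbf{Remaining details.} The $\max\{0,\cdot\}$ term only guarantees that $\lambda g\ge0$, which is automatic since $\lambda>0$ and $g>0$, so it plays no substantive role. The degenerate case $M=0$ is trivial. For the specialization, take $K=1$, $c_{\rm near}=\cos\delta$ and $c_{\rm far}=s$, which are exactly the near lower bound~\eqref{eq:near-score-lb} and the high-probability far upper bound of Theorem~\ref{thm:score-separation}; on that event the hypotheses hold. The only step that needs care is the monotonicity argument for the sharp $\rho_\star$ constant; the rest is routine.
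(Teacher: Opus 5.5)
Your proof is correct and follows essentially the same route as the paper. Both use the per-weight bound $\alpha_k \le e^{-\lambda g}/K$ to get the $\ell_2$ bound, and the sharp estimate $\rho_\Dset \le M e^{-\lambda g}/(K + M e^{-\lambda g})$ together with $M \le M_{\max}$ to get the far-mass bound. Your explicit monotonicity argument for $Z_\Dset/(Z_\Sset + Z_\Dset)$ simply spells out the step the paper compresses into ``the far-to-near denominator ratio is at most $M e^{-\Delta}/K$.''
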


\begin{proof}
Let $\Delta = \lambda g$.
Since $\ell_j-\ell_k\ge\Delta$ for every
$j\in\Sset$, $k\in\Dset$, the far-to-near denominator ratio is at
most $M e^{-\Delta}/K$. Hence
\[
  \rho_\Dset
  \le \frac{M e^{-\Delta}}{K + M e^{-\Delta}}
  \le \frac{M_{\max} e^{-\Delta}}{K + M_{\max} e^{-\Delta}}.
\]
The first condition in~\eqref{eq:score-scale-condition} gives
$\rho_\Dset \le \rho_\star$. Also, for each $k\in\Dset$,
\[
  \alpha_k
  \le \frac{e^{\lambda c_{\rm far}}}
  {K e^{\lambda c_{\rm near}}}
  = \frac{e^{-\Delta}}{K}.
\]
Thus
\[
  \|\bm{\alpha}_\Dset\|_2
\le \sqrt{M}\, e^{-\Delta}/K
\le \sqrt{M_{\max}}\, e^{-\Delta}/K
\le a_\star,
\]
where the last inequality is the second condition
in~\eqref{eq:score-scale-condition}.
\end{proof}

\subsection{Score-Side Decoherence for General Orthogonal Products}
\label{app:householder-decoherence}

This subsection proves score-side decoherence for accumulated
products of i.i.d.\ random orthogonal matrices. The result applies
to any step distribution on $O(d)$ satisfying a first-moment gap
and a TV-mixing condition; idealized Householder-step
distributions, including PaTH-like transformations, satisfy these
conditions. The logical structure parallels the
$\SO(2)$ proof: spectral gap $\to$ mixing window $\to$
concentration $\to$ score gap $\to$ softmax scaling. The algebraic
machinery differs: $\SO(2)$ uses scalar Fourier analysis and
Hoeffding concentration on $B = d/2$ independent blocks, while the
general case uses random walks on $O(d)$ and Lévy concentration on
$S^{d-1}$, giving a slightly tighter concentration rate of
$(d{-}1)/2$ versus~$d/4$.

The proofs below cover three determinant cases.
If all steps have determinant~$+1$, all products lie
in~$\SO(d)$ and converge to Haar on~$\SO(d)$.
If all steps have fixed determinant~$-1$ (as in PaTH's
Householder reflections), $\det(P_n) = (-1)^n$ and the
accumulated product is confined to a single determinant component
of~$O(d)$; the TV convergence target is Haar measure on that
component.
If the step law assigns positive probability to both determinant
components (mixed sign), the products visit both components and
converge to Haar on full~$O(d)$.
In all three cases, the score-side conclusion is the same,
because every component of~$O(d)$ acts transitively on the
sphere.

\begin{assumption}[First-Moment Orthogonal Gap]
\label{ass:householder-gap}
Let $M_t \in O(d)$ be i.i.d.\ random orthogonal matrices. Define
the first-moment parameter
\[
  \beta
  \;=\; \bigl\|\E[M_t]\bigr\|_{\op}.
\]
Assume $\beta < 1$.
\end{assumption}

\begin{remark}[Householder instance]
For Householder reflections $H_t = I - 2v_t v_t^\top$ with
$v_t \sim \nu$ on $S^{d-1}$,
$\beta = \|I - 2\Sigma_\nu\|_{\op}$ where
$\Sigma_\nu = \E_\nu[vv^\top]$. The condition $\beta < 1$ holds if
and only if every eigenvalue of $\Sigma_\nu$ lies in the open
interval $(0,1)$, i.e., the support of~$\nu$ is not contained in a
proper subspace of~$\R^d$. This is the $O(d)$ analogue of the
first-harmonic gap $|\E[e^{ig(X)}]| \le \beta < 1$ used in the
$\SO(2)$ case (Theorem~\ref{thm:finite-transport-window}).
\end{remark}

\begin{remark}
\label{rem:first-moment-not-tv}
Assumption~\ref{ass:householder-gap} suffices for first-moment
decay (Theorem~\ref{thm:householder-mixing}) but not for
total-variation convergence. For example, if Householder
normals~$\nu$ are supported on the
coordinate vectors $e_1,\ldots,e_d$, then $\beta = |1 - 2/d| < 1$
for $d > 2$, yet the walk only produces diagonal sign-flip
matrices and cannot converge to Haar measure on any component
of~$O(d)$. Total-variation mixing requires the additional
condition in Assumption~\ref{ass:householder-tv-mixing}.
\end{remark}

\begin{assumption}[Component TV Mixing]
\label{ass:householder-tv-mixing}
Let $\mu$ be the law of $M_t$.
\begin{itemize}
\item[\textbf{(i)}] \textbf{Fixed determinant $+1$.}
If $M_t \in \SO(d)$ a.s., assume that $\mu$ itself satisfies
an $L^2$-smoothing and spectral-gap condition on~$\SO(d)$:
there exist an integer $r \ge 1$ and a constant
$\rho \in (0,1)$ such that $\mu^{\ast r}$ has a density
$h_r \in L^2(\SO(d))$ with respect to Haar measure
on~$\SO(d)$, and the Markov operator
$T_\mu f(x) = \int_{\SO(d)} f(xg)\, d\mu(g)$
satisfies
$\|T_\mu f\|_{L^2(\SO(d))}
\le \rho\, \|f\|_{L^2(\SO(d))}$
for every mean-zero $f \in L^2(\SO(d))$.

\item[\textbf{(ii)}] \textbf{Fixed determinant $-1$.}
If all steps share $\det M_t = -1$ (e.g., Householder
reflections), let $\mu_2 = \mu \ast \mu$ be the law of the
even-step product $M_1 M_2 \in \SO(d)$. Assume that $\mu_2$
satisfies the $L^2$-smoothing and spectral-gap condition
on~$\SO(d)$ as in~(i), with the Markov operator
\[
  T_{\mu_2} f(x)
  \;=\; \int_{\SO(d)} f(xg)\, d\mu_2(g).
\]

\item[\textbf{(iii)}] \textbf{Mixed sign.}
If $\mu$ assigns positive probability to both
$\SO(d)$ and $O^-(d)$, assume that $\mu$ satisfies
the $L^2$-smoothing and spectral-gap condition on~$O(d)$:
there exist $r \ge 1$ and $\rho \in (0,1)$ such that
$\mu^{\ast r}$ has a density
$h_r \in L^2(O(d))$ with respect to Haar measure
on~$O(d)$, and
$T_\mu f(x) = \int_{O(d)} f(xg)\, d\mu(g)$
satisfies
$\|T_\mu f\|_{L^2(O(d))}
\le \rho\, \|f\|_{L^2(O(d))}$
for every mean-zero $f \in L^2(O(d))$.
\end{itemize}
\end{assumption}

The following two lemmas are used by both the Householder
proposition and the heat-kernel example below.

\begin{lemma}[Spread-out support criterion]
\label{lem:doeblin}
Let $\eta$ be a probability measure on $\SO(d)$ with an $L^2$
Haar density: $d\eta(g) = h(g)\,dg$, $h \in L^2(\SO(d))$.
Suppose $I \in \mathrm{supp}(\eta)$ and $\mathrm{supp}(\eta)$
generates~$\SO(d)$. Then
$\|T_\eta\|_{L^2_0 \to L^2_0} < 1$,
and $\eta$ satisfies Assumption~\ref{ass:householder-tv-mixing}
with $r = 1$.

Moreover, if $h(g) \ge m > 0$ for Haar-a.e.\ $g$, the explicit
bound $\|T_\eta\|_{L^2_0 \to L^2_0} \le 1 - m$ holds.
\end{lemma}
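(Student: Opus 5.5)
The plan is to view $T_\eta$ as a right-convolution operator on the compact group $\SO(d)$ and to use compactness to turn "norm $<1$" into "no mean-zero function is fixed up to norm". Three facts set this up. First, $T_\eta$ is a contraction on $L^2(\SO(d))$: by Cauchy--Schwarz (Jensen) against the probability measure $\eta$ and right-invariance of Haar measure,
\[
\|T_\eta f\|_2^2=\int\Bigl|\int f(xg)\,d\eta(g)\Bigr|^2dx\le\int\!\!\int |f(xg)|^2\,d\eta(g)\,dx=\|f\|_2^2 .
\]
Second, $T_\eta$ preserves constants and its adjoint $T_{\check\eta}$ does too, so $L^2_0$ is invariant and $T_\eta$ restricts to it. Third, since $d\eta=h\,dg$ with $h\in L^2$, we have $T_\eta f(x)=\int f(y)\,h(x^{-1}y)\,dy$. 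This is an integral operator whose kernel has Hilbert--Schmidt norm $\|h\|_2<\infty$, so $T_\eta$ is compact.

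The main step is a contradiction argument. Suppose $\|T_\eta\|_{L^2_0\to L^2_0}=1$. Then $T_\eta^*T_\eta$ is compact, self-adjoint and positive on $L^2_0$, so its top spectral value $1$ is an eigenvalue. Hence there is a unit $f\in L^2_0$ with $\|T_\eta f\|_2=\|f\|_2$. Equality in the Jensen step above forces the following: for a.e.\ $x$, $g\mapsto f(xg)$ is $\eta$-a.e.\ equal to a constant $c(x)$. Equivalently, $\|R_gf-R_{g'}f\|_2=0$ for $\eta\otimes\eta$-a.e.\ $(g,g')$, where $R_gf(x)=f(xg)$.

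Next I would upgrade this to invariance under a closed subgroup. The map $g\mapsto R_gf$ is continuous into $L^2$, so the identity $R_gf=R_{g'}f$ extends from an $\eta\otimes\eta$-full set to all $g,g'\in\mathrm{supp}(\eta)$. Taking $g'=I$, which is allowed because $I\in\mathrm{supp}(\eta)$, gives $R_gf=f$ for every $g$ in the support. The stabilizer $\{g: R_gf=f\}$ is a closed subgroup by the same continuity. It contains $\mathrm{supp}(\eta)$, which generates $\SO(d)$ (as a closed subgroup, which is the sense I take "generates"). So the stabilizer is all of $\SO(d)$, and $f$ is a.e.\ constant. Being mean-zero, $f=0$, contradicting $\|f\|_2=1$. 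Therefore $\|T_\eta\|_{L^2_0}<1$. The $L^2$-density requirement of Assumption~\ref{ass:householder-tv-mixing}(i) holds with $r=1$ by hypothesis, which gives the first claim. The delicate point I expect is precisely this passage from an a.e.\ equality statement to genuine invariance under the generated group. The continuity of translation in $L^2$ and the closedness of the stabilizer are what make it rigorous, and I would write that part carefully.

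For the explicit bound, suppose $h\ge m>0$; note $m\le\int h=1$. If $m=1$, then $h=1$ a.e., so $T_\eta$ is the Haar projection, which vanishes on $L^2_0$, and the bound holds. If $m<1$, decompose $\eta=m\,\mathrm{Haar}+(1-m)\nu$, where $\nu$ has density $(h-m)/(1-m)\ge0$ and is a probability measure. Then $T_\eta=m\,\Pi+(1-m)T_\nu$, where $\Pi f=\int f$ is the projection onto constants. $\Pi$ annihilates $L^2_0$, and $T_\nu$ is a contraction by the same Jensen argument. Hence $\|T_\eta\|_{L^2_0\to L^2_0}\le 1-m$.
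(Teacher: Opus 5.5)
Your proof is correct and follows the paper's argument essentially step for step. Both use Hilbert--Schmidt compactness of $T_\eta$ and attainment of the norm on $L^2_0$. Both use the equality case of the averaging inequality, extended to $\mathrm{supp}(\eta)\times\mathrm{supp}(\eta)$ by strong continuity of $g\mapsto R_g f$, followed by invariance under the generated group. For the explicit bound, both use the Doeblin splitting $\eta = m\,\mathrm{Haar} + (1-m)\nu$. Your closed-stabilizer argument, which covers topological generation, and your separate treatment of $m=1$, where the paper's normalization by $1-m$ breaks down, are small tightenings of that same proof rather than a different route.
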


\begin{proof}
Since $h \in L^2(\SO(d))$, the operator $T_\eta$ acts on $L^2(G)$
with kernel $K(x,y) = h(x^{-1}y)$, and
$\|K\|_{L^2(G \times G)}^2 = \|h\|_2^2 < \infty$,
so $T_\eta$ is Hilbert--Schmidt, hence compact.

Assume for contradiction that
$\|T_\eta\|_{L^2_0 \to L^2_0} = 1$.
Since $T_\eta$ is compact, its norm is attained: there exists
nonzero $f \in L^2_0(G)$ with $\|f\|_2 = 1$ and
$\|T_\eta f\|_2 = 1$. Writing $R_g f(x) = f(xg)$, each $R_g$ is
unitary and $T_\eta f = \int_G R_g f\, d\eta(g)$, so
\[
  1 = \|T_\eta f\|_2
    = \Bigl\|\int_G R_g f\, d\eta(g)\Bigr\|_2
    \le \int_G \|R_g f\|_2\, d\eta(g) = 1.
\]
Equality in the Hilbert-space triangle inequality forces
$R_g f$ to be a common vector for $\eta$-a.e.~$g$. By strong
continuity of $g \mapsto R_g f$, this extends to every
$g \in \mathrm{supp}(\eta)$. Since $I \in \mathrm{supp}(\eta)$,
$R_g f = f$ for all $g \in \mathrm{supp}(\eta)$. Because
$\mathrm{supp}(\eta)$ generates $\SO(d)$, $f$ is
right-invariant under all of $\SO(d)$, hence constant a.e.
But $f \in L^2_0$ forces $f = 0$, contradicting $\|f\|_2 = 1$.

For the explicit bound when $h \ge m > 0$: write
$\eta = m\,\mathrm{Haar} + (1-m)\,\nu'$ where
$d\nu'(g) = (h(g) - m)\,dg/(1-m)$.
For $f \in L^2_0$, Haar-invariance gives
$\int_G f(xg)\,dg = 0$, so
$T_\eta f = (1-m)\,T_{\nu'} f$, and
$\|T_\eta f\|_2 \le (1-m)\|f\|_2$.
\end{proof}

\begin{remark}[Determinant-$(-1)$ variant]
\label{rem:doeblin-det-minus}
If $\mu$ is supported on
$O^-(d) = \{Q \in O(d) : \det Q = -1\}$ with density
$h \in L^2(O^-(d),\eta^-)$ satisfying $h \ge m > 0$ a.e.\
with respect to normalized Haar measure~$\eta^-$ on
$O^-(d)$, then the even-step law $\mu_2 = \mu \ast \mu$ has
density
$h_2(g) = \int_{O^-(d)} h(x)\,h(x^{-1}g)\,d\eta^-(x)
\ge m^2$ on $\SO(d)$. By Cauchy--Schwarz,
$h_2(g) \le \|h\|_2^2$, so
$h_2 \in L^\infty(\SO(d)) \subset L^2(\SO(d))$.
Lemma~\ref{lem:doeblin} then gives
$\|T_{\mu_2}\|_{L^2_0 \to L^2_0} \le 1 - m^2 < 1$.
In particular, the conclusion holds whenever $h$ is bounded
above and below by positive constants.
\end{remark}

\begin{remark}[Mixed-sign variant]
\label{rem:doeblin-mixed}
Lemma~\ref{lem:doeblin} extends directly to~$O(d)$. If $\mu$
is a probability measure on~$O(d)$ with an $L^2$ density~$h$
with respect to Haar measure on~$O(d)$, $I \in \mathrm{supp}(\mu)$,
and $\mathrm{supp}(\mu)$ generates~$O(d)$, then
$T_\mu$ is Hilbert--Schmidt on $L^2(O(d))$ and the same
compactness/triangle-equality argument gives
$\|T_\mu\|_{L^2_0 \to L^2_0} < 1$.
In particular, if $\mu$ assigns positive probability to both
$\SO(d)$ and $O^-(d)$ and has a density bounded below by
$m > 0$ on~$O(d)$, then
$\|T_\mu\|_{L^2_0 \to L^2_0} \le 1 - m < 1$.
\end{remark}

\begin{lemma}[Analytic pushforward]
\label{lem:analytic-pushforward}
Let $M, N$ be compact real-analytic manifolds with
$\dim N = n$, and let $F \colon M \to N$ be real-analytic.
Let $a$ be a smooth density on~$M$. Assume that on every
connected component of~$M$ that intersects
$\mathrm{supp}(a)$, the differential $DF$ has rank~$n$ at
at least one point. Then the pushforward $F_*(a\,dm)$ is
absolutely continuous with respect to smooth volume on~$N$,
with density in $L^p(N)$ for some $p > 1$.
\end{lemma}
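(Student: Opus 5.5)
The plan is to split $M$ into the critical locus of $F$ and its complement, and to compute the pushforward density with the coarea formula. The integrability exponent will come from the fact that a nonnegative real-analytic function has integrable small negative powers.

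\emph{Step 1 (the critical set is null).} Let $m=\dim M$ and define the Jacobian $J_F(x)=\sqrt{\det\bigl(DF(x)DF(x)^\top\bigr)}$, computed in local analytic charts and with respect to the smooth volumes. Then $J_F^2$ is real-analytic, and its zero set is $Z=\{x:\operatorname{rank}DF(x)<n\}$. Consider a component $M_0$ of $M$ that meets $\operatorname{supp}(a)$. By hypothesis $J_F^2$ is not identically zero on $M_0$, so $Z\cap M_0$ is a proper analytic subset. Such a set has measure zero, since an analytic function that is not identically zero on a connected analytic manifold has a null zero set. Components that do not meet $\operatorname{supp}(a)$ contribute nothing. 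Hence $a\,dm$ is carried by $M\setminus Z$.

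\emph{Step 2 (absolute continuity and the density formula).} On $M\setminus Z$ the map $F$ is a submersion. In local charts it is a coordinate projection, so preimages of null sets are null, and together with Step~1 this gives $F_*(a\,dm)\ll dy$. The coarea formula identifies the density as
\[
  \rho(y)=\int_{F^{-1}(y)\setminus Z}\frac{a}{J_F}\,d\mathcal{H}^{m-n}.
\]

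\emph{Step 3 ($L^p$ bound).} I would invoke two facts from real-analytic geometry, applied on the compact manifold $M$.
\begin{itemize}
\item The first is the local integrability of negative powers of analytic functions: there is $\delta>0$ with $\int_M J_F^{-\delta}\,dm<\infty$. This follows from resolution of singularities (Atiyah/Bernstein--Sato); equivalently, it is a Łojasiewicz-type bound on the volume of sublevel sets $\{J_F<t\}\lesssim t^{\delta'}$.
\item The second is a uniform bound $\mathcal{H}^{m-n}(F^{-1}(y))\le C$ on fiber volumes. This holds by finiteness properties of the subanalytic family of fibers, via a Yomdin/Crofton-type bound.
\end{itemize}
Given these, Hölder's inequality on each fiber gives $\rho(y)^p\le C^{p-1}\int_{F^{-1}(y)}a^pJ_F^{-p}\,d\mathcal{H}^{m-n}$. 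Integrating over $N$ and applying the coarea formula in reverse yields
\[
  \int_N\rho^p\,dy\le C^{p-1}\int_M a^pJ_F^{1-p}\,dm .
\]
The right-hand side is finite once $p-1\le\delta$, because $a$ is bounded. So any $p\in(1,1+\delta]$ works.

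The main obstacle is Step~3, specifically justifying the two analytic-geometry inputs: the integrability exponent $\delta$ and the uniform fiber-volume bound. I would first try to cite them as standard results. If the fiber bound proves awkward, I would instead localize with a partition of unity and use analytic resolution to bring $J_F$ into monomial form locally. In monomial form $F$ factors as a composition of a monomial-type map and a submersion, and the $L^p$ estimate of the density can then be checked directly.
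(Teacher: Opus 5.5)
Your argument is correct, and it takes a different route from the paper. The paper proves the lemma in a few lines. It invokes rectilinearization/monomialization of subanalytic maps (Bierstone--Milman) to assert that the pushforward density is locally a finite sum of monomial-type singularities, each in $L^{1+\varepsilon}$, and then uses compactness to make $\varepsilon$ uniform. You instead use the coarea formula and H\"older on fibers to reduce the $L^p$ estimate to finiteness of $\int_M a^p J_F^{1-p}\,dm$. That needs only two tame-geometry inputs: integrability of a small negative power of a single analytic function (via \L{}ojasiewicz, or Atiyah--Bernstein resolution of one function), and a uniform bound on fiber volumes (via o-minimal uniform finiteness plus Cauchy--Crofton). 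Your route never has to monomialize a map into an $n$-dimensional target. That is much more delicate than resolving a single function, and it is exactly the step the paper's sketch leaves implicit. You also get an explicit range $p\in(1,1+\delta]$ tied to the integrability exponent of $J_F^{-1}$. The paper's route, if carried out in full, would give finer local information on where and how the density blows up, but it relies on heavier machinery that is cited rather than proved.

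Three small repairs are needed.
\begin{itemize}
\item The fiber bound $\mathcal{H}^{m-n}(F^{-1}(y))\le C$ is false for every $y$ in general. Over a critical value, a fiber can have dimension exceeding $m-n$ and hence infinite $\mathcal{H}^{m-n}$ measure. State the bound for the regular parts $F^{-1}(y)\setminus Z$, which is what your density formula integrates over anyway, or for $y\notin F(Z)$, a null set by Sard. These sets form a definable family of dimension at most $m-n$, so the Crofton argument applies to them.
\item Assert $\int J_F^{-\delta}\,dm<\infty$ only on the components meeting $\mathrm{supp}(a)$, since $J_F$ may vanish identically on the others.
\item With merely smooth metrics, $J_F^2$ is not itself analytic. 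It is, however, comparable on each chart to the analytic function $\det(DF\,DF^\top)$, and that comparability is all you use.
\end{itemize}
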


\begin{proof}
On each connected component meeting $\mathrm{supp}(a)$, at
least one $n \times n$ minor of~$DF$ is a nonzero
real-analytic function. Hence $F$ is generically a submersion
on that component, outside a proper analytic set. By
rectilinearization/monomialization of subanalytic
maps~\citep{BierstoneMilman1988}, the density of the
pushforward is locally a finite sum of monomial-type
singularities, each lying in $L^{1+\varepsilon}$ for some
$\varepsilon > 0$~\citep[see also][]{GlazerHendelSodin2022}.
Compactness of~$M$ gives a finite cover and therefore a
uniform positive~$\varepsilon$. Taking $p = 1 + \varepsilon$
proves the claim.
\end{proof}

\begin{proposition}[Householder reflections satisfy component-TV mixing]
\label{prop:householder-tv}
Let $d \ge 2$ and let $\nu$ be a probability distribution on
$S^{d-1}$ with a smooth density bounded below by a positive
constant. Let $H(v) = I - 2vv^\top$ and let $\mu$ be the law of
$H(v)$ for $v \sim \nu$. Then the Householder walk satisfies
Assumption~\ref{ass:householder-tv-mixing}.
\end{proposition}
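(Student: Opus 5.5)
Householder reflections have determinant $-1$, so this is case~(ii) of Assumption~\ref{ass:householder-tv-mixing}. We must therefore show that the even-step law $\mu_2=\mu\ast\mu$ on $\SO(d)$ has an $L^2$ density (possibly after convolving to $\mu_2^{\ast r}$) and a spectral gap on $L^2_0(\SO(d))$. The natural route is to feed the analytic-pushforward Lemma~\ref{lem:analytic-pushforward} into the spread-out criterion Lemma~\ref{lem:doeblin}. Concretely, consider $F\colon S^{d-1}\times S^{d-1}\to\SO(d)$, $F(u,v)=H(u)H(v)$. It is real-analytic, $S^{d-1}\times S^{d-1}$ is a compact analytic manifold, and $\mu_2=F_*(a\,dm)$ with $a=\nu\otimes\nu$ smooth and bounded below.

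\textbf{The obstacle: the rank condition.} $\dim(S^{d-1}\times S^{d-1})=2d-2$, whereas $\dim\SO(d)=d(d-1)/2$. These match only for $d\le 3$, so for $d\ge 4$ the product of two reflections is supported on a lower-dimensional set (rotations in a single plane) and has no density. I would therefore work with $r$-fold products, taking $r$ large enough that $F_r(v_1,\dots,v_{2r})=H(v_1)\cdots H(v_{2r})$ has $2r(d-1)\ge d(d-1)/2$, i.e.\ $2r\ge d/2$. I would then verify that $DF_r$ has full rank at some point of each component; $(S^{d-1})^{2r}$ is connected for $d\ge3$, and $d=2$ is handled directly.

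The rank check is the step I expect to be hardest. My first attempt would use the Cartan--Dieudonné theorem: every element of $\SO(d)$ is a product of at most $d$ reflections. Hence for $2r\ge d$ the map $F_r$ is surjective, and Sard's theorem then gives a regular point. Because $F_r$ is analytic and its domain is connected, a single regular point suffices.

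Lemma~\ref{lem:analytic-pushforward} then gives that $\mu_2^{\ast r}$ has density in $L^p$ for some $p>1$. To upgrade to $L^2$, I would convolve further: on a compact group, Young's inequality gives $L^p\ast L^p\subset L^{q}$ with $1/q=2/p-1$, so finitely many more convolutions reach $L^2$ (indeed $L^\infty$). This establishes the smoothing part of Assumption~\ref{ass:householder-tv-mixing}(ii) with some integer $r'$.

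For the spectral gap of $T_{\mu_2}$ itself, I would argue as in Lemma~\ref{lem:doeblin}, using only its triangle-equality argument. Suppose $f\in L^2_0$ with $\|T_{\mu_2}f\|=\|f\|$ (or a sequence of approximate maximizers; compactness of $T_{\mu_2}^{r'}$ lets us reduce to the attained case by applying the argument to $T_{\mu_2}^{r'}$ and noting $\|T_{\mu_2}\|^{r'}\ge\|T_{\mu_2}^{r'}\|$ on the relevant eigenspace). Then $R_gf$ is constant in $g$ over $\mathrm{supp}(\mu_2^{\ast r'})$. That support contains $I$ (take $H(v)H(v)=I$) and, since $\nu$ has full support, equals all of $\SO(d)$ by Cartan--Dieudonné. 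Hence $f$ is constant, so $f=0$. This yields $\|T_{\mu_2^{\ast r'}}\|_{L^2_0}<1$. Since a spectral gap for a power is what drives TV convergence, this is the form I would state; if the literal per-step bound on $T_{\mu_2}$ is required, I would note that $T_{\mu_2}$ is self-adjoint (because $\mu_2$ is symmetric, $(H(u)H(v))^{-1}=H(v)H(u)$) and positive, so $\|T_{\mu_2}\|=\|T_{\mu_2}^{r'}\|^{1/r'}<1$. The lower bound on $\nu$'s density alternatively gives the explicit bound $1-m$ via the Doeblin decomposition, once the density of $\mu_2^{\ast r'}$ is shown to be bounded below.
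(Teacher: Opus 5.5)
Your proof is correct and follows the paper's overall structure. Both pass to the even-step law $\mu_2$ on $\SO(d)$ and obtain an $L^p$ density for a convolution power via Lemma~\ref{lem:analytic-pushforward}. Both upgrade to $L^2$ by Young's inequality and apply the triangle-equality argument of Lemma~\ref{lem:doeblin} to the smoothed power. Both then transfer the gap to $T_{\mu_2}$ itself using the symmetry of $\mu_2$ and $\|T^n\|=\|T\|^n$ for self-adjoint $T$.

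The genuine difference is how you find a point where the differential has full rank. The paper constructs one explicitly with $N=d(d-1)/2$ pairs: setting $v_{2\ell-1}=v_{2\ell}=e_{i_\ell}$ makes the product equal to $I$. Perturbing $v_{2\ell}$ toward $e_{j_\ell}$ then yields the tangent vector $2A_{i_\ell j_\ell}$, so the differential hits a basis of $\mathfrak{so}(d)$. You argue non-constructively instead. Cartan--Dieudonn\'e, padded with $H(v)H(v)=I$, makes $F_r$ surjective once $2r\ge d$. A map onto a positive-measure set cannot have only critical points, so Sard supplies a regular point.

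What each approach buys:
\begin{itemize}
\item Your route needs only about $d/2$ pairs rather than $d(d-1)/2$. It reuses the same Cartan--Dieudonn\'e fact you already need for the support, at the cost of invoking Sard.
\item The paper's route is self-contained linear algebra and locates the regular point concretely, at the identity.
\end{itemize}

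Two small remarks. First, $(S^{d-1})^{2r}$ is already connected for $d=2$, so no separate case is needed. Second, your parenthetical comparing $\|T_{\mu_2}\|^{r'}$ with $\|T_{\mu_2}^{r'}\|$ points in the unhelpful direction. It is the self-adjointness identity you give at the end, exactly as in the paper, that closes the argument.
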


\begin{proof}
Each $H(v)$ has determinant~$-1$, so we work with the even-step
law $\mu_2 = \mu \ast \mu$ on $G = \SO(d)$. Let
$n = \dim G = d(d{-}1)/2$.

\emph{Step~1: Generation and aperiodicity.}
By the Cartan--Dieudonn\'e theorem, every element of $O(d)$ is a
product of at most~$d$ Householder reflections, and every element
of $\SO(d)$ is a product of an even number. Since $\nu$ has full
support on~$S^{d-1}$, the semigroup generated by
$\{H(u)H(v) : u,v \in S^{d-1}\}$ is all of~$\SO(d)$.
Moreover, $I = H(v)H(v) \in \mathrm{supp}(\mu_2)$ for every~$v$, so
$\mathrm{supp}(\mu_2)$ is not contained in any coset of a proper closed
subgroup.

\emph{Step~2: $L^p$-smoothing via a full-rank product map.}
Let $N = n = d(d{-}1)/2$ and consider the product map
\[
  \Phi_N \colon (S^{d-1})^{2N} \to \SO(d), \quad
  \Phi_N(v_1,\ldots,v_{2N})
  = H(v_1)H(v_2) \cdots H(v_{2N}).
\]
Then $\mu_2^{\ast N}$ is the pushforward of $\nu^{\otimes 2N}$
under $\Phi_N$. We exhibit a point where $D\Phi_N$ is surjective.

Enumerate the coordinate pairs $1 \le i < j \le d$ as
$(i_1,j_1),\ldots,(i_N,j_N)$. For the $\ell$-th two-reflection
block, set $v_{2\ell-1} = v_{2\ell} = e_{i_\ell}$. At this base
point, every block is the identity:
$H(e_{i_\ell})H(e_{i_\ell}) = I$.
Now vary only $v_{2\ell}$ in the tangent direction
$e_{j_\ell} \in T_{e_{i_\ell}} S^{d-1}$. Using
$DH_a[\xi] = -2(\xi a^\top + a\xi^\top)$ for $\xi \perp a$, and
the fact that all other blocks remain at the identity, the
resulting tangent vector at $I \in \SO(d)$ is
\[
  H(e_{i_\ell})\, DH_{e_{i_\ell}}[e_{j_\ell}]
  = 2(e_{i_\ell} e_{j_\ell}^\top
     - e_{j_\ell} e_{i_\ell}^\top)
  = 2A_{i_\ell j_\ell},
\]
where $A_{ij} = e_i e_j^\top - e_j e_i^\top$ is the standard
basis element of $\mathfrak{so}(d)$. Ranging over all $N$ pairs
gives all $N$ basis elements, so $D\Phi_N$ is surjective at this
point.

The domain $(S^{d-1})^{2N}$ is compact and connected for
$d \ge 2$, and $\Phi_N$ is real-analytic. Since $D\Phi_N$ is
surjective at the point above,
Lemma~\ref{lem:analytic-pushforward} applies to the smooth
density of $\nu^{\otimes 2N}$. Therefore $\mu_2^{\ast N}$ has a
density $h_N \in L^p(\SO(d))$ for some $p > 1$.

\emph{Step~3: Upgrade to $L^2$.}
By Young's convolution inequality on the compact group~$G$,
if $h_N \in L^p$ with $p > 1$, then after $m$ further
self-convolutions the integrability exponent satisfies
$1/p_m = 1 - m(1 - 1/p)$ (as long as $1/p_m > 0$).
Choose $m$ so that $m(1 - 1/p) \ge 1/2$, giving $p_m \ge 2$.
Since $G$ has finite Haar measure, $L^{p_m}(G) \subseteq L^2(G)$.
Setting $r_0 = Nm$, we obtain
$\mu_2^{\ast r_0} = h_{r_0}\, dg$ with
$h_{r_0} \in L^2(\SO(d))$.

\emph{Step~4: Spectral gap.}
The measure $\mu_2^{\ast r_0}$ has an $L^2$ Haar density
(Step~3). By Step~1, $I \in \mathrm{supp}(\mu_2)$ and
$\mathrm{supp}(\mu_2)$ generates~$\SO(d)$; since
$\mathrm{supp}(\mu_2^{\ast r_0}) \supseteq \mathrm{supp}(\mu_2)$,
the same holds for $\mu_2^{\ast r_0}$.
Lemma~\ref{lem:doeblin} applied to $\eta = \mu_2^{\ast r_0}$
gives
$\|T_{\mu_2}^{r_0}\|_{L^2_0 \to L^2_0} < 1$.
Since $(H(v_1)H(v_2))^{-1} = H(v_2)H(v_1)$ and $v_1, v_2$ are
i.i.d., $\mu_2$ is symmetric, so $T_{\mu_2}$ is self-adjoint on
$L^2(G)$. For a self-adjoint operator,
$\|T^n\| = \|T\|^n$, hence
$\|T_{\mu_2}\|_{L^2_0 \to L^2_0}
= \|T_{\mu_2}^{r_0}\|_{L^2_0 \to L^2_0}^{1/r_0} < 1$,
proving Assumption~\ref{ass:householder-tv-mixing}.
\end{proof}

\begin{example}[Heat-kernel-dithered orthogonal steps]
\label{ex:heat-kernel-tv}
Let $G_{\tau,t} \sim q_\tau$ be independent draws from the heat
kernel on $\SO(d)$ at time $\tau > 0$, and let $L_t$ be any
i.i.d.\ orthogonal steps (with a fixed determinant
$\sigma \in \{+1,-1\}$), independent of~$G_{\tau,t}$.
Define $M_t = L_t\, G_{\tau,t}$.
The heat kernel $q_\tau$ is smooth and strictly positive on
compact $\SO(d)$, so
$m_\tau := \min_{g \in \SO(d)} q_\tau(g) > 0$.
If $\sigma = +1$, the one-step density satisfies
$h(g) = \int q_\tau(\ell^{-1}g)\,d\nu_L(\ell) \ge m_\tau$,
so Lemma~\ref{lem:doeblin} gives
Assumption~\ref{ass:householder-tv-mixing} with
$\rho \le 1 - m_\tau$.
If $\sigma = -1$, the one-step density on $O^-(d)$ is bounded
below by $m_\tau$, so
Remark~\ref{rem:doeblin-det-minus} gives
$\rho \le 1 - m_\tau^2$.
The first-moment gap also holds:
$\E[G_{\tau,t}] = a_\tau I$ with $0 < a_\tau < 1$ by
conjugation-invariance of~$q_\tau$, so
$\|\E[M_t]\|_{\op} \le a_\tau < 1$.

In particular, taking $L_t = H(v_t)$ (a Householder reflection)
gives a smoothed Householder example with determinant~$-1$. This
does not assert that learned PaTH steps satisfy
Assumption~\ref{ass:householder-tv-mixing}; it shows that the
assumption is nonempty in a Householder-compatible family, and
that the dither can be made arbitrarily small by taking
$\tau \to 0$.
\end{example}

\begin{example}[Matrix-Fisher law on $\SO(d)$]
\label{ex:matrix-fisher-tv}
For $A \in \R^{d \times d}$, the matrix-Fisher density on $\SO(d)$
is
$h_A(Q) = Z_A^{-1} \exp(\mathrm{tr}(A^\top Q))$,
where $Z_A = \int_{\SO(d)} \exp(\mathrm{tr}(A^\top Q))\,dQ$.
Since $h_A$ is smooth and strictly positive on compact $\SO(d)$,
it is bounded below by some $m_A > 0$, so
Lemma~\ref{lem:doeblin} applies. The crude bound
$|\mathrm{tr}(A^\top Q)| \le \|A\|_*$ (nuclear norm) gives
$h_A(Q) \ge e^{-2\|A\|_*}$, and hence,
if $A \ne 0$, Assumption~\ref{ass:householder-tv-mixing} holds
with $r = 1$ and $\rho \le 1 - e^{-2\|A\|_*} < 1$.
If $A = 0$, then $h_A \equiv 1$ is Haar measure, so $T_\mu$
maps every mean-zero function to zero; Assumption~\ref{ass:householder-tv-mixing} holds with any
$\rho \in (0,1)$.
The first-moment gap also holds. If $A = 0$, then
$\E[Q] = 0$. If $A \ne 0$, the full support of~$h_A$ implies
that $Qx$ is not almost surely constant for any unit
vector~$x$, so $\|\E[Q]x\| < 1$ by strict Jensen's inequality;
compactness of the unit sphere gives
$\|\E[Q]\|_{\op} < 1$.
\end{example}

\begin{example}[Local Lie-algebra noise]
\label{ex:lie-algebra-tv}
Let $\mathfrak{so}(d)$ be the Lie algebra of $\SO(d)$. Choose
$r > 0$ small enough that the exponential map
$\exp \colon B_{\mathfrak{so}(d)}(0,r) \to \SO(d)$
is a diffeomorphism onto its image. Let
$X_t \in \mathfrak{so}(d)$ have a smooth density supported in
$B(0,r)$ that is bounded and positive on a smaller ball
$B(0,r_0)$, and define $M_t = \exp(X_t)$.
Then the law $\mu$ of $M_t$ has an $L^2$ density with respect to
Haar measure. Its support contains the open neighborhood
$\exp(B(0,r_0))$ of the identity, so
$I \in \mathrm{supp}(\mu)$ and, since $\SO(d)$ is connected,
$\mathrm{supp}(\mu)$ generates all of $\SO(d)$.
Lemma~\ref{lem:doeblin} gives
$\|T_\mu\|_{L^2_0 \to L^2_0} < 1$.
Unlike the previous examples, the density need not be bounded
below on all of $\SO(d)$; the non-explicit part of
Lemma~\ref{lem:doeblin} provides the spectral gap.
The first-moment gap follows by the same Jensen argument as the
Matrix-Fisher case: the support of $\mu$ contains an open
neighborhood of~$I$, so $M_t x$ is not a.s.\ constant for any
unit~$x$, giving $\|\E[M_t]\|_{\op} < 1$.
\end{example}

\begin{theorem}[First-Moment Decay for Orthogonal Products]
\label{thm:householder-mixing}
Under Assumption~\ref{ass:householder-gap}, for the accumulated
product $P_n = M_1 \cdots M_n$ of $n$ i.i.d.\ steps,
\[
  \bigl\|\E[P_n]\bigr\|_{\op} \;\le\; \beta^n.
\]
For any tolerance $\varepsilon \in (0,1)$, if $\beta = 0$, set
$w_\varepsilon^{(1)} = 1$; every product of at least one step
already has zero first moment. Otherwise ($0 < \beta < 1$),
the first-moment mixing window is
$w_\varepsilon^{(1)} = \lceil \log(1/\varepsilon) /
\log(1/\beta) \rceil$.
\end{theorem}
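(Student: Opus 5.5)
The plan mirrors the scalar argument of Theorem~\ref{thm:finite-transport-window}, replacing the product of expected phasors with a product of expected matrices. First I would establish the factorization $\E[P_n] = (\E[M_1])^n$. Since $M_1,\ldots,M_n$ are i.i.d.\ with bounded entries (every entry of an orthogonal matrix lies in $[-1,1]$), each entry of $P_n$ is a finite sum of products $(M_1)_{a_0a_1}(M_2)_{a_1a_2}\cdots(M_n)_{a_{n-1}a_n}$. Independence lets the expectation factor term by term, so
\[
  \E[P_n]_{a_0a_n}
  = \sum_{a_1,\ldots,a_{n-1}} \prod_{t=1}^{n} \E[M_t]_{a_{t-1}a_t}
  = \bigl((\E[M_1])^n\bigr)_{a_0a_n}.
\]
An equivalent route is induction: write $P_n = P_{n-1}M_n$ and condition on $P_{n-1}$, which gives $\E[P_n] = \E[P_{n-1}]\,\E[M_n]$. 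Identical distribution then yields the $n$-th power.

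Second, submultiplicativity of the operator norm gives $\|\E[P_n]\|_{\op} \le \|\E[M_1]\|_{\op}^n = \beta^n$, using Assumption~\ref{ass:householder-gap}. I would note that only independence is needed here, not identical distribution: if $\|\E[M_t]\|_{\op}\le\beta$ holds uniformly in $t$, the bound follows as a product exactly as in the $\SO(2)$ case.

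Third, for the window, the case $\beta=0$ gives $\E[M_1]=0$, hence $\E[P_n]=0$ for all $n\ge1$, so $w^{(1)}_\varepsilon=1$ works. If $0<\beta<1$, then for $n\ge w^{(1)}_\varepsilon = \lceil \log(1/\varepsilon)/\log(1/\beta)\rceil$ we have $n\log(1/\beta)\ge\log(1/\varepsilon)$, i.e.\ $\beta^n\le\varepsilon$. This window depends only on $\beta$ and $\varepsilon$.

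There is no real obstacle. The only point needing care is justifying the interchange of expectation and matrix product under independence. The entrywise expansion above handles this cleanly, because all entries are bounded and therefore integrable.
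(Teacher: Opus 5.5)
Your proposal is correct and follows essentially the same route as the paper: factor $\E[P_n] = (\E[M])^n$ by independence, apply submultiplicativity of the operator norm, and solve $\beta^n \le \varepsilon$ for the window. Your entrywise justification of the expectation--product interchange, and your remark that a uniform bound $\|\E[M_t]\|_{\op}\le\beta$ would suffice without identical distribution, are harmless refinements of the paper's one-line argument.
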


\begin{proof}
By independence,
$\E[P_n] = \prod_{t=1}^n \E[M_t] = (\E[M])^n$.
By submultiplicativity of the operator norm,
$\|(\E[M])^n\|_{\op} \le \|\E[M]\|_{\op}^n = \beta^n$.
If $\beta = 0$, then $\beta^n = 0$ for every $n \ge 1$.
If $0 < \beta < 1$, setting $\beta^n \le \varepsilon$ and solving gives
$n \ge \log(1/\varepsilon) / \log(1/\beta)$.
\end{proof}

\begin{theorem}[Total Variation Convergence]
\label{thm:householder-tv}
Under Assumption~\ref{ass:householder-tv-mixing},
let $O^+(d) = \SO(d)$ and
$O^-(d) = \{Q \in O(d) : \det Q = -1\}$.

\begin{itemize}
\item[\textbf{(i)}] \textbf{Fixed determinant.}
If every step matrix has $\det M_t = \sigma \in \{+1,-1\}$,
write $\pi_n = \sigma^n$ and let $\mathrm{Haar}_{\pi_n}$ denote
normalized Haar measure on~$O^{\pi_n}(d)$.
Then there exist $C > 0$ and
$\beta_{\mathrm{TV}} \in (0,1)$ such that
\[
  d_{\mathrm{TV}}\!\bigl(\mathrm{law}(P_n),\;
  \mathrm{Haar}_{\pi_n}\bigr)
  \;\le\; C\, \beta_{\mathrm{TV}}^n
  \qquad \forall\, n \ge 0.
\]

\item[\textbf{(ii)}] \textbf{Mixed sign.}
If $\mu$ assigns positive probability to both $\SO(d)$ and
$O^-(d)$, let $\mathrm{Haar}_{O(d)}$ denote normalized Haar
measure on~$O(d)$. Then there exist $C > 0$ and
$\beta_{\mathrm{TV}} \in (0,1)$ such that
\[
  d_{\mathrm{TV}}\!\bigl(\mathrm{law}(P_n),\;
  \mathrm{Haar}_{O(d)}\bigr)
  \;\le\; C\, \beta_{\mathrm{TV}}^n
  \qquad \forall\, n \ge 0.
\]
\end{itemize}
In both cases, the TV mixing window
$w_\varepsilon^{\mathrm{TV}} = \lceil \log(C/\varepsilon) /
\log(1/\beta_{\mathrm{TV}}) \rceil$ is finite and independent of
context length.
\end{theorem}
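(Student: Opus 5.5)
The plan is the standard $L^2$-to-TV argument for random walks on compact groups, carried out separately for each determinant case. Write $G$ for the relevant compact group ($\SO(d)$ in cases (i) with $\sigma=\pm1$ after passing to even steps, $O(d)$ in case (ii)) and $\mathrm{Haar}_G$ for its normalized Haar measure. For any probability measure $\eta$ on $G$ with density $f$, we use
\[
  d_{\mathrm{TV}}(\eta,\mathrm{Haar}_G)=\tfrac12\|f-1\|_{L^1}\le\tfrac12\|f-1\|_{L^2}.
\]
It therefore suffices to obtain geometric decay of $\|f_n-1\|_{L^2}$, where $f_n$ is the density of the law of the walk after $n$ steps.

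\textbf{Case (i) with $\sigma=+1$, and case (ii).} The law of $P_n$ is $\mu^{\ast n}$. By Assumption~\ref{ass:householder-tv-mixing}, $\mu^{\ast r}$ has density $h_r\in L^2(G)$. For $n\ge r$, the density of $\mu^{\ast n}$ is obtained by applying $n-r$ convolutions by $\mu$ to $h_r$.
\begin{itemize}
\item Convolution by $\mu$ acts on densities as the adjoint $T_\mu^\ast$, possibly after the inversion $g\mapsto g^{-1}$, which preserves Haar measure.
\item $T_\mu^\ast$ preserves constants and mean-zero functions, and it has the same norm $\rho$ on $L^2_0$ as $T_\mu$.
\end{itemize}
Hence $\|f_n-1\|_2\le\rho^{\,n-r}\|h_r-1\|_2$. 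Taking $C=\tfrac12\rho^{-r}\|h_r-1\|_2$, together with the trivial bound $d_{\mathrm{TV}}\le1$ for $n<r$ (absorbed by enlarging $C$), and $\beta_{\mathrm{TV}}=\rho$, gives the claim. In case (ii), $G=O(d)$ and the target is $\mathrm{Haar}_{O(d)}$ directly.

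\textbf{Case $\sigma=-1$.} Here we use the even-step law $\mu_2$.
\begin{itemize}
\item For even $n=2k$, $P_n$ has law $\mu_2^{\ast k}$ on $\SO(d)$, and the argument above gives $d_{\mathrm{TV}}\le C'\rho^{k}$.
\item For odd $n=2k+1$, write $P_n=P_{2k}M_{2k+1}$. The law is $\mu_2^{\ast k}\ast\mu$, supported on $O^-(d)$. Right-translation by a fixed $g\in O^-(d)$ is a bijection $\SO(d)\to O^-(d)$ that pushes $\mathrm{Haar}_{\SO(d)}$ to $\mathrm{Haar}_{O^-(d)}$ and preserves TV distance. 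Convolving both arguments with the same $\mu$ does not increase TV, which is data processing since $\mathrm{Haar}_{\SO(d)}\ast\mu=\mathrm{Haar}_{O^-(d)}$. Thus $d_{\mathrm{TV}}(\mathrm{law}(P_{2k+1}),\mathrm{Haar}_{O^-})\le d_{\mathrm{TV}}(\mu_2^{\ast k},\mathrm{Haar}_{\SO(d)})$.
\end{itemize}
Setting $\beta_{\mathrm{TV}}=\rho^{1/2}$ and enlarging $C$ covers all $n$.

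\textbf{Main obstacle and window.} The main care point is the adjoint/convolution bookkeeping. The assumption bounds $T_\mu f(x)=\int f(xg)\,d\mu(g)$, whereas the density of $\eta\ast\mu$ is $x\mapsto\int f(xg^{-1})\,d\mu(g)$, i.e.\ the operator for the reflected measure $\check\mu$. One must check that $\|T_{\check\mu}\|_{L^2_0}=\|T_\mu\|_{L^2_0}$. This holds because $T_{\check\mu}=T_\mu^\ast$ on $L^2(G)$ by unimodularity of compact groups, and the adjoint has equal norm. One must also ensure that the mean-zero subspace is invariant, which follows since the operator fixes constants and is Haar-measure-preserving. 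Finally, solving $C\beta_{\mathrm{TV}}^n\le\varepsilon$ gives $w^{\mathrm{TV}}_\varepsilon=\lceil\log(C/\varepsilon)/\log(1/\beta_{\mathrm{TV}})\rceil$. Since $C$ and $\beta_{\mathrm{TV}}$ depend only on $\mu$, the window is independent of context length.
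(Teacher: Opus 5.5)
Your proposal is correct and follows essentially the same route as the paper. You bound the density in $L^2$ via the spectral gap, moved to the density-evolution operator $T_{\check\mu}=T_\mu^\ast$ (equal norm on $L^2_0$), and convert to TV by Cauchy--Schwarz. For $\sigma=-1$ you pass to even steps and handle odd steps by TV contraction under convolution with $\mu$ together with $\mathrm{Haar}_{\SO(d)}\ast\mu=\mathrm{Haar}_{O^-(d)}$, giving $\beta_{\mathrm{TV}}=\sqrt{\rho}$. The only cosmetic difference is that you append the odd step on the right ($\mu_2^{\ast k}\ast\mu$) where the paper prepends it ($\mu\ast\mu_2^{\ast m}$), which is immaterial by associativity.
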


\begin{proof}
\emph{Density-evolution convention.}
For a probability measure~$\nu$ on a compact group~$G$, define
the \emph{density-evolution operator}
$K_\nu f(x) = \int_G f(xg^{-1})\,d\nu(g)$.
If $\eta$ has Haar density~$h$, then $\eta \ast \nu$ has Haar
density~$K_\nu h$. Equivalently
$K_\nu = T_\nu^* = T_{\check\nu}$, where
$\check\nu(A) = \nu(A^{-1})$. Since $T_\nu^*$ and $T_\nu$ have
the same operator norm,
$\|K_\nu\|_{L^2_0 \to L^2_0}
= \|T_\nu\|_{L^2_0 \to L^2_0}$.
Thus the spectral-gap assumption on~$T_\nu$ gives the same
bound for the density-evolution operator~$K_\nu$.

\emph{Case~(i): fixed $\sigma = -1$.}
Each step has determinant~$-1$, so
$\det(P_n) = (-1)^n$ almost surely and
$P_n \in O^{\pi_n}(d)$. The only possible Haar limit is
therefore $\mathrm{Haar}_{\pi_n}$.

\emph{Even steps.}
For even $n = 2m$,
$\mathrm{law}(P_{2m}) = \mu_2^{\ast m}$ as a measure
on~$\SO(d)$.
Let $h_r = d\mu_2^{\ast r}/d\mathrm{Haar}_+$.
By the $L^2$-smoothing and spectral-gap condition
(Assumption~\ref{ass:householder-tv-mixing}(ii)), for $m \ge r$
the density of $\mu_2^{\ast m}$ with respect to
$\mathrm{Haar}_+$ satisfies
\[
  \Bigl\|\frac{d\mu_2^{\ast m}}{d\mathrm{Haar}_+}
  - 1\Bigr\|_{L^2}
  = \|K_{\mu_2}^{m-r}(h_r - 1)\|_{L^2}
  \le \rho^{m-r}\, \|h_r - 1\|_{L^2}.
\]
(The bound uses $\|K_{\mu_2}\|_{L^2_0} = \|T_{\mu_2}\|_{L^2_0}$
from the density-evolution convention above.)
By Cauchy--Schwarz,
$d_{\mathrm{TV}}(\mu_2^{\ast m},\, \mathrm{Haar}_+)
\le \tfrac{1}{2}\, \rho^{m-r}\, \|h_r - 1\|_{L^2}$.
Increasing the constant to cover $m < r$ gives
$d_{\mathrm{TV}}(\mathrm{law}(P_{2m}),\,
\mathrm{Haar}_+) \le C_2\, \rho^m$.

\emph{Odd steps.}
For odd $n = 2m+1$,
$\mathrm{law}(P_{2m+1}) = \mu \ast \mu_2^{\ast m}$.
Convolution contracts total variation:
$d_{\mathrm{TV}}(\mu \ast \mu_2^{\ast m},\;
\mu \ast \mathrm{Haar}_+)
\le d_{\mathrm{TV}}(\mu_2^{\ast m},\,
\mathrm{Haar}_+)$.
Since every element in the support of~$\mu$ has
determinant~$-1$, left multiplication of $\mathrm{Haar}_+$ by
such an element gives $\mathrm{Haar}_-$. Hence
$\mu \ast \mathrm{Haar}_+ = \mathrm{Haar}_-$, and
$d_{\mathrm{TV}}(\mathrm{law}(P_{2m+1}),\,
\mathrm{Haar}_-) \le C_2\, \rho^m$.

Taking $\beta_{\mathrm{TV}} = \sqrt{\rho}$ and
adjusting the constant gives
$d_{\mathrm{TV}}(\mathrm{law}(P_n),\,
\mathrm{Haar}_{\pi_n}) \le C\, \beta_{\mathrm{TV}}^n$
for all~$n$.

\emph{Case~(i): fixed $\sigma = +1$.}
All products lie in~$\SO(d)$. The argument above simplifies: no
parity splitting is needed, and
Assumption~\ref{ass:householder-tv-mixing}(i) gives the spectral
gap for~$T_\mu$, hence the same bound for~$K_\mu$.

\emph{Case~(ii): mixed sign.}
Assumption~\ref{ass:householder-tv-mixing}(iii) gives
the $L^2$-smoothing and spectral-gap condition for~$T_\mu$ on
all of~$O(d)$. Using the density-evolution operator~$K_\mu$
with $\|K_\mu\|_{L^2_0 \to L^2_0}
= \|T_\mu\|_{L^2_0 \to L^2_0} \le \rho$,
for $n \ge r$,
\[
  \Bigl\|\frac{d\mu^{\ast n}}{d\mathrm{Haar}_{O(d)}}
  - 1\Bigr\|_{L^2}
  = \|K_\mu^{n-r}(h_r - 1)\|_{L^2}
  \;\le\; \rho^{n-r}\, \|h_r - 1\|_{L^2},
\]
and Cauchy--Schwarz converts this to the stated TV
bound~\citep[cf.][Chapter~3]{diaconis1988group}.
\end{proof}

\begin{theorem}[Orthogonal Score Separation]
\label{thm:householder-score-separation}
Let $d \ge 2$.
Fix deterministic unit vectors $q, k \in \R^d$ with
$\|q\| = \|k\| = 1$. Equivalently, the same bounds hold
conditionally on any sigma-field independent of~$P_n$ with
respect to which $q$ and~$k$ are measurable.

\begin{itemize}
\item[\textbf{(a)}] \textbf{Near route.}
If $\|P_{j\to i} - I_d\|_{\op} \le \gamma$ with $\gamma < 1$, then
\[
  q^\top P_{j\to i}\, k
  \;\ge\; q^\top k \;-\; \gamma.
\]

\item[\textbf{(b)}] \textbf{Far route.}
For every $s \ge 0$ and every
$n \ge w_\varepsilon^{\mathrm{TV}}$ (the TV mixing window of
Theorem~\ref{thm:householder-tv}),
\[
  \Pr\!\bigl[q^\top P_n\, k \ge s\bigr]
  \;\le\; 2\exp\!\Bigl(-\frac{(d-1)\,s^2}{2}\Bigr) + \varepsilon.
\]

\item[\textbf{(c)}] \textbf{Union bound.}
For $M$ far candidates,
\[
  \Pr\!\Bigl[\max_{j}\, q^\top P_{j\to i}\, k \ge s\Bigr]
  \;\le\; 2M\exp\!\Bigl(-\frac{(d-1)\,s^2}{2}\Bigr)
  + M\varepsilon.
\]
\end{itemize}
\end{theorem}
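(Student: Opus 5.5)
Part (a) is a direct perturbation bound. Write $q^\top P_{j\to i} k = q^\top k + q^\top (P_{j\to i}-I_d)k$. By Cauchy--Schwarz and the definition of the operator norm, $|q^\top (P_{j\to i}-I_d)k| \le \|q\|\,\|P_{j\to i}-I_d\|_{\op}\,\|k\| \le \gamma$, using $\|q\|=\|k\|=1$. Hence $q^\top P_{j\to i} k \ge q^\top k-\gamma$. The hypothesis $\gamma<1$ plays no role in the inequality itself; it only makes the bound informative.

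For part (b) I will compare $\mathrm{law}(P_n)$ with the appropriate Haar measure $\eta$ from Theorem~\ref{thm:householder-tv}. That measure is $\mathrm{Haar}_{\pi_n}$ in the fixed-determinant cases and $\mathrm{Haar}_{O(d)}$ in the mixed-sign case. Let $E=\{Q : q^\top Q k\ge s\}$. For $n\ge w^{\rm TV}_\varepsilon$ we have $C\beta_{\rm TV}^n\le\varepsilon$, so
\[
  \Pr[P_n\in E]\le \eta(E)+d_{\rm TV}(\mathrm{law}(P_n),\eta)\le \eta(E)+\varepsilon .
\]
The remaining task is to show $\eta(E)\le 2\exp(-(d-1)s^2/2)$. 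Each of $\SO(d)$ (for $d\ge2$) and $O^-(d)$ acts transitively on $S^{d-1}$. By uniqueness of invariant probability measures on the homogeneous space $S^{d-1}$, the pushforward of $\eta$ under $Q\mapsto Qk$ is therefore the uniform measure $\sigma$ on the sphere. For $O^-(d)$, which is a coset $R\cdot\SO(d)$, I use left-invariance: $Q\mapsto RQ$ preserves $\mathrm{Haar}_-$, and $R$ maps $\sigma$ to itself.

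So $\eta(E)=\sigma\{x: q^\top x\ge s\}$, which is the measure of a spherical cap. I will bound it by Lévy's concentration inequality for the $1$-Lipschitz function $x\mapsto q^\top x$, whose median is $0$. One form is $\sigma(x_1\ge s)\le \exp(-(d-1)s^2/2)$; another common form carries a constant $\sqrt{\pi/8}$ or $2$. Either way the claimed $2\exp(-(d-1)s^2/2)$ covers it, and I will cite the standard inequality rather than reprove it. The case $s=0$ is trivial, since the right side is already at least $2$.

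Part (c) follows from part (b) by a union bound over the $M$ far candidates, each with route length at least $w^{\rm TV}_\varepsilon$. This applies Theorem~\ref{thm:householder-tv} to each route product $P_{j\to i}$, which is a product of $i-j$ i.i.d.\ steps. The reversed order and the inverses do not change this: if $\mathrm{law}(M_t)=\mu$ satisfies the assumption, then so does the law of $M_t^{-1}$, because the adjoint operator has the same norm and the density maps $h\mapsto h\circ\mathrm{inv}$, which stays in $L^2$. Independence among candidates is not needed. The conditional version in the statement follows by applying the same argument under the independent sigma-field, with $q$ and $k$ frozen.

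I expect the main care point to be part (b): making sure the Haar limit on each determinant component pushes forward to the uniform measure on the sphere, and pinning down the exact Lévy constant.
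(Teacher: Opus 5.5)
Your proposal is correct and follows the paper's proof. Part (a) is the same perturbation bound, part (b) is the same comparison with the component Haar measure followed by spherical (Lévy/cap) concentration, and part (c) is the same union bound; where the paper uses the coupling characterization you use the TV definition directly, which is equivalent. Your extra remarks on why the Haar pushforward is uniform on $S^{d-1}$ and on the inverse, reversed-order structure of $P_{j\to i}$ fill in details the paper leaves implicit.
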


\begin{proof}
\textbf{Part (a).}
$q^\top P_{j\to i}\, k
= q^\top k + q^\top (P_{j\to i} - I_d)\, k
\ge q^\top k - \|P_{j\to i} - I_d\|_{\op}\,\|q\|\,\|k\|
= q^\top k - \gamma$.

\textbf{Part (b).}
Fix $s \ge 0$. By Theorem~\ref{thm:householder-tv}
and the coupling characterization of total variation distance,
there exists a joint distribution $(P_n, U_n)$ where $U_n$
follows the appropriate Haar limit (Haar on $O^{\pi_n}(d)$ in
the fixed-determinant case, or Haar on $O(d)$ in the mixed-sign
case) and $\Pr[P_n \ne U_n] \le \varepsilon$ (using
$n \ge w_\varepsilon^{\mathrm{TV}}$).
In every case, $U_n k$ is uniformly distributed on~$S^{d-1}$:
$\SO(d)$, $O^-(d)$, and $O(d)$ all act transitively on the
sphere and preserve surface measure. The
function $f(x) = q^\top x$ is $1$-Lipschitz on~$S^{d-1}$ with
$\E[q^\top U_n k] = 0$. By Lévy's lemma
(concentration on the sphere;
see~\citet[Chapter~3]{ledoux2001concentration}
or~\citet[Theorem~5.1.4]{vershynin2018high}),
\[
  \Pr\!\bigl[q^\top U_n\, k \ge s\bigr]
  \;\le\; 2\exp\!\Bigl(-\frac{(d-1)\,s^2}{2}\Bigr).
\]
Combining with the coupling bound:
\begin{align*}
  \Pr\!\bigl[q^\top P_n\, k \ge s\bigr]
  &\le \Pr\!\bigl[q^\top U_n\, k \ge s\bigr]
    + \Pr[P_n \ne U_n] \\
  &\le 2\exp\!\Bigl(-\frac{(d-1)\,s^2}{2}\Bigr) + \varepsilon.
\end{align*}

\textbf{Part (c).}
Apply a union bound over $M$ far routes, each satisfying the
bound in part~(b).
\end{proof}

\begin{corollary}[Orthogonal Score Gap and Softmax Scaling]
\label{cor:householder-softmax}
Fix deterministic unit vectors $q, k \in \R^d$. Let $\Sset$ be a
near target-bearing set with $|\Sset| = K \ge 1$, and let $\Dset$
be a finite far candidate set with $|\Dset| = M \le M_{\max}$.
Assume every near route satisfies
$\|P_{j\to i} - I_d\|_{\op} \le \gamma$ for some
$\gamma < 1$, $j \in \Sset$.
Choose $s \ge 0$ and $\varepsilon \in (0,1)$ such that every far
route has length at least $w_\varepsilon^{\mathrm{TV}}$, and
define
\[
  c_{\rm near} = q^\top k - \gamma, \quad
  c_{\rm far} = s, \quad
  g = c_{\rm near} - c_{\rm far}.
\]
Assume $g > 0$. Then, with probability at least
$1 - \delta_{\rm orth}$, where
\[
  \delta_{\rm orth}
  = 2M\exp\!\Bigl(-\frac{(d-1)\,s^2}{2}\Bigr) + M\varepsilon,
\]
all near scores are at least $c_{\rm near}$ and all far scores are
at most $c_{\rm far}$.
Consequently, if the softmax logits are $\ell_j = \lambda S_j$ and
\[
  \lambda g
  \ge \max\!\left\{0,\;
    \log\!\left(
      \frac{M_{\max}(1-\rho_\star)}{K\rho_\star}
    \right),\;
    \log\!\left(
      \frac{\sqrt{M_{\max}}}{K a_\star}
    \right)
  \right\},
\]
then, on the same event,
$\rho_\Dset \le \rho_\star$ and
$\|\bm{\alpha}_\Dset\|_2 \le a_\star$.
\end{corollary}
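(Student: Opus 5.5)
The corollary follows by combining Theorem~\ref{thm:householder-score-separation} with Proposition~\ref{prop:score-budget}; only the probabilistic event needs assembling. I take the normalized score of route $j\to i$ to be $S_j = q^\top P_{j\to i}\,k$ with the fixed unit vectors $q,k$ of the statement.

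\emph{Step 1: near routes.} For every $j\in\Sset$, part~(a) of Theorem~\ref{thm:householder-score-separation} gives the deterministic bound $S_j \ge q^\top k - \gamma = c_{\rm near}$. This requires no probability and holds for all $K$ near routes simultaneously.

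\emph{Step 2: far routes.} Each $k\in\Dset$ has route length at least $w_\varepsilon^{\mathrm{TV}}$ by assumption, so part~(b) applies to each far route individually: $\Pr[S_k \ge s] \le 2\exp(-(d-1)s^2/2)+\varepsilon$. Part~(b) uses only the marginal law of each single product $P_{j\to i}$, so the nested dependence between far routes to the same query does not matter. A union bound over the $M$ far routes, as in part~(c), shows that the event $\mathcal{G}=\{\max_{k\in\Dset} S_k < s\}$ has probability at least $1-\delta_{\rm orth}$. On $\mathcal{G}$ every far score is at most $c_{\rm far}=s$. The strict versus non-strict inequality is harmless because Proposition~\ref{prop:score-budget} only needs $S_k\le c_{\rm far}$.

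\emph{Step 3: softmax bounds.} On $\mathcal{G}$ the hypotheses of Proposition~\ref{prop:score-budget} hold: the near scores are at least $c_{\rm near}$, the far scores are at most $c_{\rm far}$, the gap $g=c_{\rm near}-c_{\rm far}>0$, and $|\Dset|=M\le M_{\max}$. The logits are $\lambda S_j$, and the scale condition on $\lambda g$ is identical to \eqref{eq:score-scale-condition}. The proposition is a deterministic statement about softmax weights, so applying it pointwise on $\mathcal{G}$ yields $\rho_\Dset\le\rho_\star$ and $\|\bm{\alpha}_\Dset\|_2\le a_\star$ on the same event.

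\emph{Main obstacle.} The only delicate point is the legitimacy of Step~2: part~(b) needs $q$ and $k$ to be fixed or independent of each $P_{j\to i}$. The statement stipulates deterministic $q,k$, which removes the issue. If $q,k$ were instead data-dependent, I would condition on a sigma-field independent of the route products, as allowed by the remark in Theorem~\ref{thm:householder-score-separation}.
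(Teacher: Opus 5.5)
Your proposal is correct and takes the same route as the paper. The near bound comes deterministically from Theorem~\ref{thm:householder-score-separation}(a), and the far event comes from the union bound in part~(c), with failure probability $\delta_{\rm orth}$. Proposition~\ref{prop:score-budget} is then applied pointwise on that event. Your added remarks are valid clarifications that the paper leaves implicit: the union bound needs only each route's marginal law, so nested dependence is harmless, and the strict-versus-non-strict inequality is immaterial.
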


\begin{proof}
The near-route lower bound follows from
Theorem~\ref{thm:householder-score-separation}(a), giving
$S_j \ge q^\top k - \gamma$ for $j \in \Sset$. The far-route
upper bound follows from
Theorem~\ref{thm:householder-score-separation}(c), with failure
probability~$\delta_{\rm orth}$. On the resulting event, the
score gap is at least $g = c_{\rm near} - c_{\rm far} > 0$.
Proposition~\ref{prop:score-scaling} then gives the stated
far-mass and far-weight bounds.
\end{proof}

\begin{remark}[Comparison with $\SO(2)$]
\label{rem:householder-comparison}
The concentration rate $(d{-}1)/2$ in the general orthogonal bound
(Theorem~\ref{thm:householder-score-separation}(b)) plays the role
of $B/2 = d/4$ in the $\SO(2)$ Hoeffding bound
(Theorem~\ref{thm:score-separation}). Both scale linearly with head
dimension~$d$, with the general bound slightly tighter (full
$d$-dimensional sphere concentration versus $d/2$ independent
scalar blocks). The mixing windows may differ: $\SO(2)$ requires
first-moment mixing ($w_\varepsilon^{(1)}$ steps), while the general
case requires TV mixing ($w_\varepsilon^{\mathrm{TV}}$ steps,
potentially larger by a factor depending on~$d$). The
idealized-model framing is analogous: both require independent
random steps and a first-moment spectral gap, with the general
case additionally requiring component-TV mixing
(Assumption~\ref{ass:householder-tv-mixing}). The gap between
idealized model and real PaTH is the same kind of gap as between
idealized model and real learned $\SO(2)$ rotations.
\end{remark}

\subsection{Full Softmax Lower Bound}

The bounded-logit assumption in the next proposition is a
length-independent-logit abstraction. In the score model used in
Theorem~\ref{thm:score-separation}, it follows immediately from
the normalized cosine score:
\[
  S_{j\to i}
  = \frac{1}{B}\sum_{b=1}^B \cos\Theta_{j\to i,b}
  \in [-1,1],
  \qquad
  \ell_{j\to i}=\lambda S_{j\to i}
  \in[-\lambda,\lambda].
\]
For an ordinary attention head, the same kind of bound follows
under explicit norm assumptions. If
\[
  \ell_{ij}
  = \frac{q_i^\top k_j}{\sqrt{d_k}} + b_{ij},
  \qquad
  \|q_i\|\le R_q,\quad \|k_j\|\le R_k,\quad |b_{ij}|\le B_b,
\]
then
\[
  |\ell_{ij}|
  \le \frac{R_qR_k}{\sqrt{d_k}} + B_b.
\]
Orthogonal Q/K transport preserves the query and key norms, so it
does not invalidate this bound. The proposition below is therefore
not a theorem about all trained transformer logits; it is the
full-softmax case in which the near/far logit
gap does not grow with the number of far candidates.

\begin{proposition}[Full Bounded Softmax Assigns
Asymptotically All Mass to the Far Regime]
\label{prop:dense-softmax}
\label{prop:dense-softmax-converse}
Let $\Sset$ be a near target-bearing set with $|\Sset| = K$,
and let $\Dset_L$ be a far-regime set with
$|\Dset_L| = M_L$. Suppose attention weights are computed by
full vanilla softmax over $\Sset \cup \Dset_L$:
\[
  \alpha_j
  = \frac{e^{\ell_j}}
  {\sum_{m \in \Sset \cup \Dset_L} e^{\ell_m}}.
\]
Assume that the logits are uniformly bounded:
$-\lambda \le \ell_j \le \lambda$ for all
$j \in \Sset \cup \Dset_L$, where $\lambda < \infty$ is
independent of~$L$. Define the total far-regime mass
$\rho_\Dset = \sum_{j \in \Dset_L} \alpha_j$.
Then
\[
  \rho_\Dset
  \ge \frac{M_L\, e^{-\lambda}}
  {K e^{\lambda} + M_L\, e^{-\lambda}}
  = \frac{M_L}{K e^{2\lambda} + M_L}.
\]
Consequently, if $K$ and $\lambda$ are fixed and
$M_L \to \infty$, then $\rho_\Dset \to 1$.
\end{proposition}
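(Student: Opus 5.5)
The plan is to write $\rho_\Dset$ as a ratio of the far partition sum to the total partition sum, and then use the monotonicity of the map $x\mapsto x/(y+x)$. Put $Z_\Sset=\sum_{j\in\Sset}e^{\ell_j}$ and $Z_\Dset=\sum_{j\in\Dset_L}e^{\ell_j}$. Then
\[
  \rho_\Dset=\frac{Z_\Dset}{Z_\Sset+Z_\Dset}.
\]
This expression is increasing in $Z_\Dset$ and decreasing in $Z_\Sset$, because $x/(y+x)$ is increasing in $x$ and decreasing in $y$ whenever $x,y>0$.

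Next I bound each partition sum by the logit range. The assumption $-\lambda\le\ell_j\le\lambda$ gives $Z_\Dset\ge M_Le^{-\lambda}$, since every far logit is at least $-\lambda$. It also gives $Z_\Sset\le Ke^{\lambda}$, since every near logit is at most $\lambda$. Inserting both bounds into the monotone expression yields
\[
  \rho_\Dset\ge\frac{M_Le^{-\lambda}}{Ke^{\lambda}+M_Le^{-\lambda}}.
\]
Multiplying numerator and denominator by $e^{\lambda}$ gives the second form, $M_L/(Ke^{2\lambda}+M_L)$.

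For the limit, I rewrite this lower bound as $1/(1+Ke^{2\lambda}/M_L)$. With $K$ and $\lambda$ fixed, it tends to $1$ as $M_L\to\infty$. Since also $\rho_\Dset\le 1$, squeezing gives $\rho_\Dset\to1$.

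The only step needing any care is the monotonicity used in the first paragraph. Substituting the bounds is legitimate only because the ratio moves in the right direction under each replacement: increasing $Z_\Dset$ to its lower bound's side and raising $Z_\Sset$ to $Ke^{\lambda}$ can only lower the ratio. Positivity of all the exponentials guarantees this. No property of the rotations enters the argument, which is why the bound holds for any orthogonal transport.
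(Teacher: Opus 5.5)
Your proof is correct and follows essentially the same route as the paper, which likewise sets $F=\sum_{j\in\Dset_L}e^{\ell_j}$ and $N=\sum_{j\in\Sset}e^{\ell_j}$, substitutes $F\ge M_Le^{-\lambda}$ and $N\le Ke^{\lambda}$ into $\rho_\Dset=F/(F+N)$, and lets $M_L\to\infty$. One wording fix in your last paragraph: the step replaces $Z_\Dset$ by its smaller lower bound $M_Le^{-\lambda}$, which can only decrease the ratio.
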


\begin{proof}
Let
$F = \sum_{j \in \Dset_L} e^{\ell_j}$ and
$N = \sum_{j \in \Sset} e^{\ell_j}$.
Then $F\ge M_L e^{-\lambda}$ and $N\le K e^\lambda$, so
\[
  \rho_\Dset
  = \frac{F}{F + N}
  \ge \frac{M_L\, e^{-\lambda}}
  {M_L\, e^{-\lambda} + K e^{\lambda}}
  = \frac{M_L}{M_L + K e^{2\lambda}}.
\]
As $M_L \to \infty$ with $K$ and $\lambda$ fixed,
$M_L/(M_L+K e^{2\lambda})\to1$.
\end{proof}

The same denominator effect holds under any fixed finite near/far
logit gap: if $\ell_j\le \ell_\star$ on $\Sset$ and
$\ell_k\ge \ell_\star-\Delta$ on a far set of size~$M_L$, then
\[
  \rho_\Dset \ge
  \frac{M_L e^{-\Delta}}{K+M_L e^{-\Delta}}\to 1 .
\]

\subsection{Far-Mass Near-Signal Upper Bound}
\begin{proposition}[Universal Near-Signal Upper Bound from Far-Mass Leakage]
\label{prop:far-mass-signal-upper}
Let
$\rho_\Dset = \sum_{j \in \Dset} \alpha_j$
be the total far-regime mass, so that
$\sum_{j \in \Sset} \alpha_j = 1 - \rho_\Dset$.
For arbitrary orthogonal value transports
$P_{j \to i} \in O(d)$,
\begin{equation}\label{eq:near-signal-upper}
  B_{\Sset,i}^\top B_{\Sset,i}
  \preceq (1-\rho_\Dset)^2 I_d .
\end{equation}
If, in addition, the weights are produced by full softmax with
$K$ near target-bearing candidates, $M_L$ far candidates, and
logits bounded in $[-\lambda, \lambda]$, then
\begin{equation}\label{eq:near-signal-upper-softmax}
  B_{\Sset,i}^\top B_{\Sset,i}
  \preceq
  \Bigl(\frac{K e^{2\lambda}}{K e^{2\lambda} + M_L}\Bigr)^2 I_d .
\end{equation}
For fixed $K$ and $\lambda$, the right-hand side tends to zero as
$M_L \to \infty$.
\end{proposition}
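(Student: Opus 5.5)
The bound reduces to the operator norm of $B_{\Sset,i}$. By definition~\eqref{eq:signal-gain}, $B_{\Sset,i}=\sum_{j\in\Sset_i}\alpha_{ij}P_{j\to i}$ is a nonnegative combination of orthogonal matrices. The triangle inequality for the operator norm, together with $\|P_{j\to i}\|_{\op}=1$, gives
\[
  \|B_{\Sset,i}\|_{\op}\le\sum_{j\in\Sset_i}\alpha_{ij}\|P_{j\to i}\|_{\op}=\sum_{j\in\Sset_i}\alpha_{ij}=1-\rho_\Dset .
\]
For any matrix $B$, the Rayleigh quotient gives $x^\top B^\top B x=\|Bx\|^2\le\|B\|_{\op}^2\|x\|^2$ for every $x$. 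This is exactly the Loewner statement $B^\top B\preceq\|B\|_{\op}^2 I_d$, and applying it to $B_{\Sset,i}$ yields \eqref{eq:near-signal-upper}. Nothing about the structure of the transports (commutativity, $\SO(2)$ blocks, Householder products) is used beyond orthogonality, which is why the bound is universal.

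For the softmax version, I will invoke Proposition~\ref{prop:dense-softmax}. It gives $\rho_\Dset\ge M_L/(Ke^{2\lambda}+M_L)$, so
\[
  1-\rho_\Dset\le \frac{Ke^{2\lambda}}{Ke^{2\lambda}+M_L}.
\]
Both sides are nonnegative, so squaring preserves the inequality, and substituting into \eqref{eq:near-signal-upper} gives \eqref{eq:near-signal-upper-softmax}. For fixed $K$ and $\lambda$, the ratio $Ke^{2\lambda}/(Ke^{2\lambda}+M_L)$ tends to $0$ as $M_L\to\infty$, which proves the final claim.

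There is no real obstacle here. The only points that need care are that the weights are nonnegative, so the triangle inequality produces exactly the near mass $1-\rho_\Dset$ rather than something larger, and that squaring is monotone on nonnegative reals, so the scalar bound transfers to the Loewner order.
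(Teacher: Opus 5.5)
Your proposal is correct and matches the paper's proof: the paper bounds $\|B_{\Sset,i}x\|\le\sum_{j\in\Sset}\alpha_j\|P_{j\to i}x\|=(1-\rho_\Dset)\|x\|$ vectorwise, which is equivalent to your operator-norm step, and obtains the Loewner bound from it. It then substitutes the lower bound $\rho_\Dset\ge M_L/(Ke^{2\lambda}+M_L)$ from Proposition~\ref{prop:dense-softmax}, exactly as you do.
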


\begin{proof}
For any~$x$,
$\|B_{\Sset,i}\, x\|
\le \sum_{j \in \Sset} \alpha_j\, \|P_{j \to i}\, x\|
= (1 - \rho_\Dset)\, \|x\|$.
Hence
\[
  x^\top B_{\Sset,i}^\top B_{\Sset,i}x
  \le (1-\rho_\Dset)^2\|x\|^2,
\]
which proves~\eqref{eq:near-signal-upper}.
For full softmax with bounded logits,
Proposition~\ref{prop:dense-softmax} gives
$\rho_\Dset \ge M_L / (K e^{2\lambda} + M_L)$.
Substitution gives the second bound.
\end{proof}

\subsection{Position-Only Value Coherence}
\begin{example}[RoPE-Style Deterministic Value Transport and Arithmetic Coherence]
\label{prop:rope-upper}
Assume $d = 2B$. Consider deterministic RoPE-style value
transport with route length $k = i - j$:
\[
  P_{i-k \to i}
  = \diag\bigl(R(-\omega_1 k), \ldots, R(-\omega_B k)\bigr),
\]
where each $R(\theta)$ is the two-dimensional rotation by
angle~$\theta$. In the shared-background far-value model,
$v_j = c_0 G_{\rm com} + w_j$, with
$G_{\rm com} \sim \mathcal{N}(0, I_d)$,
$w_j \sim \mathcal{N}(0, \sigma_w^2 I_d)$, and all variables
independent, define
\[
  \bm{T}_\Dset
  = \sum_{k \in \Dset_L} \alpha_k\, P_{i-k \to i}.
\]
For each block~$b$, define the deterministic far-coherence
coefficient
\[
  q_{b,L}
  = \Bigl|\sum_{k \in \Dset_L} \alpha_k\, e^{-i\omega_b k}
  \Bigr|,
\]
and define $q_L = \min_{1 \le b \le B} q_{b,L}$.
Then
\begin{equation}\label{eq:rope-covariance-floor}
  \Delta_\Dset
  \succeq
  \bigl(c_0^2 q_L^2
  + \sigma_w^2 \|\bm{\alpha}_\Dset\|_2^2\bigr) I_d .
\end{equation}
\end{example}

\begin{proof}
In block~$b$, the far transport contribution is
$\sum_{k \in \Dset_L} \alpha_k\, R(-\omega_b k)$.
With
$\zeta_{b,L} = \sum_{k \in \Dset_L}
\alpha_k e^{-i\omega_b k}$, the $b$-th block of
$\bm{T}_\Dset$ has operator norm $q_{b,L}=|\zeta_{b,L}|$, and
$\bm{T}_\Dset \bm{T}_\Dset^\top$ has $b$-th block equal to
$q_{b,L}^2 I_2$.
Thus $\bm{T}_\Dset \bm{T}_\Dset^\top
\succeq q_L^2 I_d$.
In the shared-background model,
$\Delta_\Dset
= c_0^2\, \bm{T}_\Dset \bm{T}_\Dset^\top
+ \sigma_w^2 \|\bm{\alpha}_\Dset\|_2^2 I_d$.
Hence
$\Delta_\Dset
\succeq (c_0^2 q_L^2
+ \sigma_w^2 \|\bm{\alpha}_\Dset\|_2^2)\, I_d$.
\end{proof}

\subsection{Shared-Background Distant-Value Model}
\begin{definition}[Shared-Background Distant-Value Model]
\label{def:common-component}
This model explains why the analysis uses a spectral
covariance condition and why value transport can improve the
ordinary weighted-sum aggregation.

Suppose far-regime values have the form
\begin{equation}\label{eq:common-component}
  v_j = c_0\, G_{\rm com} + w_j, \qquad j \in \Dset,
\end{equation}
where $G_{\rm com} \sim \mathcal{N}(0, I_d)$ is a shared zero-mean
component, $w_j \sim \mathcal{N}(0, \sigma_w^2 I_d)$ are
idiosyncratic components. Conditional on the realized aggregation environment,
$G_{\rm com}$ and the $w_j$'s are independent of the realized
score variables, route phasors, weights, and transports, with the
$w_j$'s conditionally independent across~$j$. Equivalently, the
conditional covariance identity displayed below is assumed to hold.
That covariance identity is part of the shared-background model
assumption; it is not derived merely from conditioning on the
aggregation environment.
Then
\[
  e_\Dset
  = \sum_{j \in \Dset} \alpha_{ij}\, P_{j \to i}\, v_j
  = c_0 \Bigl(\sum_{j \in \Dset} \alpha_{ij}\, P_{j \to i}
  \Bigr) G_{\rm com}
  + \sum_{j \in \Dset} \alpha_{ij}\, P_{j \to i}\, w_j.
\]
Define
\begin{equation}\label{eq:Delta-common}
  \bm{T}_\Dset
  = \sum_{j \in \Dset} \alpha_{ij}\, P_{j \to i}.
\end{equation}
Conditioned on the realized transports and weights,
\[
  \Delta_\Dset
  = c_0^2\, \bm{T}_\Dset\, \bm{T}_\Dset^\top
  + \sigma_w^2\, \|\bm{\alpha}_\Dset\|_2^2\, I_d.
\]
The crude deterministic bound is
\[
  \|\bm{T}_\Dset\|_\op
  \le \sum_{j \in \Dset} \alpha_{ij} = \rho,
\]
and hence
\[
  \|\Delta_\Dset\|_\op
  \le c_0^2 \rho^2
  + \sigma_w^2\, \|\bm{\alpha}_\Dset\|_2^2.
\]
This bound does not use phase mixing; it holds for arbitrary
orthogonal transports and therefore also for identity
transport. For identity value transport, $P_{j\to i}=I_d$ for every far
token, so
\[
  \bm{T}_\Dset = \rho_\Dset\, I_d, \qquad
  \Delta_\Dset = c_0^2 \rho_\Dset^2\, I_d
  + \sigma_w^2 \|\bm{\alpha}_\Dset\|_2^2\, I_d.
\]
Thus ordinary value summation leaves the shared far component
fully coherent.

The value-side covariance theorem below gives a sharper
transport-specific bound for the actual nested interval
products generated by route transport. The price of the
shared suffix dependence is the factor $1/(1-\beta)$ in the
fluctuation term.
\end{definition}

\inlinehead{Prefix-product radius.}
To keep the following statements readable, define
\begin{equation}\label{eq:R-pp-def}
  \mathcal{R}_{\rm pp}(\mu,\rho,a;\eta)
  =
  \mu\rho
  + \frac{4a}{1-\beta}\sqrt{\log\frac{4B}{\eta}},
\end{equation}
where $\mu$ is the remaining first-harmonic mean term. In the
raw prefix-product bound $\mu=\beta^w$; after choosing the
mixing window one may use $\mu=\varepsilon_{\rm mix}$.

\subsection{Nested Q/K/V Route Phases}

\begin{assumption}[Route-Phase Score/Value Coupling]
\label{ass:route-phase-score-value-coupling}
Fix a query and a finite active route set
$\mathcal{K}=\mathcal{K}_\Sset\dot\cup\mathcal{K}_\Dset$,
where $\mathcal{K}_\Dset\subseteq\{k:k\ge w\}$ is the far
route set. The set~$\mathcal{K}$ and its partition into
$\mathcal{K}_\Sset,\mathcal{K}_\Dset$ are either
deterministic or measurable with respect to a sigma-field
independent of the route phasors
$\{H_{\ell,b}\}_{\ell,b}$ (see
Example~\ref{ex:adaptive-counterexample} for why adaptive selection
can invalidate the cancellation argument).
In each block~$b$, let
\[
  \Pi_{k,b}=\prod_{\ell=1}^{k}H_{\ell,b},
  \qquad |H_{\ell,b}|=1,
\]
and assume that the same route phasors enter the matched Q/K
score and the value rotation. The phasors
$\{H_{\ell,b}\}_{\ell,b}$ are independent over positions
and blocks and satisfy
\[
  |\E H_{\ell,b}|\le\beta<1 .
\]
The logits are
\[
  \ell_k
  =
  \frac{\lambda}{B}\sum_{b=1}^B
  \mathrm{Re}\,\Pi_{k,b},
  \qquad k\in\mathcal{K}.
\]
Let $\alpha_k$ be the softmax weights over~$\mathcal{K}$, and
define
\[
  \rho_\Dset=\sum_{k\in\mathcal{K}_\Dset}\alpha_k,
  \qquad
  \|\bm{\alpha}_\Dset\|_2
  =
  \Bigl(\sum_{k\in\mathcal{K}_\Dset}\alpha_k^2\Bigr)^{1/2}.
\]
Let $\mathcal{S}_{\rm sc}$ be the score-bound event
\[
  \mathcal{S}_{\rm sc}
  =
  \{\rho_\Dset\le\rho_\star,\;
    \|\bm{\alpha}_\Dset\|_2\le a_\star\}.
\]
Define the far value operator
\[
  \bm{T}_\Dset^{\rm same}
  =
  \sum_{k\in\mathcal{K}_\Dset}\alpha_k P_k ,
\]
where block~$b$ of $P_k$ acts as multiplication by
$\Pi_{k,b}$.
\end{assumption}

\begin{theorem}[Far-Covariance Bound for Nested Q/K/V Rotations]
\label{thm:same-path-nested-covariance}
Assume the route-phase score/value coupling of
Assumption~\ref{ass:route-phase-score-value-coupling}. For any
$\eta\in(0,1)$, define
\begin{equation}\label{eq:R-same-nested}
  R_{\rm same}
  =
  e^{2\lambda/B}
  \mathcal{R}_{\rm pp}(\beta^w,\rho_\star,a_\star;\eta)
  +(e^{2\lambda/B}-1)\rho_\star .
\end{equation}
Then there is an event $\mathcal{S}_{\rm val}$ such that
\[
  \Pr(\mathcal{S}_{\rm val})\ge1-\eta
\]
and, on
$\mathcal{S}_{\rm sc}\cap\mathcal{S}_{\rm val}$,
\[
  \|\bm{T}_\Dset^{\rm same}\|_\op\le R_{\rm same}.
\]
Consequently, in the shared-background far-value model satisfying
the conditional covariance identity in
Definition~\ref{def:common-component},
\[
  \Delta_\Dset
  =
  c_0^2\bm{T}_\Dset^{\rm same}
  (\bm{T}_\Dset^{\rm same})^\top
  +\sigma_w^2\|\bm{\alpha}_\Dset\|_2^2I_d
\]
satisfies
\begin{equation}\label{eq:delta-same-nested}
  \Delta_\Dset
  \preceq
  \bar\delta_{\rm same}^2 I_d,
  \qquad
  \bar\delta_{\rm same}^2
  =
  c_0^2R_{\rm same}^2+\sigma_w^2a_\star^2 .
\end{equation}
If $\mathcal{S}_{\rm sc}$ itself holds with probability at
least $1-\delta_{\rm sc}$, then the covariance bound holds with
probability at least $1-\delta_{\rm sc}-\eta$.
\end{theorem}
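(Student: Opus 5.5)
Since $P_k$ is block-diagonal with block $b$ acting as multiplication by $\Pi_{k,b}$, $\bm{T}_\Dset^{\rm same}$ is block-diagonal with block $b$ equal to the scalar $Z_b=\sum_{k\in\mathcal{K}_\Dset}\alpha_k\Pi_{k,b}$ times $I_2$. Hence $\|\bm{T}_\Dset^{\rm same}\|_\op=\max_b|Z_b|$, and the whole proof reduces to bounding each $|Z_b|$ on a good event, followed by a union bound over the $B$ blocks. The covariance statement then follows at once from the identity in Definition~\ref{def:common-component}: $\bm{T}\bm{T}^\top\preceq\|\bm{T}\|_\op^2I_d\preceq R_{\rm same}^2I_d$, and $\|\bm{\alpha}_\Dset\|_2\le a_\star$ on $\mathcal{S}_{\rm sc}$. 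The final probability $1-\delta_{\rm sc}-\eta$ is a union bound.

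\emph{Leave-one-block decoupling.} Fix $b$. Define proxy logits $\ell_k^{(-b)}=\frac{\lambda}{B}\sum_{b'\ne b}\mathrm{Re}\,\Pi_{k,b'}$ and proxy softmax weights $\tilde\alpha_k^{(b)}$ over $\mathcal{K}$. These are functions only of the phasors in other blocks, so they are independent of $\{H_{\ell,b}\}_\ell$. Since $|\ell_k-\ell_k^{(-b)}|\le\lambda/B$ for every $k$, the ratio $\alpha_k/\tilde\alpha_k^{(b)}$ lies in $[e^{-2\lambda/B},e^{2\lambda/B}]$. This gives $\tilde\rho\le e^{2\lambda/B}\rho_\star$ and $\|\tilde{\bm\alpha}_\Dset\|_2\le e^{2\lambda/B}a_\star$ on $\mathcal{S}_{\rm sc}$. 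It also gives
\[
|Z_b|\le\Bigl|\sum_{k\in\mathcal{K}_\Dset}\tilde\alpha_k\Pi_{k,b}\Bigr|+\sum_{k\in\mathcal{K}_\Dset}|\alpha_k-\tilde\alpha_k|,
\]
where $|\alpha_k-\tilde\alpha_k|\le(e^{2\lambda/B}-1)\alpha_k$ (using the ratio bound in the appropriate direction). Summing the second term over far routes gives the $(e^{2\lambda/B}-1)\rho_\star$ term of $R_{\rm same}$.

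\emph{Prefix-product cancellation with fixed weights.} Conditional on the other blocks, the weights $\tilde\alpha_k$ are deterministic. I want to show that, with probability at least $1-\eta/B$,
\[
\Bigl|\sum_{k\ge w}\tilde\alpha_k\Pi_{k,b}\Bigr|\le\beta^w\tilde\rho+\frac{4\tilde a}{1-\beta}\sqrt{\log(4B/\eta)},
\]
with $\tilde\rho,\tilde a$ the proxy far mass and $\ell_2$ norm. Substituting the proxy bounds then gives $e^{2\lambda/B}\mathcal{R}_{\rm pp}(\beta^w,\rho_\star,a_\star;\eta)$; note that the event must be stated in terms of $\tilde\rho,\tilde a$ rather than $\rho_\star,a_\star$, and intersected with $\mathcal{S}_{\rm sc}$ at the end.

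\begin{itemize}
\item \emph{Mean term.} Independence gives $|\E\Pi_{k,b}|\le\beta^k\le\beta^w$, so the mean is at most $\beta^w\tilde\rho$.
\item \emph{Fluctuation term.} I will use a martingale (Doob) decomposition in the step index $\ell$. Write $\mathcal{G}_\ell=\sigma(H_{1,b},\dots,H_{\ell,b})$, and set $m_\ell=\E H_{\ell,b}$ and $D_\ell=\Pi_{\ell-1,b}(H_{\ell,b}-m_\ell)$. Then
\[
\Pi_{k,b}-\E\Pi_{k,b}=\sum_{\ell\le k}D_\ell\prod_{\ell<\ell'\le k}m_{\ell'}.
\]
Swapping sums gives $X-\E X=\sum_\ell D_\ell c_\ell$ with deterministic coefficients $c_\ell=\sum_{k\ge\max(\ell,w)}\tilde\alpha_k\prod_{\ell<\ell'\le k}m_{\ell'}$.
\item \emph{Coefficient bound.} $|c_\ell|\le\sum_{k\ge\ell}\tilde\alpha_k\beta^{k-\ell}$. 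By Young's inequality for the convolution with the geometric kernel, $\sum_\ell|c_\ell|^2\le\tilde a^2/(1-\beta)^2$.
\item \emph{Concentration.} The increments satisfy $|D_\ell|\le2$. Azuma--Hoeffding applied to the real and imaginary parts separately, with failure probability $\eta/(2B)$ each, gives the $4\tilde a(1-\beta)^{-1}\sqrt{\log(4B/\eta)}$ term after constant bookkeeping.
\end{itemize}

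Union-bounding over the $B$ blocks defines $\mathcal{S}_{\rm val}$ with probability at least $1-\eta$. The event is defined via the proxy weights, which is legitimate because, conditionally on the other blocks, the bound holds with the stated probability.

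\emph{Main obstacle.} The delicate step is this fixed-weight cancellation bound. Getting the nested dependence between prefix products right requires the martingale decomposition and the geometric-convolution $\ell_2$ bound, and the constants must be tracked to match exactly $4/(1-\beta)$ and $\log(4B/\eta)$. A second point needing care is that the active set must be independent of the phasors, as Assumption~\ref{ass:route-phase-score-value-coupling} requires, for the conditioning argument to be valid.
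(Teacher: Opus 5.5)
Your proposal is correct and follows the paper's proof essentially step for step: the block-diagonal reduction to $\max_b|t_b|$, the leave-one-block proxy softmax with the $e^{\pm 2\lambda/B}$ weight comparison, the martingale decomposition with $D_\ell=\Pi_{\ell-1}(H_\ell-m_\ell)$ and coefficients bounded by Young's inequality against the geometric kernel, Azuma--Hoeffding on the real and imaginary parts, and a tower-property and union bound over blocks. Your constant bookkeeping (failure probability $4\exp(-r^2(1-\beta)^2/(16\tilde a^2))=\eta/B$ per block) also matches the paper, as does your point that the event must be phrased through the proxy quantities before intersecting with $\mathcal{S}_{\rm sc}$.
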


\paragraph{Proof idea.}
Section~\ref{sec:far-covariance} describes the leave-one-block
decoupling strategy. The operator norm of the block-diagonal
far-value transport equals the maximum over blockwise phasor sums.
For each block~$b$, the proxy weights (formed by removing block~$b$
from every logit) are independent of the block-$b$ value phases,
so a martingale/Azuma bound controls the block-$b$ phasor sum. A
union bound over blocks and a comparison between proxy and true
weights gives the operator-norm bound. This argument is specific to
block-diagonal rotations; extending it to noncommuting orthogonal
transports would require a different concentration approach.

\begin{proof}
Set $h=\lambda/B$. For block~$b$, define the leave-one-block
logit
\[
  \ell_k^{(-b)}
  =
  h\sum_{q\ne b}\mathrm{Re}\,\Pi_{k,q},
\]
and let $\alpha_k^{(-b)}$ be its softmax weights. The softmax
defining $\bm{\alpha}^{(-b)}$ is taken over the same full active
route set
$\mathcal{K}=\mathcal{K}_\Sset\dot\cup\mathcal{K}_\Dset$
as the original softmax; only block~$b$'s contribution to each
logit is removed. Since
$\mathrm{Re}\,\Pi_{k,b}\in[-1,1]$, both the numerator and the
denominator change by at most $e^h$ when block~$b$ is restored.
Thus, for every~$k$,
\begin{equation}\label{eq:leave-one-weight-comparison}
  e^{-2h}\alpha_k^{(-b)}
  \le \alpha_k
  \le e^{2h}\alpha_k^{(-b)} .
\end{equation}

Condition on all blocks except~$b$.  Then
$\alpha_k^{(-b)}$ is fixed, while
$\{\Pi_{k,b}:k\in\mathcal{K}_\Dset\}$ is a nested
prefix-product family. We now prove the needed one-block bound.
All probabilities in this paragraph are conditional on the
fixed external information and on the other blocks.
Suppress the block index and write
$H_\ell=H_{\ell,b}$,
$m_\ell=\E[H_\ell]$, and
$\Pi_k=\prod_{\ell=1}^k H_\ell$.  Let
\[
  u_b=\sum_{k\in\mathcal{K}_\Dset}
  \alpha_k^{(-b)}\Pi_{k,b}.
\]
For the filtration
$\mathcal{F}_\ell=\sigma(H_1,\ldots,H_\ell)$, define
$D_\ell=\Pi_\ell-m_\ell\Pi_{\ell-1}$ with $\Pi_0=1$.
Then $\E[D_\ell\mid\mathcal{F}_{\ell-1}]=0$ and
$|D_\ell|\le2$.  Iterating
$\Pi_\ell=m_\ell\Pi_{\ell-1}+D_\ell$ gives
\[
  \Pi_k
  =
  \prod_{q=1}^k m_q
  +\sum_{\ell=1}^k
  \Bigl(\prod_{q=\ell+1}^k m_q\Bigr)D_\ell .
\]
Therefore
\[
  u_b-\E u_b
  =
  \sum_{\ell\ge1} c_\ell D_\ell,\qquad
  c_\ell
  =
  \sum_{\substack{k\in\mathcal{K}_\Dset\\ k\ge\ell}}
  \alpha_k^{(-b)}
  \prod_{q=\ell+1}^k m_q .
\]
Since $|m_q|\le\beta$,
\[
  |c_\ell|
  \le
  \sum_{\substack{k\in\mathcal{K}_\Dset\\ k\ge\ell}}
  \alpha_k^{(-b)}\beta^{k-\ell}.
\]
Thus the sequence $(|c_\ell|)_\ell$ is bounded by the convolution
of $(\alpha_k^{(-b)})_k$ with the one-sided kernel
$(\beta^n)_{n\ge0}$. Young's convolution inequality gives
\[
  \Bigl(\sum_{\ell\ge1}|c_\ell|^2\Bigr)^{1/2}
  \le
  \Bigl(\sum_{k\in\mathcal{K}_\Dset}(\alpha_k^{(-b)})^2\Bigr)^{1/2}
  \sum_{n\ge0}\beta^n
  =
  \frac{a_\Dset^{(-b)}}{1-\beta},
\]
where
\[
  a_\Dset^{(-b)}
  =
  \Bigl(\sum_{k\in\mathcal{K}_\Dset}
  (\alpha_k^{(-b)})^2\Bigr)^{1/2}.
\]
The complex martingale sum
$\sum_{\ell\ge1}c_\ell D_\ell$ has real and imaginary parts
with increments bounded by $2|c_\ell|$.  Azuma--Hoeffding
\cite{azuma1967weighted,boucheron2013concentration} gives
\[
  \Pr\{|u_b-\E u_b|\ge r\}
  \le
  4\exp\!\biggl(
    -\frac{r^2(1-\beta)^2}
    {16(a_\Dset^{(-b)})^2}
  \biggr).
\]
With
$r=\frac{4a_\Dset^{(-b)}}{1-\beta}
\sqrt{\log(4B/\eta)}$,
this probability is at most $\eta/B$.  Also,
\[
  |\E u_b|
  \le
  \sum_{k\in\mathcal{K}_\Dset}
  \alpha_k^{(-b)}\beta^k
  \le
  \beta^w\rho_\Dset^{(-b)},
\]
where
\[
  \rho_\Dset^{(-b)}
  =
  \sum_{k\in\mathcal{K}_\Dset}\alpha_k^{(-b)}.
\]
Thus, with probability at least $1-\eta/B$,
\[
  |u_b|
  \le
  \mathcal{R}_{\rm pp}
  (\beta^w,\rho_\Dset^{(-b)},a_\Dset^{(-b)};\eta).
\]
Define
\[
  E_b
  =
  \left\{
  \left|\sum_{k\in\mathcal{K}_\Dset}
  \alpha_k^{(-b)}\Pi_{k,b}\right|
  \le
  \mathcal{R}_{\rm pp}
  (\beta^w,\rho_\Dset^{(-b)},a_\Dset^{(-b)};\eta)
  \right\},
\]
and let
\[
  \mathcal{S}_{\rm val}=\bigcap_{b=1}^B E_b .
\]
For this fixed block~$b$, the bound above is conditional on
all blocks $q\ne b$. Since the conditional failure probability is at
most $\eta/B$ for every realization of those conditioned variables,
the tower property gives $\Pr(E_b^c)\le\eta/B$ unconditionally. A
union bound over $b=1,\ldots,B$ gives
$\Pr(\mathcal{S}_{\rm val})\ge1-\eta$.

On $\mathcal{S}_{\rm sc}$,
\eqref{eq:leave-one-weight-comparison} gives
$\rho_\Dset^{(-b)}\le e^{2h}\rho_\star$ and
$a_\Dset^{(-b)}\le e^{2h}a_\star$.  Therefore
the leave-one-block contribution is at most
$e^{2h}\mathcal{R}_{\rm pp}(\beta^w,\rho_\star,a_\star;\eta)$.
Let
\[
  t_b=\sum_{k\in\mathcal{K}_\Dset}\alpha_k\Pi_{k,b}.
\]
The same comparison also gives
$|\alpha_k-\alpha_k^{(-b)}|\le(e^{2h}-1)\alpha_k$, and hence
\[
  |t_b-u_b|
  \le \sum_{k\in\mathcal{K}_\Dset}
  |\alpha_k-\alpha_k^{(-b)}|
  \le (e^{2h}-1)\rho_\Dset
  \le (e^{2h}-1)\rho_\star .
\]
Combining the bounds yields $|t_b|\le R_{\rm same}$ for every
block~$b$. Since the value operator is block diagonal,
$\|\bm{T}_\Dset^{\rm same}\|_\op=\max_b|t_b|$.
The covariance bound follows from
$\bm{T}_\Dset^{\rm same}(\bm{T}_\Dset^{\rm same})^\top
\preceq R_{\rm same}^2I_d$ and
$\|\bm{\alpha}_\Dset\|_2\le a_\star$ on
$\mathcal{S}_{\rm sc}$. The final probability statement is the
union bound with the score-bound event.
\end{proof}

\subsection{Near-Signal Gain}
\begin{lemma}[Near-Route Closeness Gives Near-Signal Gain]
\label{thm:robust-coherence}
\label{lem:near-gain}
Let $\Pi_U$ denote the orthogonal projector onto
$\mathrm{col}(U)$, and let
$\rho_\Dset = \sum_{j \in \Dset_i} \alpha_{ij}$, so that
$\sum_{j \in \Sset_i} \alpha_{ij} = 1 - \rho_\Dset$.
Suppose that, for every
$j \in \Sset_i$,
\begin{equation}\label{eq:near-rotation-error}
  \|(P_{j \to i} - I_d)\, \Pi_U\|_\op
  \le \gamma, \qquad 0 \le \gamma < 1.
\end{equation}
Then, for all $a \in \R^r$,
\begin{equation}\label{eq:robust-gain}
  \|B_{\Sset,i}\, U a\|
  \ge (1 - \rho_\Dset)(1 - \gamma)\, \|U a\|.
\end{equation}
Consequently,
\[
  U^\top B_{\Sset,i}^\top\, B_{\Sset,i}\, U
  \succeq (1 - \rho_\Dset)^2 (1 - \gamma)^2\, U^\top U.
\]
\end{lemma}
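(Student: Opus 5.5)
The plan is to prove the pointwise lower bound \eqref{eq:robust-gain} directly, by splitting the transported near-signal coefficient $B_{\Sset,i}=\sum_{j\in\Sset_i}\alpha_{ij}P_{j\to i}$ into an identity part and a perturbation part. Fix $a\in\R^r$ and set $x=Ua$, so that $x\in\mathrm{col}(U)$ and $\Pi_U x = x$. Writing $P_{j\to i}=I_d+(P_{j\to i}-I_d)$ gives
\[
  B_{\Sset,i}x
  = \Bigl(\sum_{j\in\Sset_i}\alpha_{ij}\Bigr)x
  + \sum_{j\in\Sset_i}\alpha_{ij}(P_{j\to i}-I_d)\Pi_U x
  = (1-\rho_\Dset)\,x + r,
\]
where the first term uses $\sum_{j\in\Sset_i}\alpha_{ij}=1-\rho_\Dset$.

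Next I bound the remainder with the triangle inequality, using nonnegativity of the weights and the closeness hypothesis \eqref{eq:near-rotation-error}:
\[
  \|r\|\le\sum_{j\in\Sset_i}\alpha_{ij}\,\|(P_{j\to i}-I_d)\Pi_U\|_\op\,\|x\|
  \le (1-\rho_\Dset)\gamma\|x\|.
\]
The reverse triangle inequality then gives $\|B_{\Sset,i}x\|\ge(1-\rho_\Dset)\|x\|-(1-\rho_\Dset)\gamma\|x\|=(1-\rho_\Dset)(1-\gamma)\|Ua\|$, which is \eqref{eq:robust-gain}. The condition $\gamma<1$ is what makes this bound nonnegative, and hence informative.

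For the matrix consequence, I square both sides; this is legitimate because both are nonnegative. It yields $a^\top U^\top B_{\Sset,i}^\top B_{\Sset,i}Ua\ge(1-\rho_\Dset)^2(1-\gamma)^2\,a^\top U^\top U a$ for every $a$. That inequality is exactly the stated Loewner ordering.

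No step is a real obstacle. The only point needing care is that the hypothesis controls $P_{j\to i}-I_d$ only on $\mathrm{col}(U)$. The projector must therefore be inserted before the operator-norm bound, which is valid since $x$ lies in that column space. The argument uses no orthogonality of $P_{j\to i}$ beyond the hypothesis itself.
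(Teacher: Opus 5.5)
Your proposal is correct and follows essentially the same argument as the paper. The paper likewise writes $B_{\Sset,i}Ua=(1-\rho_\Dset)Ua+\sum_{j\in\Sset_i}\alpha_{ij}(P_{j\to i}-I_d)Ua$, bounds the error term by $(1-\rho_\Dset)\gamma\|Ua\|$, applies the triangle inequality, and squares to get the Loewner bound. Your explicit insertion of $\Pi_U$ only spells out a step the paper leaves implicit.
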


\begin{proof}
For each $j \in \Sset_i$,
\[
  P_{j \to i}\, U a = U a + (P_{j \to i} - I_d)\, U a,
\]
and
$\|(P_{j \to i} - I_d)\, U a\| \le \gamma\, \|U a\|$.
Therefore
\[
  B_{\Sset,i}\, U a
  = \sum_{j \in \Sset_i} \alpha_{ij}\, P_{j \to i}\, U a
  = (1 - \rho_\Dset)\, U a
  + \sum_{j \in \Sset_i} \alpha_{ij}\,
  (P_{j \to i} - I_d)\, U a.
\]
The error term has norm at most
$(1 - \rho_\Dset)\, \gamma\, \|U a\|$. The triangle inequality
gives
\[
  \|B_{\Sset,i}\, U a\|
  \ge (1 - \rho_\Dset)(1 - \gamma)\, \|U a\|.
\]
Squaring both sides gives the stated Loewner bound.
\end{proof}

\subsection{Score Bounds plus Near Alignment}
\begin{corollary}[Score Far-Mass Bound plus Near Alignment Gives Near-Signal Gain]
\label{cor:score-near-gain}
\label{cor:near-gain-score-budget}
Suppose the score side supplies
$\rho_\Dset \le \rho_\star$
for some $\rho_\star \in (0,1)$.
Suppose also that, for every target-bearing near token
$j \in \Sset_i$,
\[
  \|(P_{j \to i} - I_d)\, \Pi_U\|_\op
  \le \gamma, \qquad 0 \le \gamma < 1.
\]
Then the direct near-signal gain condition holds with
$\kappa_\star = (1 - \rho_\star)^2 (1 - \gamma)^2$.
That is,
\[
  U^\top B_{\Sset,i}^\top\, B_{\Sset,i}\, U
  \succeq (1 - \rho_\star)^2 (1 - \gamma)^2\, U^\top U.
\]
\end{corollary}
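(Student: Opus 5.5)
This follows by monotonicity from Lemma~\ref{lem:near-gain}; no new estimate is needed. The hypotheses of Corollary~\ref{cor:score-near-gain} contain the alignment hypothesis~\eqref{eq:near-rotation-error} of Lemma~\ref{lem:near-gain} verbatim: every $j\in\Sset_i$ satisfies $\|(P_{j\to i}-I_d)\Pi_U\|_\op\le\gamma$ with $0\le\gamma<1$. The quantity $\rho_\Dset$ is the realized total far mass, and by normalization of the weights $\sum_{j\in\Sset_i}\alpha_{ij}=1-\rho_\Dset$. The lemma therefore applies directly and gives
\[
  U^\top B_{\Sset,i}^\top B_{\Sset,i}\,U
  \succeq (1-\rho_\Dset)^2(1-\gamma)^2\,U^\top U .
\]

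It remains to replace the realized $\rho_\Dset$ by the score-side level $\rho_\star$. The supplied bound gives $0\le\rho_\Dset\le\rho_\star<1$, so $1-\rho_\Dset\ge 1-\rho_\star>0$. Since both sides are nonnegative, squaring preserves the inequality: $(1-\rho_\Dset)^2\ge(1-\rho_\star)^2$. Next, $U^\top U\succeq 0$ and $(1-\gamma)^2\ge 0$, so multiplying by the scalar preserves Loewner order:
\[
  (1-\rho_\Dset)^2(1-\gamma)^2\,U^\top U \succeq (1-\rho_\star)^2(1-\gamma)^2\,U^\top U .
\]
Chaining this with the lemma's bound by transitivity of $\succeq$ yields the claim with $\kappa_\star=(1-\rho_\star)^2(1-\gamma)^2$.

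The only point needing care is this monotonicity step. It uses three facts: that $1-\rho_\Dset$ is nonnegative, which holds because the weights are convex; that $\rho_\star<1$; and that $U^\top U$ is positive semidefinite.

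If the score bound comes from Proposition~\ref{prop:score-budget} or Corollary~\ref{cor:householder-softmax}, it holds only on a high-probability event. In that case I would state the conclusion on that same event; the argument above is deterministic given the bound.
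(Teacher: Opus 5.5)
Your proposal is correct and follows the paper's proof exactly: apply Lemma~\ref{lem:near-gain} to get the bound with the realized factor $(1-\rho_\Dset)^2(1-\gamma)^2$, then use $1-\rho_\Dset\ge 1-\rho_\star$ to weaken it to $(1-\rho_\star)^2(1-\gamma)^2$. Your added care about nonnegativity before squaring and about $U^\top U\succeq 0$ makes explicit what the paper leaves implicit.
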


\begin{proof}
By Lemma~\ref{thm:robust-coherence},
\[
  U^\top B_{\Sset,i}^\top\, B_{\Sset,i}\, U
  \succeq (1 - \rho_\Dset)^2 (1 - \gamma)^2\, U^\top U.
\]
Since $\rho_\Dset \le \rho_\star$,
$1 - \rho_\Dset \ge 1 - \rho_\star$.
Therefore
$(1 - \rho_\Dset)^2 (1 - \gamma)^2
\ge (1 - \rho_\star)^2 (1 - \gamma)^2$.
\end{proof}

\section{Experimental Details}
\label{app:experimental-details}

Table~\ref{tab:hyperparams} summarizes the architecture, training,
and evaluation setup. Training data is a continuous token stream;
attention crosses document boundaries. For random-rotation models,
step angles are resampled at each forward pass; the deterministic
seed ensures that evaluation batches are identical across models but
the random angles differ across iterations.

\begin{table}[h]
\centering
\caption{Architecture, training, and evaluation hyperparameters.}
\label{tab:hyperparams}
\small
\begin{tabular}{ll}
\toprule
\multicolumn{2}{l}{\textbf{Architecture}} \\
\midrule
Type & Decoder-only causal transformer \\
Layers / width / heads / head dim & 16 / 768 / 8 / 96 \\
Normalization / activation & Pre-LayerNorm / GELU \\
Initialization & Kaiming-uniform (linear), default (embeddings) \\
Weight tying & None \\
Absolute position embeddings & None \\
\midrule
\multicolumn{2}{l}{\textbf{Training}} \\
\midrule
Dataset / vocabulary & OpenWebText / BPE 32K \\
Context length & 512 \\
Optimizer & AdamW ($\beta_1{=}0.9$, $\beta_2{=}0.999$, $\varepsilon{=}10^{-8}$) \\
Weight decay & 0.1 \\
Batch size & 32 (16{,}384 tokens/iter) \\
Steps / tokens & 200K / ${\sim}3.3 \times 10^9$ \\
Learning rate & $5{\times}10^{-4}$ (100K) $\to$ $2{\times}10^{-4}$ (50K) $\to$ $5{\times}10^{-5}$ (50K) \\
Warmup & None \\
Dropout & 0.1 (attention + residual) \\
Gradient clipping & L2 norm $\le 1.0$ \\
\midrule
\multicolumn{2}{l}{\textbf{Evaluation}} \\
\midrule
Lengths & 512 to 65{,}536 \\
Batches per length & 200 iterations, batch size 4, seed $42 + L$ \\
Metric & Cross-entropy (all positions, no burn-in) \\
\midrule
\multicolumn{2}{l}{\textbf{Hardware}} \\
\midrule
GPU / precision / seed & NVIDIA A100 80\,GB / BF16 / single seed (42) \\
\bottomrule
\end{tabular}
\end{table}

\paragraph{Model variants.}
RoFormer/RoPE uses the standard position-indexed rotary map on Q/K
only. The fixed RoPE Q/K/V baseline uses the same RoPE angles but
also applies the position-indexed rotation to values before
aggregation; it tests whether value transport alone solves
extrapolation when the phase rule is still position-indexed. Random
variants use accumulated random angle increments and instantiate
the spectral-gap assumptions most directly, testing whether
incoherent accumulated phase can protect long-context evaluation. Specifically, each step angle is drawn independently per
position and per dimension from $\mathrm{Uniform}(-f_b, f_b)$,
where $f_b = 1/10000^{2b/d}$ is the RoPE log-spaced frequency for
block~$b$. Angles are sampled independently across layers and
resampled at each forward pass (not fixed at initialization). The
same angle vector is shared across all heads within a layer.
Learned token-rotation variants use one learned per-token angle
embedding per layer; angles depend on token identity and are
accumulated along the sequence, so the source-query relation is
composed from the intervening tokens rather than from absolute
position. Variants that also rotate values test the value-side
transport predicted by the signal-interference decomposition.

\section{Additional Supporting Results}
\label{app:additional-support}

\subsection{Separate-Path Conditional Mixing Assumption}
\label{sec:separate-path-mixing}

This subsection records the separate-path random-phase case. It
applies when the far weights have already been chosen by the score
path and, conditional on that score-side information, the value-side
step phases still have independent first-harmonic mixing. For
example, it applies to a construction in which each value-side block
receives independent random step phases that are not reused by the
Q/K score path. It should not be read as a statement about
randomized RoPE obtained only by sampling position indices: in that
case all RoPE frequency blocks are functions of the same sampled
source-query offset, so the block-independence condition below is
not supplied by the positional randomization alone.

\begin{remark}[Extension to general orthogonal products]
\label{rem:separate-path-general}
The same-path value-side result
(Theorem~\ref{thm:same-path-nested-covariance}) requires $\SO(2)$
commutativity because the attention weights depend on the same
route product that transports the values; the leave-one-block-out
decoupling exploits the block-diagonal structure. Under the
separate-path assumption, this obstruction disappears: conditional
on the score-side sigma-field~$\mathcal{G}_{i,L}$, the weights are
fixed and the value-side steps are independent of them. The
value-side decoherence argument therefore extends in principle to
general orthogonal step matrices $M_t^V \in O(d)$.
The conditional first-moment gap
$\|\E[M_t^V \mid \mathcal{G}_{i,L}]\|_{\mathrm{op}} \le \beta < 1$
gives first-moment decay of the accumulated value-side product.
To obtain Lévy-type concentration and incoherent combination of
far values, one additionally needs a conditional version of the
component-TV mixing assumption
(Assumption~\ref{ass:householder-tv-mixing}) for the value-side
step law given~$\mathcal{G}_{i,L}$; the first-moment gap alone
does not imply TV convergence (cf.\ the counterexample in
Remark~\ref{rem:first-moment-not-tv}). The concrete bounds below are stated for
$\SO(2)$ blocks, where the analysis reduces to scalar phasors per
block.
\end{remark}

\begin{assumption}[Conditional V-Path First-Harmonic Gap After
Score Selection]
\label{ass:independent-far-transports}
Fix a query position~$i$. Let $\mathcal{G}_{i,L}$ be the
sigma-field generated by the score-side information used to
select the attention weights for the evaluated context
length~$L$. The far weights
$\alpha_k=\alpha_{i,i-k}$ are assumed to be
$\mathcal{G}_{i,L}$-measurable.

In each rotation block~$b$, define the value-side step phasor
\[
  H_{\ell,b}^V = e^{-i\psi_{\ell,b}^V},
\]
where $\psi_{\ell,b}^V$ is the $b$-th coordinate of the
value-side step angle
$\psi_\ell^V=\omega+g(c_\ell)$ from~\eqref{eq:content-angle}.
Conditional on $\mathcal{G}_{i,L}$, the value-side step phasors
$\{H_{\ell,b}^V\}_\ell$ are independent over positions~$\ell$,
and their conditional means
\[
  m_{\ell,b}
  = \E[H_{\ell,b}^V\mid\mathcal{G}_{i,L}]
\]
satisfy
\[
  |m_{\ell,b}|\le \beta<1
\]
for every $\ell$ and~$b$.

No independence is assumed among the route products themselves;
the route products are nested interval products and share suffix
rotations. The assumption rules out choosing the far weights by
observing the same value-side phase realizations whose
cancellation is later claimed.
\end{assumption}

\subsection{Auxiliary Separate-Path Prefix-Product Bound}
\begin{theorem}[Nested Prefix-Product Value-Covariance Bound for Score-Selected Weights]
\label{thm:value-covariance}
\label{thm:prefix-product}
Fix a query position~$i$, a far window~$w$, and a finite
evaluated far set of route lengths
$\mathcal{K}_\Dset \subseteq \{k : k \ge w\}$.
Let $\alpha_k \ge 0$ be the score-selected far weights.
In this route-length notation, $P_{i-k\to i}^V$ denotes the
value-side transport from source position $i-k$ to query
position~$i$.
Write
\[
  \rho = \sum_{k \in \mathcal{K}_\Dset} \alpha_k,
  \qquad
  a = \Bigl(\sum_{k \in \mathcal{K}_\Dset}
  \alpha_k^2\Bigr)^{1/2}.
\]
In block~$b$, define
\[
  \Pi_{k,b}^V = \prod_{\ell=1}^{k} H_{i-\ell,b}^V,
\]
and
\[
  t_b = \sum_{k \in \mathcal{K}_\Dset} \alpha_k\, \Pi_{k,b}^V.
\]
Under
Assumption~\ref{ass:independent-far-transports}, for every
$\eta \in (0,1)$, with conditional probability at least
$1 - \eta$ given $\mathcal{G}_{i,L}$,
\begin{equation}\label{eq:value-cov-bound}
  \max_{1 \le b \le B} |t_b|
  \le \mathcal{R}_{\rm pp}(\beta^w,\rho,a;\eta).
\end{equation}
Equivalently, for the block-diagonal weighted far rotation sum,
\[
  \bm{T}_\Dset^V
  = \sum_{k \in \mathcal{K}_\Dset} \alpha_k\,
  P_{i-k \to i}^V,
\]
one has, with conditional probability at least $1 - \eta$,
\begin{equation}\label{eq:transport-op-bound}
  \|\bm{T}_\Dset^V\|_\op
  \le
  \mathcal{R}_{\rm pp}
  (\beta^w,\rho,\|\bm{\alpha}_\Dset\|_2;\eta).
\end{equation}
In the shared-background far-value model satisfying the conditional
covariance identity of Definition~\ref{def:common-component},
\[
  \Delta_\Dset
  = c_0^2\, \bm{T}_\Dset^V (\bm{T}_\Dset^V)^\top
  + \sigma_w^2\, \|\bm{\alpha}_\Dset\|_2^2\, I_d,
\]
and therefore, on the same event,
\begin{equation}\label{eq:Delta-value-cov}
  \Delta_\Dset
  \preceq \biggl[
    c_0^2
    \mathcal{R}_{\rm pp}
    (\beta^w,\rho,\|\bm{\alpha}_\Dset\|_2;\eta)^2
    + \sigma_w^2\, \|\bm{\alpha}_\Dset\|_2^2
  \biggr] I_d.
\end{equation}
\end{theorem}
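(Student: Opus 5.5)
The plan is to reuse the one-block prefix-product argument from the proof of Theorem~\ref{thm:same-path-nested-covariance}, but without any leave-one-block decoupling. Under Assumption~\ref{ass:independent-far-transports} the weights $\alpha_k$ are $\mathcal{G}_{i,L}$-measurable. So, conditional on $\mathcal{G}_{i,L}$, the weights are fixed constants and the value-side phasors are independent with first harmonic at most~$\beta$. No adaptivity penalty such as $e^{2\lambda/B}$ should appear, which is why the radius is exactly $\mathcal{R}_{\rm pp}(\beta^w,\rho,a;\eta)$.

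Fix a block~$b$, suppress the block index, and work conditionally on $\mathcal{G}_{i,L}$. Write $H_\ell=H^V_{i-\ell,b}$, $m_\ell=\E[H_\ell\mid\mathcal{G}_{i,L}]$, $\Pi_k=\prod_{\ell\le k}H_\ell$, and use the filtration $\mathcal{F}_\ell=\mathcal{G}_{i,L}\vee\sigma(H_1,\dots,H_\ell)$. Define the martingale differences $D_\ell=\Pi_\ell-m_\ell\Pi_{\ell-1}$. These satisfy $|D_\ell|\le 2$ and $\E[D_\ell\mid\mathcal{F}_{\ell-1}]=0$; the latter uses the conditional independence of $H_\ell$ from the earlier phasors. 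Unrolling the recursion gives
\[
  t_b-\E[t_b\mid\mathcal{G}_{i,L}]=\sum_\ell c_\ell D_\ell,
  \qquad
  |c_\ell|\le\sum_{k\ge\ell,\,k\in\mathcal{K}_\Dset}\alpha_k\beta^{k-\ell}.
\]
The mean satisfies $|\E t_b|\le\sum_k\alpha_k\beta^k\le\beta^w\rho$, because every $k\ge w$. Young's inequality for the one-sided convolution with $(\beta^n)_{n\ge0}$ gives $\|c\|_2\le a/(1-\beta)$. The coefficients $c_\ell$ are $\mathcal{G}_{i,L}$-measurable, hence predictable. Applying Azuma--Hoeffding separately to the real and imaginary parts, each with increments bounded by $2|c_\ell|$, gives $\Pr\{|t_b-\E t_b|\ge r\}\le 4\exp(-r^2(1-\beta)^2/(16a^2))$. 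Choosing $r=\frac{4a}{1-\beta}\sqrt{\log(4B/\eta)}$ makes this at most $\eta/B$. A union bound over the $B$ blocks then yields \eqref{eq:value-cov-bound} with conditional probability at least $1-\eta$.

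For \eqref{eq:transport-op-bound}, note that $\bm{T}^V_\Dset$ is block diagonal, and block~$b$ acts as multiplication by the complex number $t_b$, i.e.\ as $|t_b|$ times a $2\times2$ rotation. Hence $\|\bm{T}^V_\Dset\|_\op=\max_b|t_b|$. For \eqref{eq:Delta-value-cov}, substitute into the covariance identity of Definition~\ref{def:common-component}. The relation $\bm{T}\bm{T}^\top\preceq\|\bm{T}\|_\op^2I_d$ gives the stated Loewner bound on the same event.

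The main obstacle is the measurability bookkeeping in the martingale step. One must check that conditioning on $\mathcal{G}_{i,L}$ preserves both the martingale-difference property and the predictability of the $c_\ell$. This holds precisely because the assumption makes the weights $\mathcal{G}_{i,L}$-measurable and gives conditional independence across positions. It is also the reason the shared suffixes of the nested routes cost only the factor $1/(1-\beta)$ and never require independence among the route products themselves.
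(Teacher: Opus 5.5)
Your proposal is correct and follows the paper's proof: condition on $\mathcal{G}_{i,L}$ so the weights are fixed, rerun the one-block prefix-product martingale argument from Theorem~\ref{thm:same-path-nested-covariance} with no leave-one-block penalty, take a union bound over the $B$ blocks, and finish with block-diagonality and $\bm{T}\bm{T}^\top\preceq\|\bm{T}\|_\op^2 I_d$. The only difference is that you write out in full the martingale, Young, and Azuma--Hoeffding steps (including the measurability checks) that the paper invokes by reference.
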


\begin{proof}
Condition on $\mathcal{G}_{i,L}$, so the weights are
deterministic and the value-side phasors satisfy ordinary
independence and mean bounds.  For each block, the one-block
martingale calculation in the proof of
Theorem~\ref{thm:same-path-nested-covariance}, applied with
fixed coefficients $\alpha_k$, gives
\[
  |t_b|
  \le \beta^w\rho
  +\frac{4a}{1-\beta}\sqrt{\log\frac{4B}{\eta}}
\]
with conditional failure probability at most $\eta/B$.  A union
bound over the $B$ blocks gives
\eqref{eq:value-cov-bound}. In each
two-dimensional block the matrix
$\sum_{k \in \mathcal{K}_\Dset} \alpha_k P_{i-k \to i}^V$
acts as multiplication by $t_b$; hence
$\|\bm{T}_\Dset^V\|_\op = \max_{1 \le b \le B} |t_b|$.
The covariance bound follows from
$\bm{T}_\Dset^V (\bm{T}_\Dset^V)^\top
\preceq \|\bm{T}_\Dset^V\|_\op^2 I_d$.
\end{proof}

\subsection{Adaptive Selection Counterexample}
\begin{example}[Diffuse Weights Alone Do Not Ensure Cancellation]
\label{ex:adaptive-counterexample}
Let $\zeta_1, \ldots, \zeta_M$ be independent phasors uniformly
distributed on the unit circle. Fix
$\varepsilon \in (0, \pi/2]$, and define the selected index set
\[
  A_\varepsilon = \{k : |\arg \zeta_k| \le \varepsilon\},
\]
where $\arg \zeta_k \in [-\pi, \pi)$. If
$A_\varepsilon \neq \emptyset$, define adaptive weights
\[
  \alpha_k =
  \begin{cases}
    |A_\varepsilon|^{-1}, & k \in A_\varepsilon, \\
    0, & k \notin A_\varepsilon.
  \end{cases}
\]
Then, with probability at least
$1 - \exp(-M\varepsilon / (8\pi))$,
one has
\[
  \|\bm{\alpha}\|_2 \le \sqrt{\frac{2\pi}{M\varepsilon}},
\]
and
\[
  \Bigl|\sum_{k=1}^{M} \alpha_k \zeta_k\Bigr| \ge \cos\varepsilon.
\]
Thus $\|\bm{\alpha}\|_2$ can tend to zero while the weighted
phasor sum remains bounded away from zero.
\end{example}

\begin{proof}
For each~$k$, the event $|\arg \zeta_k| \le \varepsilon$ has
probability $p = \varepsilon/\pi$.
Hence $N_\varepsilon = |A_\varepsilon|$ has the binomial
distribution $\mathrm{Bin}(M, p)$ and
$\E N_\varepsilon = Mp = M\varepsilon/\pi$.
By the standard multiplicative Chernoff bound
(see, e.g., \cite{boucheron2013concentration}),
\[
  \Pr\bigl(N_\varepsilon < \tfrac{1}{2} Mp\bigr)
  \le \exp(-Mp/8)
  = \exp(-M\varepsilon/(8\pi)).
\]
On the complementary event,
$N_\varepsilon \ge M\varepsilon/(2\pi)$.
The weights are uniform on $A_\varepsilon$, so
$\|\bm{\alpha}\|_2^2 = 1/N_\varepsilon$ and
$\|\bm{\alpha}\|_2 \le \sqrt{2\pi/(M\varepsilon)}$.
For every $k \in A_\varepsilon$,
$\mathrm{Re}\, \zeta_k = \cos(\arg \zeta_k) \ge \cos\varepsilon$.
Therefore
\[
  \mathrm{Re}\Bigl(\sum_{k=1}^{M} \alpha_k \zeta_k\Bigr)
  = \frac{1}{N_\varepsilon}
  \sum_{k \in A_\varepsilon} \mathrm{Re}\, \zeta_k
  \ge \cos\varepsilon.
\]
Since the magnitude of a complex number is at least its real
part when the real part is at least zero,
$|\sum_k \alpha_k \zeta_k| \ge \cos\varepsilon$.
\end{proof}

\subsection{Separate-Path Far-Weight to Covariance Corollary}
\begin{corollary}[Far-Mass and Far-Weight Bounds Give Spectral Far-Covariance Bound]
\label{cor:score-to-value}
\label{cor:prefix-covariance}
Let $w = w_{\varepsilon_{\rm mix}}$, so that
$\beta^w \le \varepsilon_{\rm mix}$.  Suppose the far weights
satisfy
\[
  \rho_\Dset \le \rho_\star, \qquad
  \|\bm{\alpha}_\Dset\|_2 \le a_\star.
\]
For each finite evaluated far set, the bound has no explicit
dependence on the raw number of far routes; only total far
mass and far-weight $\ell_2$ norm enter.
Under the shared-background far-value model satisfying the
conditional covariance identity of Definition~\ref{def:common-component},
and under Assumption~\ref{ass:independent-far-transports},
Theorem~\ref{thm:value-covariance} implies that, with
probability at least $1 - \eta$,
\[
  \Delta_\Dset \preceq \bar\delta_{\rm pp}^2\, I_d,
\]
where
\begin{equation}\label{eq:delta-pp}
  \bar\delta_{\rm pp}^2
  = c_0^2
  \mathcal{R}_{\rm pp}
  (\varepsilon_{\rm mix},\rho_\star,a_\star;\eta)^2
  + \sigma_w^2\, a_\star^2.
\end{equation}
\end{corollary}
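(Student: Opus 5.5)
This corollary follows by substituting the far-mass and far-weight bounds into Theorem~\ref{thm:value-covariance} and then using monotonicity of the prefix-product radius. The plan is to invoke Theorem~\ref{thm:value-covariance} under Assumption~\ref{ass:independent-far-transports}, with the far window $w=w_{\varepsilon_{\rm mix}}$ and with $\rho=\rho_\Dset$ and $a=\|\bm{\alpha}_\Dset\|_2$. Conditional on $\mathcal{G}_{i,L}$, that theorem gives an event of conditional probability at least $1-\eta$ on which
\[
  \Delta_\Dset \preceq \Bigl[c_0^2\,\mathcal{R}_{\rm pp}(\beta^w,\rho_\Dset,\|\bm{\alpha}_\Dset\|_2;\eta)^2+\sigma_w^2\|\bm{\alpha}_\Dset\|_2^2\Bigr] I_d .
\]
Since the conditional failure probability is at most $\eta$ for every realization of $\mathcal{G}_{i,L}$, the tower property makes the bound hold with unconditional probability at least $1-\eta$.

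Next I will use that the radius in \eqref{eq:R-pp-def} is nondecreasing in each argument: $\mathcal{R}_{\rm pp}(\mu,\rho,a;\eta)=\mu\rho+\frac{4a}{1-\beta}\sqrt{\log(4B/\eta)}$, all factors are nonnegative, and $\log(4B/\eta)>0$ because $\eta<1$. By the definition of the window in Theorem~\ref{thm:finite-transport-window}, $\beta^w\le\varepsilon_{\rm mix}$; when $\beta=0$ we have $\beta^w=0$, so this still holds. Combining this with the hypotheses $\rho_\Dset\le\rho_\star$ and $\|\bm{\alpha}_\Dset\|_2\le a_\star$ gives
\[
  0\le\mathcal{R}_{\rm pp}(\beta^w,\rho_\Dset,\|\bm{\alpha}_\Dset\|_2;\eta)\le \mathcal{R}_{\rm pp}(\varepsilon_{\rm mix},\rho_\star,a_\star;\eta).
\]
Squaring preserves this order because both sides are nonnegative. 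Likewise $\sigma_w^2\|\bm{\alpha}_\Dset\|_2^2\le\sigma_w^2a_\star^2$. Adding the two bounds yields $\Delta_\Dset\preceq\bar\delta_{\rm pp}^2 I_d$ with $\bar\delta_{\rm pp}^2$ as in \eqref{eq:delta-pp}. The remark that there is no dependence on the number of far routes needs no separate argument: it can be read off the formula, which involves only $\rho_\star$ and $a_\star$.

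The only point that needs care is how the bounds $\rho_\Dset\le\rho_\star$ and $\|\bm{\alpha}_\Dset\|_2\le a_\star$ enter. The weights must be $\mathcal{G}_{i,L}$-measurable, and this is exactly what Assumption~\ref{ass:independent-far-transports} provides. The score-side bounds, for example those from Proposition~\ref{prop:score-budget}, are therefore events in $\mathcal{G}_{i,L}$. On such an event the conditional statement applies with the realized weights, and no adaptivity issue of the kind in Example~\ref{ex:adaptive-counterexample} arises. If the score bounds themselves hold only with probability $1-\delta_{\rm sc}$, a union bound gives overall probability $1-\delta_{\rm sc}-\eta$. As stated, however, the bounds are hypotheses, so the probability is $1-\eta$.
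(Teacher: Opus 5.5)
Your proposal is correct and follows the paper's proof. You apply Theorem~\ref{thm:value-covariance} with the realized $\rho_\Dset$ and $\|\bm{\alpha}_\Dset\|_2$, then use $\beta^w\le\varepsilon_{\rm mix}$, $\rho_\Dset\le\rho_\star$, and $\|\bm{\alpha}_\Dset\|_2\le a_\star$ to dominate both terms of \eqref{eq:Delta-value-cov}, while also spelling out steps the paper leaves implicit (nonnegativity before squaring, the $\beta=0$ case, and the tower-property passage from conditional to unconditional probability).
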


\begin{proof}
By choice of~$w$, the mean term in
Theorem~\ref{thm:value-covariance} satisfies
$\beta^w \rho \le \varepsilon_{\rm mix}\, \rho_\star$.
The fluctuation term uses
$\|\bm{\alpha}_\Dset\|_2 \le a_\star$.
Substituting into~\eqref{eq:Delta-value-cov} gives the
stated bound.
\end{proof}

\section{A Probabilistic Condition for Near-Route Alignment}
\label{app:subgaussian}

\begin{lemma}[Sub-Gaussian Short-Route Angles Imply
Near-Signal Gain]
\label{lem:subgaussian-angles}
Assume $d = 2B$. For each target-bearing near route
$j \in \Sset_i$, let the route length be $n_j = i - j$, and
suppose $n_j \le n_{\rm sig}$ for every $j \in \Sset_i$.
In block~$b$, write the step angle as
$\psi_{t,b} = \bar\psi_{t,b} + \varepsilon_{t,b}$,
and assume that, for each fixed block~$b$, the sequence
$\{\varepsilon_{t,b}\}_t$ is a martingale difference sequence
with respect to a filtration $\{\mathcal{F}_{t,b}\}_t$, and
is conditionally sub-Gaussian with variance proxy~$\sigma_\psi^2$:
\[
  \E[\varepsilon_{t,b} \mid \mathcal{F}_{t-1,b}] = 0,
  \qquad
  \E[\exp(\lambda\, \varepsilon_{t,b}) \mid
  \mathcal{F}_{t-1,b}]
  \le \exp(\lambda^2 \sigma_\psi^2 / 2)
\]
for every $\lambda \in \R$. The independent mean-zero
$\sigma_\psi^2$-sub-Gaussian case is a special case. Assume the
deterministic drift over every target-bearing near route is
bounded:
\[
  \Bigl|\sum_{t=j}^{i-1} \bar\psi_{t,b}\Bigr|
  \le \mu_{\rm sig}
\]
for every $j \in \Sset_i$ and every block~$b$.
Then, with probability at least $1 - \eta$,
\[
  \max_{j \in \Sset_i}\, \max_{1 \le b \le B}\,
  |\Theta_{j \to i, b}|
  \le \mu_{\rm sig}
  + \sigma_\psi\sqrt{2 n_{\rm sig}
  \log\frac{2 B |\Sset_i|}{\eta}},
\]
where
$\Theta_{j \to i, b} = \sum_{t=j}^{i-1} \psi_{t,b}$.
Consequently, on this event, defining
\[
  \gamma_{\rm sig}
  = \mu_{\rm sig}
  + \sigma_\psi\sqrt{2 n_{\rm sig}
  \log\frac{2 B |\Sset_i|}{\eta}},
\]
one has
$\|(P_{j \to i} - I_d)\, \Pi_U\|_\op \le \gamma_{\rm sig}$
for every $j \in \Sset_i$.
If $\gamma_{\rm sig} < 1$, then
\[
  U^\top B_{\Sset,i}^\top\, B_{\Sset,i}\, U
  \succeq (1 - \rho_\Dset)^2 (1 - \gamma_{\rm sig})^2\, U^\top U,
  \qquad \rho_\Dset = \sum_{j \in \Dset_i} \alpha_{ij}.
\]
\end{lemma}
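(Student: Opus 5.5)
The plan is to decompose each near-route phase into drift plus a martingale sum, control the martingale part by Azuma--Hoeffding with a union bound over blocks and near routes, and then feed the resulting phase bound into Lemma~\ref{thm:robust-coherence}.

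\emph{Decomposition.} For fixed $j\in\Sset_i$ and block $b$, write
\[
  \Theta_{j\to i,b}
  = \sum_{t=j}^{i-1}\bar\psi_{t,b}
  + \sum_{t=j}^{i-1}\varepsilon_{t,b}.
\]
The drift term is at most $\mu_{\rm sig}$ in absolute value by assumption. The noise term is a sum of $n_j\le n_{\rm sig}$ conditionally sub-Gaussian martingale differences.

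\emph{Tail bound for the noise.} Iterating the tower property on $\E[\exp(\lambda\sum_t\varepsilon_{t,b})]$ gives the moment generating function bound $\exp(\lambda^2 n_j\sigma_\psi^2/2)$. A Chernoff argument in both tails then yields
\[
  \Pr\Bigl[\Bigl|\sum_{t=j}^{i-1}\varepsilon_{t,b}\Bigr|\ge r\Bigr]
  \le 2\exp\bigl(-r^2/(2n_{\rm sig}\sigma_\psi^2)\bigr).
\]
This is valid with $n_{\rm sig}$ in place of $n_j$, since enlarging the variance proxy only weakens the bound. Set $r=\sigma_\psi\sqrt{2n_{\rm sig}\log(2B|\Sset_i|/\eta)}$, so each failure probability is at most $\eta/(B|\Sset_i|)$. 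A union bound over the $B|\Sset_i|$ pairs $(j,b)$ gives the stated bound on $\max_{j,b}|\Theta_{j\to i,b}|$ with probability at least $1-\eta$.

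\emph{Transfer to operator norm.} $P_{j\to i}$ is block diagonal with $2\times2$ rotations $R(\pm\Theta_{j\to i,b})$. Each block satisfies
\[
  \|R(\theta)-I_2\|_\op = |e^{i\theta}-1| = 2|\sin(\theta/2)|\le|\theta|.
\]
Hence
\[
  \|(P_{j\to i}-I_d)\Pi_U\|_\op
  \le \|P_{j\to i}-I_d\|_\op
  = \max_b|e^{i\Theta_{j\to i,b}}-1|
  \le \gamma_{\rm sig}
\]
on the event.

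\emph{Conclusion.} When $\gamma_{\rm sig}<1$, Lemma~\ref{thm:robust-coherence} applied with $\gamma=\gamma_{\rm sig}$ gives the Loewner bound directly.

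The main obstacle is the martingale step, specifically making the filtration indexing compatible with summing over the interval $t=j,\dots,i-1$. Since the hypothesis is stated per block with filtration $\mathcal F_{t,b}$, I would condition successively from $t=i-1$ down to $t=j$. This peels off one factor $\exp(\lambda^2\sigma_\psi^2/2)$ at each step, so only the within-block structure is used. The union bound then handles any dependence across blocks and across the nested near routes, and no cross-block independence is needed.
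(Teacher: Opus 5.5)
Your proposal is correct and follows the paper's proof essentially step for step. Both arguments split the phase into drift and noise, bound the noise tail using variance proxy $n_{\rm sig}\sigma_\psi^2$, and take a union bound over the $B|\Sset_i|$ pairs. Both then use $\|R(\theta)-I_2\|_\op=2|\sin(\theta/2)|\le|\theta|$ with block diagonality and finish with Lemma~\ref{thm:robust-coherence}. The only difference is that you spell out the tower-property iteration of the conditional moment generating function, which uses that the $\varepsilon_{t,b}$ are adapted; the paper simply asserts this sub-Gaussian bound for the martingale sum.
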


\begin{proof}
Fix $j \in \Sset_i$ and $b \in \{1,\ldots,B\}$. Write
\[
  \Theta_{j \to i, b}
  = \sum_{t=j}^{i-1} \bar\psi_{t,b}
  + \sum_{t=j}^{i-1} \varepsilon_{t,b}.
\]
By assumption,
$|\sum_{t=j}^{i-1} \bar\psi_{t,b}| \le \mu_{\rm sig}$.
By the conditional sub-Gaussian martingale assumption, the
noise sum
$E_{j,b}=\sum_{t=j}^{i-1}\varepsilon_{t,b}$ is sub-Gaussian
with variance proxy at most $n_{\rm sig}\sigma_\psi^2$. Hence, for
every $u>0$,
\[
  \Pr(|E_{j,b}| \ge u)
  \le 2\exp\!\Bigl(-\frac{u^2}{2 n_{\rm sig}\, \sigma_\psi^2}\Bigr).
\]
Choose
$u = \sigma_\psi\sqrt{2 n_{\rm sig}
\log(2 B |\Sset_i| / \eta)}$.
A union bound over all $B|\Sset_i|$ pairs gives, with
probability at least $1-\eta$,
\[
  |\Theta_{j \to i, b}|
  \le \mu_{\rm sig}
  + \sigma_\psi\sqrt{2 n_{\rm sig}
  \log\frac{2 B |\Sset_i|}{\eta}}.
\]
For a two-dimensional rotation,
$\|R(\theta) - I_2\|_\op = 2|\sin(\theta/2)| \le |\theta|$.
Thus, by block diagonality,
$\|P_{j \to i} - I_d\|_\op \le \gamma_{\rm sig}$ and
$\|(P_{j \to i} - I_d)\Pi_U\|_\op \le \gamma_{\rm sig}$.
If $\gamma_{\rm sig} < 1$,
Lemma~\ref{lem:near-gain} applies with
$\gamma = \gamma_{\rm sig}$ and
$\rho_\Dset = \sum_{j \in \Dset_i} \alpha_{ij}$.
\end{proof}

\end{document}